\def\eqref#1{equation~\ref{#1}}
\def\1{\bm{1}}
\DeclareMathAlphabet{\mathsfit}{\encodingdefault}{\sfdefault}{m}{sl}
\SetMathAlphabet{\mathsfit}{bold}{\encodingdefault}{\sfdefault}{bx}{n}
\newcommand{\KL}{D_{\mathrm{KL}}}
\DeclareMathOperator*{\argmin}{arg\,min}
\newcommand{\eg}{{\it e.g.}}
\newcommand{\ie}{{\it i.e.}}
\newcommand{\wrt}{{\it w.r.t. }}
\newcommand{\iid}{{\it i.i.d. }}
\newcommand{\reals}{\mathbb{R}}
\newcommand{\complex}{\mathbb{C}}
\newcommand{\sgn}{\text{sgn}}
\newcommand{\bv}{\bigg|}
\newcommand{\divg}{\text{div}}
\newcommand{\expect}{\mathbb{E}}
\newcommand{\prob}{\mathbf{Prob}}
\newtheorem{thm}{Theorem}
\newtheorem{rmk}{Remark}
\newtheorem{asp}{Assumption}
\newtheorem*{asp*}{Assumption}
\newtheorem{lemma}[thm]{Lemma}
\newtheorem{prop}[thm]{Proposition}
\newtheorem{example}{Example}
\theoremstyle{definition}
\newtheorem{definition}{Definition}[section]
\renewcommand{\eqref}[1]{(\ref{#1})}
\title{Particle-based Variational Inference with Generalized Wasserstein Gradient Flow}
\author{
    Ziheng Cheng\thanks{Contributed equally to this work.} \\
    School of Mathematical Sciences \\
    Peking University \\
    \texttt{alex-czh@stu.pku.edu.cn} \\
    \And
    $\textrm{Shiyue Zhang}^\ast$ \\
    School of Mathematical Sciences \\
    Peking University \\
    \texttt{zhangshiyue@stu.pku.edu.cn} \\
    \AND
    Longlin Yu \\
    School of Mathematical Sciences \\
    Peking University \\
    \texttt{llyu@pku.edu.cn} \\
    \And
    Cheng Zhang\thanks{Corresponding author.} \\
    School of Mathematical Sciences and Center for Statistical Science \\
    Peking University \\
    \texttt{chengzhang@math.pku.edu.cn} \\
}
\begin{document}

\maketitle

\begin{abstract}
Particle-based variational inference methods (ParVIs) such as Stein variational gradient descent (SVGD) update the particles based on the kernelized Wasserstein gradient flow for the Kullback-Leibler (KL) divergence.
However, the design of kernels is often non-trivial and can be restrictive for the flexibility of the method.
Recent works show that functional gradient flow approximations with quadratic form regularization terms can improve performance.
In this paper, we propose a ParVI framework, called generalized Wasserstein gradient descent (GWG), based on a generalized Wasserstein gradient flow of the KL divergence, which can be viewed as a functional gradient method with a broader class of regularizers induced by convex functions.
We show that GWG exhibits strong convergence guarantees.
We also provide an adaptive version that automatically chooses Wasserstein metric to accelerate convergence.
In experiments, we demonstrate the effectiveness and efficiency of the proposed framework on both simulated and real data problems.
\end{abstract}

\section{Introduction}
Bayesian inference is an important method in modern machine learning that provides powerful tools for modeling complex data and reasoning under uncertainty.
The core of Bayesian inference is to estimate the posterior distribution given the data.
As the posterior distribution is intractable in general, various approximation approaches have been developed, of which variational inference and Markov Chain Monte Carlo are two typical examples.
By reformulating the inference problem into an optimization problem, variational inference (VI) seeks an approximation within a certain family of distributions that minimizes the Kullback-Leibler (KL) divergence to the posterior \citep{Jordan1999AnIT, Wainwright08, Blei2016VariationalIA}.
Equipped with efficient optimization algorithms, VI allows fast training and easy scaling to large datasets.
However, the construction of approximating distributions can be restrictive which may lead to poor approximation.
Markov chain Monte Carlo (MCMC) methods simulate a Markov chain to directly draw samples from the posterior \citep{duane87, Neal2011-yo, SGLD, SGHMC}.
While being asymptotically unbiased, MCMC often takes a long time to converge, and it is also difficult to access the convergence.

Recently, particle based variational inference methods (ParVIs) have been proposed that tend to combine the best of both worlds \citep{Liu2016SVGD, Chen2018unified, Liu2019Understanding, Langosco2021, fan2022variational, alvarez-melis2022optimizing}.
In ParVIs, the approximating distribution is represented as a set of particles, which are iteratively updated by minimizing the KL divergence to the posterior.
This non-parametric nature significantly improves the flexibility of ParVIs upon classical VIs, and the interaction between particles also makes ParVIs more particle-efficient than MCMCs.
The most well-known particle based VI method is Stein Variational Gradient Descent (SVGD) \citep{Liu2016SVGD}.
It updates the particles by simulating the gradient flows of the KL divergence on a certain kernel related distribution space, where the gradient flows have a tractable form \citep{Liu2017SVGF, Chewi2020chi-squared}.
However, SVGD relies on the choice of an appropriate kernel function whose design is highly non-trivial and hence could limit the flexibility of the method.
Moreover, the required computation of the kernel matrix scales quadratically with the number of particles, which makes it costly to use a large number of particles.

Instead of using kernel induced functional gradients, many attempts have been made to expand the function class for gradient flow approximation \citep{Hu18, grathwohl2020, Langosco2021, dong2023particlebased}. By leveraging the more general neural networks as the function class together with more general regularizers, these approaches have shown improved performance over vanilla SVGD while not requiring expensive kernel computation.
However, these methods only use quadratic form regularizers where either the Wasserstein gradient flow or its preconditioned variant is recovered.

In this work, we propose a ParVI method based on a general formulation of minimizing movement scheme in Wasserstein space, which corresponds to a generalized Wasserstein gradient flow of KL divergence.
Using Legendre-Fenchel transformation, our method can also be viewed as a functional gradient method with a more general class of regularizers which include the previously used quadratic forms as special cases.
We provide a theoretical convergence guarantee of ParVIs with neural-net-estimated vector field for generalized Wasserstein gradient flow, which to the best of our knowledge, has not been established yet.
Perhaps surprisingly, our results show that assuming reasonably accurate vector field estimates, the iteration complexity of ParVIs matches the traditional Langevin Monte Carlo under weaker assumptions on the target distribution.
As an extension, we also propose an algorithm that can adaptively adjust the Wasserstein metric to accelerate convergence.
Extensive numerical experiments on both simulated and real data sets are conducted to demonstrate the efficiency of our method over existing ones.

\section{Background}
\paragraph{Notations.}Throughout this paper, we use $x$ to denote particle samples in $\reals^d$. Let $\mathcal{P}(\reals^d)$ denote all the probability distributions on $\reals^d$ that are absolute continuous with respect to the Lebesgue measure. We do not distinguish a probabilistic measure with its density function. For $x\in \reals^d$ and $p>1$, $\|x\|_p:=(|x_1|^p+\cdots+|x_d|^p)^{1/p}$ stands for the $\ell_p$-norm. The H{\"o}lder conjugate of $p$ is denoted by $q:=p/(p-1)$. Notation $g^*(\cdot)$ denotes the Legendre transform of a convex function $g(\cdot)$ on $\reals^d$.

\subsection{Particle-based Variational Inference}
Let $\pi\in \mathcal{P}(\reals^d)$ be the target distribution we wish to sample from. We can cast the problem of sampling as an optimization problem: to construct a distribution $\mu^*$ that minimizes the KL divergence
\begin{equation}
    \mu^*:=\arg\min_{\mu\in \mathcal{P}'} \KL(\mu\|\pi), 
\end{equation}
where $\mathcal{P}'\subseteq\mathcal{P}(\reals^d)$ is the variational family. 
Particle-based variational inference methods (ParVIs) is a class of VI methods where $\mathcal{P}'$ is represented as a set of particles. Assume the current particle distribution is $\mu$, then it holds that
\begin{equation}\label{eq:dKL}
    \frac{d}{d\epsilon}\bv_{\epsilon=0} \KL((id+\epsilon v)_{\#}\mu\|\pi)=-\expect_{\mu} \langle \nabla\log\frac{\pi}{\mu}, v\rangle.
\end{equation}
ParVIs aim to find the optimal vector field $v$ that minimizes \eqref{eq:dKL} in certain function class. For example, SVGD \citep{Liu2016SVGD} restricts $v$ in the unit ball of an reproducing kernel Hilbert space (RKHS) which has a closed-form solution by kernel trick.
Meanwhile, \citet{Hu18, grathwohl2020, Langosco2021, dong2023particlebased} consider a more general class of functions for $v$, i.e., neural networks, and minimize \eqref{eq:dKL} with some quadratic form regularizers.

\subsection{Minimizing Movement Scheme in Wasserstein Space}\label{sec:MMS}
Assume the cost function $c(\cdot, \cdot):\reals^d\times \reals^d \rightarrow \reals$ is continuous and bounded from below. Define the optimal transportation cost between two probabilistic measure $\mu, \nu$ as:
\begin{equation}\label{eq:wc_distance}
    W_c (\mu,\nu) := \inf_{\rho\in \Pi(\mu,\nu)} \int c(x, y)d\rho.
\end{equation}
Specifically, if $c(x,y)=\|x-y\|_p^p$ for some $p>1$, then we get the $p$-th power of Wasserstein-p distance $W_p(\mu,\nu)$. \citet{jordan1998variational} consider a minimizing movement scheme (MMS) under $W_2$ metric.
Given the current distribution $\mu_{kh}$, the distribution for next step is determined by
\begin{equation}
    \mu_{(k+1)h} := \argmin_{\mu\in \mathcal{P}_2(\reals^d)} \KL(\mu\|\pi) + \frac{1}{2h}W_2^2(\mu, \mu_{kh}).
\end{equation}
When the step size $h\to 0$, $\{\mu_{kh}\}_{k\geq 0}$ converges to the solution of the Fokker-Planck equation
\begin{equation}
    \partial_t\mu_t+\divg (\mu_t\nabla\log\pi) = \Delta \mu_t.
\end{equation}
Therefore, MMS corresponds to the deterministic dynamics
\begin{equation}\label{eq:wgf_l2}
    dx_t = v_tdt,\ v_t=\nabla\log\pi-\nabla\log\mu_t,
\end{equation}
where $\mu_t$ is the law of $x_t$. \eqref{eq:wgf_l2} is also known as the gradient flow of KL divergence under $W_2$ metric, which we refer to as $L_2$-GF \citep{ambrosio2005gradient}. Note that the Langevin dynamics $dx_t=\nabla \log\pi(x_t)dt+\sqrt{2}dB_t$ ($B_t$ is the Brownian motion) reproduces the same distribution curve $\{\mu_t\}_{t\geq 0}$ and thus also corresponds to the Wasserstein gradient flow \citep{jordan1998variational}.

\section{Proposed Methods}

\subsection{Minimizing Movement Scheme with A General Metric}
We start with generalizing the scope of the aforementioned MMS in Section \ref{sec:MMS} which is under $W_2$ metric.

\begin{definition}[Young function]
    A strictly convex function $g$ on $\reals^d$ is called Young function if $g(x)=g(-x), g(0)=0$, and for any fixed $z\in\reals^d\backslash \{0\}$, $hg(\frac{z}{h})\to\infty$, as $h\to 0$.
\end{definition}

\begin{thm}\label{thm:wgf}
    Given a continuously differentiable Young function $g$ and step size $h>0$, define cost function $c_h(x,y)=g(\frac{x-y}{h})h$. Suppose that $\pi, \mu_{kh}\in\mathcal{P}_{c_h}(\reals^d):=\{\mu\in \mathcal{P}(\reals^d): \expect_{\mu}[g(\frac{2x}{h})] < \infty \}$.
    Under some mild conditions of $g$ (see details in Proposition \ref{prop:metric}), $\mathcal{P}_{c_h}(\reals^d)$ is a Wasserstein space equipped with Wasserstein distance. Consider MMS under transportation cost $W_{c_h}$:
    \begin{equation}
        \mu_{(k+1)h} := \argmin_{\mu\in \mathcal{P}_{c_h}(\reals^d)} \KL(\mu\|\pi) + W_{c_h}(\mu, \mu_{kh}).
    \end{equation}
    Denote the optimal transportation map under $W_{c_h}$ from $\mu_{(k+1)h}$ to $\mu_{kh}$ by $T_k(\cdot)$. Then we have
    \begin{equation}
        \frac{T_k(x) - x}{h} = -\nabla g^*\left(\nabla\log\pi(x)-\nabla\log\mu_{(k+1)h}(x)\right).
    \end{equation}
\end{thm}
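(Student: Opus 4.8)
The plan is to read off the claimed identity as the first-variation (Euler--Lagrange) equation for the minimizing movement problem, adapting the argument of \citet{jordan1998variational} to the generalized cost $c_h(x,y)=hg(\frac{x-y}{h})$. Write $\mu^\star:=\mu_{(k+1)h}$ for the minimizer and $T:=T_k$ for the optimal transport map from $\mu^\star$ to $\mu_{kh}$ under $W_{c_h}$; existence and uniqueness of $T$ follow from a Brenier-type theorem, since $c_h$ is continuously differentiable and strictly convex in $x-y$ because $g$ is (this regularity being part of what Proposition~\ref{prop:metric} supplies). Fix an arbitrary smooth, compactly supported $\xi:\reals^d\to\reals^d$ and, for $|\epsilon|$ small enough that $id+\epsilon\xi$ is a diffeomorphism, set $\mu_\epsilon:=(id+\epsilon\xi)_{\#}\mu^\star$. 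Since $\mu^\star$ minimizes $F(\mu):=\KL(\mu\|\pi)+W_{c_h}(\mu,\mu_{kh})$ over $\mathcal{P}_{c_h}(\reals^d)$ and $\mu_\epsilon$ remains in this class for small $\epsilon$, optimality forces $\frac{d}{d\epsilon}\big|_{\epsilon=0}F(\mu_\epsilon)=0$.

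Next I compute the two derivatives. For the KL term, \eqref{eq:dKL} gives immediately
\[
    \frac{d}{d\epsilon}\Big|_{\epsilon=0}\KL(\mu_\epsilon\|\pi)=-\expect_{\mu^\star}\Big\langle \nabla\log\tfrac{\pi}{\mu^\star},\ \xi\Big\rangle .
\]
For the transport term, $T\circ(id+\epsilon\xi)^{-1}$ pushes $\mu_\epsilon$ onto $\mu_{kh}$, hence
\[
    W_{c_h}(\mu_\epsilon,\mu_{kh})\ \le\ \int c_h\big(x+\epsilon\xi(x),\,T(x)\big)\,d\mu^\star(x),
\]
with equality at $\epsilon=0$; combined with a matching lower bound (obtained by a symmetric argument using the optimal map out of $\mu_\epsilon$, which exists since $\mu_\epsilon$ is absolutely continuous), this shows $\epsilon\mapsto W_{c_h}(\mu_\epsilon,\mu_{kh})$ is differentiable at $0$ with derivative equal to that of the displayed upper bound. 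Differentiating under the integral sign (justified by the moment bound built into the definition of $\mathcal{P}_{c_h}(\reals^d)$, which dominates $\nabla g$ along the relevant segments) and using $\nabla_x c_h(x,y)=\nabla g(\frac{x-y}{h})$, this yields
\[
    \frac{d}{d\epsilon}\Big|_{\epsilon=0}W_{c_h}(\mu_\epsilon,\mu_{kh})=\int \Big\langle \nabla g\Big(\tfrac{x-T(x)}{h}\Big),\ \xi(x)\Big\rangle\,d\mu^\star(x).
\]

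Adding the two contributions and using that $\xi$ is arbitrary yields the $\mu^\star$-a.e.\ identity $\nabla g\big(\frac{x-T(x)}{h}\big)=\nabla\log\pi(x)-\nabla\log\mu_{(k+1)h}(x)$. Since $g$ is strictly convex and differentiable, and the Young condition makes it superlinear, Legendre duality gives that $\nabla g$ is a bijection with $(\nabla g)^{-1}=\nabla g^{*}$; applying $\nabla g^{*}$ to both sides and rearranging produces $\frac{T_k(x)-x}{h}=-\nabla g^{*}\big(\nabla\log\pi(x)-\nabla\log\mu_{(k+1)h}(x)\big)$, which is the claim.

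I expect the main obstacle to be making the variation of $W_{c_h}(\mu_\epsilon,\mu_{kh})$ rigorous: one needs the Brenier-type existence and uniqueness of optimal maps for the non-quadratic cost $c_h$, the two-sided sandwich bounding $W_{c_h}(\mu_\epsilon,\mu_{kh})$ between the suboptimal transport value along $T$ and the analogous quantity built from the optimal map of $\mu_\epsilon$, and a domination argument — using the finiteness of $\expect_{\mu}[g(\frac{2x}{h})]$ — to differentiate inside the integral. By contrast, the KL variation is handled directly by \eqref{eq:dKL}, and the concluding step is a routine Legendre inversion once $g$ is known to be smooth, strictly convex, and superlinear.
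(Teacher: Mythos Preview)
Your argument is correct and complete in spirit, but it follows a genuinely different route from the paper. The paper works on the dual side: it invokes the Kantorovich duality $W_{c_h}(\mu,\nu)=\sup_\varphi\int\varphi\,d\mu+\int\varphi^{c_h}\,d\nu$, takes the first variation of the MMS objective in $\mu$ to obtain $\frac{\delta}{\delta\mu}\KL(\mu_{(k+1)h}\|\pi)+\psi_{kh}=\text{const}$ for the optimal potential $\psi_{kh}$, and then uses the fundamental theorem of optimal transport ($c$-concavity saturation $\psi_{kh}(x)+\psi_{kh}^{c_h}(T_k(x))=c_h(x,T_k(x))$) to read off $\nabla\psi_{kh}(x)=\nabla g(\frac{x-T_k(x)}{h})$, from which Legendre inversion finishes. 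You instead run the primal inner-variation argument in the style of \citet{jordan1998variational}: perturb $\mu^\star$ by $(id+\epsilon\xi)_{\#}$, differentiate the KL term via \eqref{eq:dKL}, upper-bound the transport term with the suboptimal plan $T\circ(id+\epsilon\xi)^{-1}$, and test against all $\xi$.

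What each buys: the paper's dual route is shorter and sidesteps the differentiability of $\epsilon\mapsto W_{c_h}(\mu_\epsilon,\mu_{kh})$ entirely, at the price of relying on existence and regularity of the Kantorovich potential and on the abstract first-variation identity. Your primal route is more self-contained and closer to the original JKO computation; one small simplification you could make is that the ``matching lower bound'' is unnecessary---the one-sided upper bound on the transport increment, combined with optimality of $\mu^\star$ and the freedom to replace $\xi$ by $-\xi$, already forces the Euler--Lagrange identity without ever establishing two-sided differentiability of $W_{c_h}$.
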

 
Please refer to Appendix \ref{app_sec:mms} for full statements and proofs. Informally, $\mu_{(k+1)h}\approx \mu_{kh}$ for small step size $h$ \citep{santambrogio2017euclidean}. Further note that $\frac{T_k(x)-x}{h}$ is the optimal velocity field associated with the transport from $\mu_{(k+1)h}$ to $\mu_{kh}$ (and not vice versa). If step size $h\to 0$, then following \cite{jordan1998variational}, we can recover the dynamics in continuous time: 
\begin{equation}\label{eq:wgf_general}
    dx_t = v_tdt,\ v_t=\nabla g^*(\nabla\log\pi-\nabla\log\mu_t).
\end{equation}
We call \eqref{eq:wgf_general} the generalized Wasserstein gradient (GWG) flow. If we set $g(\cdot)=\frac{1}{2}\|\cdot\|_2^2$ or any positive definite quadratic form $g(\cdot)=\frac{1}{2}\|\cdot\|_H^2$, then \eqref{eq:wgf_general} reduces to $L_2$-GF \eqref{eq:wgf_l2} or its preconditioned version \citep{dong2023particlebased} respectively.

\subsection{Faster Descent of KL Divergence}
It turns out that we can leverage the general formulation \eqref{eq:wgf_general} to explore the underlying structure of different probability spaces and further utilize this geometric structure to accelerate sampling. More specifically, we consider $g(\cdot)=\frac{1}{p}\|\cdot\|_p^p$ for some $p>1$ and then $g^*(\cdot)=\frac{1}{q}\|\cdot\|_q^q$. Note that if the particles move along the vector field $v_t=\nabla g^*(\nabla\log\frac{\pi}{\mu_t})$, then the descent rate of $\KL(\mu_t\|\pi)$ is
\begin{equation}
    \partial_t \KL(\mu_t\|\pi) = -\expect_{\mu_t} \bv\bv \nabla\log\frac{\pi}{\mu_t}\bv\bv_q^q.
\end{equation}
If we choose $q$ such that $\expect_{\mu_t} \bv\bv \nabla\log\frac{\pi}{\mu_t}\bv\bv_q^q$ is large, then $\KL(\mu_t\|\pi)$ decreases faster and the sampling process can be accelerated. We use the following example to further illustrate our idea. Please refer to Appendix \ref{app_sec:dme} for detailed analysis.

\begin{example}\label{eg:Lq_mog}
Let $\pi=\frac{1}{2}\mathcal{N}(-m,1) + \frac{1}{2}\mathcal{N}(m,1)$ and $\mu=\frac{3}{4}\mathcal{N}(-m,1) + \frac{1}{4}\mathcal{N}(m,1)$. Then for any $m\geq \frac{1}{80}, q\geq 1$, the following holds:
\begin{equation}
    \frac{0.08}{qm}(\frac{m}{3})^q\exp(-\frac{m^2}{2}) \leq \expect_\mu \bv\bv \nabla\log\frac{\pi}{\mu} \bv\bv_q^q \leq \frac{0.2}{qm}(4m)^q\exp(-\frac{m^2}{2}).
\end{equation}
However, the KL divergence between $\pi$ and $\mu$ is large: $\KL(\mu\|\pi) \geq \frac{1}{10\sqrt{2}}$.
\end{example}

Suppose the target distribution is $\pi$ and we run ParVI with current particle distribution $\mu$. We can expect that, if simply using $L_2$ regularization, \ie, $q=2$, then for very large $m$, the score divergence is small and thus the decay of KL divergence is extremely slow. However, $\KL(\mu\|\pi)$ is still large, indicating that it would take a long time for the dynamics to converge to the target. But if we set $q$ much larger, then the derivative of KL divergence would get larger and the convergence can be accelerated.

\subsection{Algorithm}
The forward-Euler discretization of the dynamics \eqref{eq:wgf_general} is 
\begin{equation}\label{eq:discrete_dynamics}
    x_{(k+1)h} = x_{kh} + \nabla g^*\left(\nabla \log\frac{\pi}{\mu_{kh}}(x_{kh})\right)h.
\end{equation}
However, since the score of current particle distribution $\mu_{kh}$ is generally unknown, we need a method to efficiently estimate the GWG direction $\nabla g^*(\nabla \log\frac{\pi}{\mu_{kh}})$.
Given the distribution of current particles $\mu$, by the definition of convex conjugate, we have
\begin{equation*}
    \nabla g^*(\nabla \log\frac{\pi}{\mu})=\arg\max_{v} \expect_{\mu} [\langle \nabla \log\frac{\pi}{\mu}, v\rangle - g(v) ].
\end{equation*}
If we parameterize $v$ as a neural network $f_w$ with $w\in \mathcal{W}$, then we can maximize the following objective with respect to $w$:
\begin{equation}\label{eq:gsm_pop}
    \begin{aligned}
        \mathcal{L}(w)
        :&= \expect_{\mu} [\langle \nabla\log\frac{\pi}{\mu}, f_w\rangle - g(f_w)] \\
        &= \expect_{\mu} [(\nabla\log\pi)^Tf_w + \nabla\cdot f_w - g(f_w)]
    \end{aligned}
\end{equation}
Here the second equation is by \textit{Stein's identity} (we assume $\mu$ vanishes at infinity).
This way, the gradient of $\mathcal{L}(w)$ can be estimated via Monte Carlo methods given the current particles.
We summarize the procedures in Algorithm \ref{alg:parvi}.

\begin{algorithm}[t]
\caption{GWG: Generalized Wasserstein Gradient Flow}
    \label{alg:parvi}
    \begin{algorithmic}
        \REQUIRE{Unnormalized target distribution $\pi$, initial particles $\{x_0^i\}_{i=1}^{n}$, initial parameter $w_0$, iteration number $N, N'$, particle step size $h$, parameter step size $\eta$}
        \FOR{$k=0, \cdots, N-1$}
            \STATE{Assign $w_k^0 = w_k$}
            \FOR{$t=0, \cdots, N'-1$}
                \STATE{Compute 
                    \begin{equation}\label{eq:gsm}
                        \widehat{\mathcal{L}}(w) = \frac{1}{n}\sum_{i=1}^{n} \nabla \log{\pi(x_k^i)}^T f_w(x_k^i) + \nabla \cdot f_w(x_k^i) - g(f_w(x_k^i))
                    \end{equation}
                }
                \STATE{Update $w_k^{t+1} = w_k^{t} + \eta \nabla_w \widehat{\mathcal{L}}(w_k^t)$}
            \ENDFOR 
            \STATE{Update $w_{k+1} = w_k^{N'}$}
            \STATE{Update particles $x_{k+1}^{i} = x_{k}^{i} + hf_{w_{k+1}}(x_{k}^{i})$ for $i=1, \cdots, n$}
        \ENDFOR
        \RETURN{Particles $\{x_N^i\}_{i=1}^{n}$}
    \end{algorithmic}
\end{algorithm}

The exact computation of the divergence term $\nabla_x \cdot f_w(x)$ needs $\mathcal{O}(d)$ times back-propagation, where $d$ is the dimension of $x$. In order to reduce computation cost, we refer to Hutchinson’s estimator \citep{hutchinson1989stochastic}, \ie,
\begin{equation}\label{eq:gssm}
     \frac{1}{n}\sum_{i=1}^{n}\nabla \cdot f_w(x_k^i) \approx \frac{1}{n}\sum_{i=1}^{n} \xi_i^T\nabla (f_w(x_k^i)\cdot \xi_i),
\end{equation}
where $\xi_i\in \reals^d$ are independent random vectors satisfying $\expect \xi_i\xi_i^T = I_d$. This is still an unbiased estimator but only needs $\mathcal{O}(1)$ times back-propagation.

\section{Convergence Analysis without Isoperimetry}
\label{sec:convergence_parvi}
In this section, we state our main theoretical results of Algorithm \ref{alg:parvi}. Consider the discrete dynamics:
\begin{equation}
    X_{(k+1)h} = X_{kh} + v_k(X_{kh})h,
\end{equation}
where $v_k$ is the neural-net-estimated GWG at time $kh$. Define the interpolation process 
\begin{equation}
    X_t = X_{kh} + (t-kh) v_k(X_{kh}),\ \text{for}\ t\in [kh, (k+1)h],
\end{equation} 
and let $\mu_t$ denote the law of $X_t$. Note that here we do not assume isoperimetry of target distribution $\pi$ (\eg, \textit{log-Sobolev inequality}) and hence establish the convergence of dynamics in terms of score divergence, following the framework of non-log-concave sampling \citep{Balasubramanian2022TowardsAT}. 

We first make some basic assumptions. For simplicity, only two types of Young function $g^*$ are considered here, which are also the most common choices. 
\begin{asp}\label{asp:error_lp}
    $g^*(\cdot) = \frac{1}{q}\|\cdot\|_q^q$ for some $q>1$. And for any $k$, $\expect_{\mu_{kh}} \bv\bv v_k - \nabla g^*(\nabla \log{\frac{\pi}{\mu_{kh}}}) \bv\bv_p^p \leq \varepsilon_k$.
\end{asp}
\begin{asp}\label{asp:error_l2}
    $g^*(\cdot)$ is $\alpha$-strongly convex and $\beta$-smooth. Define $\kappa:=\frac{\beta}{\alpha}$. And for any $k$, $\expect_{\mu_{kh}} \bv\bv v_k - \nabla g^*(\nabla \log{\frac{\pi}{\mu_{kh}}})\bv\bv_2^2 \leq \varepsilon_k$. 
\end{asp}
The two assumptions above ensure the estimation accuracy of neural nets. Note that the preconditioned quadratic form in \cite{dong2023particlebased} is included in Assumption \ref{asp:error_l2}. Although the estimation error is not exactly the training objective used in Algorithm \ref{alg:parvi},the following proposition shows the equivalence between them in some sense. 

\begin{prop}\label{prop:a1a2}
    Suppose $g(\cdot)=\frac{1}{p}\|\cdot\|_p^p$ for some $p>1$. Given current particle distribution $\mu$, we can define the training loss $\mathcal{L}_{\text{train}}(v):=\expect_{\mu} [\langle \nabla\log\frac{\pi}{\mu}, v\rangle - g(v)] $. The maximizer is $v^*=\nabla g^*(\nabla\log\frac{\pi}{\mu})$ and the maximum value is $\mathcal{L}_{\text{train}}^*:=\mathcal{L}_{\text{train}}(v^*)<\infty$. For any arbitrarily small $\varepsilon_1>0$, there exists $\varepsilon_2:=\varepsilon_2(\varepsilon_1, p)<\infty$, such that 
    \begin{equation*}
        \expect_{\mu} \bv\bv v-\nabla g^*(\nabla\log\frac{\pi}{\mu})\bv\bv_p^p \leq \varepsilon_1 \mathcal{L}_{\text{train}}^* + \varepsilon_2 [\mathcal{L}_{\text{train}}^*-\mathcal{L}_{\text{train}}(v)].
    \end{equation*}
    Besides, if $p\geq 2$, $\varepsilon_1$ can be $0$ while $\varepsilon_2$ is still finite.
\end{prop}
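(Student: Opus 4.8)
The plan is to compare the $L^p$ estimation error directly with the suboptimality gap $\Delta(v) := \mathcal{L}_{\text{train}}^* - \mathcal{L}_{\text{train}}(v)$ by exploiting the structure of $g(v) = \frac1p\|v\|_p^p$ and its conjugate $g^*(u) = \frac1q\|u\|_q^q$. Writing $u := \nabla\log\frac{\pi}{\mu}$ and $v^* = \nabla g^*(u)$, a short computation using $\langle u, v\rangle - g(v) = \langle u, v^*\rangle - g(v^*) - [\text{something nonnegative}]$ shows that $\Delta(v) = \expect_\mu[ D_g(v, v^*) ]$, where $D_g$ is the Bregman divergence of $g$; equivalently, by Fenchel–Young, $\Delta(v) = \expect_\mu[ g(v) + g^*(u) - \langle u, v\rangle ] \ge 0$ pointwise inside the expectation. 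So the task reduces to a pointwise inequality: bound $\|v - v^*\|_p^p$ by a multiple of the Bregman divergence $D_g(v,v^*) = g(v) - g(v^*) - \langle \nabla g(v^*), v - v^*\rangle$, possibly plus a small multiple of $g(v^*)$ (which integrates to a finite quantity controlled by $\mathcal{L}_{\text{train}}^*$).

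The key analytic fact is a \emph{uniform convexity} estimate for the $\ell_p^p$ function. For $p \ge 2$, the map $v \mapsto \frac1p\|v\|_p^p$ is uniformly convex of power $p$: there is a constant $c_p > 0$ with $D_g(v, v^*) \ge c_p \|v - v^*\|_p^p$ for all $v, v^*$ (this is a standard Clarkson-type / Hanner-type inequality). Integrating over $\mu$ gives $\expect_\mu\|v - v^*\|_p^p \le c_p^{-1}\Delta(v)$, which is exactly the claimed bound with $\varepsilon_1 = 0$ and $\varepsilon_2 = c_p^{-1}$. For $1 < p < 2$ the function is only uniformly convex of power $2$ on bounded sets and the pointwise control degrades when $\|v\|$ or $\|v^*\|$ is large; there one writes, for each coordinate or each point, an interpolation of the form $\|v - v^*\|_p^p \le \varepsilon_1 (\|v\|_p^p + \|v^*\|_p^p) + C(\varepsilon_1, p) D_g(v, v^*)$, obtained by splitting into the region where $\|v - v^*\|$ is small relative to $\|v^*\|$ (use the power-$2$ strong convexity on that scale) and the region where it is large (absorb $\|v-v^*\|_p^p$ into $\|v\|_p^p + \|v^*\|_p^p$ up to a factor $\varepsilon_1$). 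Taking expectations and using $\expect_\mu g(v^*) \le \mathcal{L}_{\text{train}}^* + |\expect_\mu\langle u,v^*\rangle - \mathcal{L}_{\text{train}}^*|$, which is finite and bounded in terms of $\mathcal{L}_{\text{train}}^*$ by Fenchel–Young again, and absorbing $\expect_\mu g(v)$ via $g(v) \le g(v^*) + D_g(v,v^*) + \langle \nabla g(v^*), v-v^*\rangle$ with Young's inequality on the cross term, yields the stated inequality $\expect_\mu\|v-v^*\|_p^p \le \varepsilon_1 \mathcal{L}_{\text{train}}^* + \varepsilon_2 \Delta(v)$.

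Concretely I would proceed in the following steps: (i) record the Fenchel–Young identity $\Delta(v) = \expect_\mu D_g(v,v^*) = \expect_\mu[g(v) + g^*(u) - \langle u,v\rangle]$ and note $\mathcal{L}_{\text{train}}^* < \infty$ follows from $\pi,\mu \in \mathcal{P}_{c_h}$; (ii) prove the pointwise uniform-convexity inequality for $\frac1p\|\cdot\|_p^p$, separating $p\ge 2$ (clean power-$p$ bound, $\varepsilon_1 = 0$) from $1<p<2$ (interpolated bound with the $\varepsilon_1$ slack term); (iii) integrate against $\mu$ and collect the finite constants, checking that every expectation that appears ($\expect_\mu g(v^*)$, $\expect_\mu\langle u,v^*\rangle$) is finite under the standing integrability assumption. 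The main obstacle is step (ii) in the regime $1 < p < 2$: the $\ell_p^p$ potential is not strongly convex there, so one cannot get a clean Bregman lower bound and must instead carefully set up the two-regime split and verify that the "large error" region genuinely contributes at most $\varepsilon_1(\|v\|_p^p + \|v^*\|_p^p)$; handling the $\|\cdot\|_p$ norm rather than a single coordinate (so that the elementary one-dimensional convexity estimates combine correctly across coordinates) is the fiddly part, and is also the reason the clean $\varepsilon_1 = 0$ statement is only claimed for $p \ge 2$.
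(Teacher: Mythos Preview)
Your high-level strategy is right and aligns with the paper's: both write $\mathcal{L}_{\text{train}}^* - \mathcal{L}_{\text{train}}(v) = \expect_\mu D_g(v, v^*)$ and seek a pointwise comparison between $|v - v^*|^p$ and the Fenchel--Young gap, with the $\varepsilon_1$ slack needed only for $p < 2$. For $p \ge 2$ your uniform-convexity argument is exactly what the paper does, phrased more abstractly.

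Two points on $1 < p < 2$. First, your two-region split is described backwards. Near $v^*$ the Bregman divergence behaves like $\tfrac{p-1}{2}|v-v^*|^2$, and since $|v-v^*|^p / |v-v^*|^2 \to \infty$ as $v \to v^*$ when $p < 2$, ``power-2 strong convexity'' cannot control the small-error region; conversely, in the far region $|v-v^*|^p \sim |v|^p$, so it cannot be absorbed by $\varepsilon_1(|v|^p + |v^*|^p)$. The correct split is the reverse: the $\varepsilon_1$ slack handles the near region (where $|v-v^*|^p$ is itself small), and the Bregman term handles the far region (where the ratio $|v-v^*|^p / D_g$ is continuous with finite limit at infinity, hence bounded). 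Second, the detour through $\|v\|_p^p$ and its subsequent elimination via the Bregman identity plus Young is unnecessary. Because $|v^*_i|^p = |u_i|^q$ coordinate-wise, one has $\|v^*\|_p^p = \|u\|_q^q$ and $\expect_\mu g^*(u) = \mathcal{L}_{\text{train}}^*$ exactly, so it suffices to prove the stronger pointwise inequality with only $|v^*|^p$ (equivalently $|u|^q$) on the right and no $|v|^p$ at all. The paper does this directly: since $\|\cdot\|_p^p$, $g^*$, and $D_g$ all decompose as sums over coordinates, it reduces at once to the scalar inequality $|b-1|^p \le \varepsilon_1 + \varepsilon_2\bigl(\tfrac{1}{q} - b + \tfrac{|b|^p}{p}\bigr)$ (after normalizing $a=1$, hence $v^*=1$), dispatches $|b-1| \le \varepsilon_1^{1/p}$ trivially, and bounds the continuous ratio on the complement. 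Your concern about ``combining across coordinates'' is therefore a non-issue: separability makes the reduction to one dimension immediate.
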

Similar results also hold if $g$ satisfies Assumption \ref{asp:error_l2} since it is equivalent to the case when $p=2$.
Additionally, we expect some properties of the estimated vector fields.
\begin{asp}[Smoothness of neural nets]\label{asp:smooth}
For any $k$, $v_k(\cdot)$ is twice differentiable. For any $p>1$, $G_p:= \sup_{x,y} \frac{\|v_k(x)-v_k(y)\|_p}{\|x-y\|_p} < \infty$, $M_p:= \sup_{x,z} \lim_{\delta\to 0^+}\frac{\|\nabla v_k(x+\delta z)-\nabla v_k(x)\|_{op}}{\delta\|z\|_p} < \infty$.
\end{asp}
Note that here we do not assume the smoothness of potential $\log\pi$ explicitly. But informally, $G_p$ and $M_p$ correspond to the Lipschitz constant of the gradient and the Hessian of $\log\pi$, respectively. 

Let $\bar{\mu}_{Nh}:=\frac{1}{Nh}\int_{0}^{Nh} \mu_tdt$ and $K_0:=\KL(\mu_0\|\pi)$. Now we present our main results.

\begin{thm}[Full version see Theorem \ref{thm:lp}]\label{thm:rate_lp}
    Under Assumption \ref{asp:error_lp}, \ref{asp:smooth}, the following bound holds with proper step size $h$:
    \begin{equation}
         \expect_{\bar{\mu}_{Nh}} \bv\bv \nabla\log\frac{\pi}{\bar{\mu}_{Nh}}\bv\bv_q^q = \Tilde{\mathcal{O}}\left((\frac{M_pK_0d}{N})^{\frac{q}{q+1}} + \frac{G_2K_0d}{N} + \frac{\sum_{k=0}^{N-1}\varepsilon_k}{N}\right).
    \end{equation}
    Here $\Tilde{\mathcal{O}}(\cdot)$ hides all the constant factors that only depend on $q$.
\end{thm}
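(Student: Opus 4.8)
The plan is to follow the ``sampling without isoperimetry'' framework of \citet{Balasubramanian2022TowardsAT}, adapted to the deterministic flow \eqref{eq:wgf_general}, in four steps. \emph{First, a convexity reduction.} The $L^q$ relative Fisher information $F(\mu):=\expect_{\mu}\|\nabla\log\frac{\pi}{\mu}\|_q^q=\int\frac{\|\nabla\mu-\mu\nabla\log\pi\|_q^q}{\mu^{\,q-1}}\,dx$ is the integral of the perspective of the convex map $y\mapsto\|y\|_q^q$ evaluated at the affine argument $(\mu,\nabla\mu-\mu\nabla\log\pi)$, hence $F$ is convex in $\mu$. As $\bar\mu_{Nh}$ is a mixture of the $\mu_t$'s, Jensen's inequality gives $\expect_{\bar\mu_{Nh}}\|\nabla\log\frac{\pi}{\bar\mu_{Nh}}\|_q^q\le\frac1{Nh}\int_0^{Nh}\expect_{\mu_t}\|\nabla\log\frac{\pi}{\mu_t}\|_q^q\,dt$, so it suffices to control the time-averaged score divergence along the interpolation.

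\emph{Second, a dissipation inequality with an explicit error term.} On $[kh,(k+1)h]$ the law $\mu_t$ solves the continuity equation $\partial_t\mu_t+\divg(\mu_t b_t)=0$ with $b_t(x)=\expect[v_k(X_{kh})\mid X_t=x]$, so $\frac{d}{dt}\KL(\mu_t\|\pi)=-\expect\langle v_k(X_{kh}),\,\nabla\log\frac{\pi}{\mu_t}(X_t)\rangle$. Using the homogeneity identity $\langle\nabla g^*(u),u\rangle=\|u\|_q^q$ valid for $g^*=\frac1q\|\cdot\|_q^q$, together with a weighted Young inequality applied to $e:=v_k(X_{kh})-\nabla g^*\!\big(\nabla\log\frac{\pi}{\mu_t}(X_t)\big)$, this becomes
\begin{equation*}
\tfrac{d}{dt}\KL(\mu_t\|\pi)\ \le\ -(1-\delta)\,\expect_{\mu_t}\|\nabla\log\tfrac{\pi}{\mu_t}\|_q^q\ +\ C_{\delta,q}\,\expect\|e\|_p^p
\end{equation*}
for any $\delta\in(0,1)$. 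Splitting $e=e_{\mathrm{app}}+e_{\mathrm{disc}}$ with $e_{\mathrm{app}}:=v_k(X_{kh})-\nabla g^*\!\big(\nabla\log\frac{\pi}{\mu_{kh}}(X_{kh})\big)$, Assumption~\ref{asp:error_lp} gives $\expect\|e_{\mathrm{app}}\|_p^p\le\varepsilon_k$, and $e_{\mathrm{disc}}$ captures the one-step change of the ideal field between $(\mu_{kh},X_{kh})$ and $(\mu_t,X_t)$.

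\emph{Third, the discretization-error bound}, which is the technical core. The displacement is $\|X_t-X_{kh}\|\le h\|v_k(X_{kh})\|$, and since $p(q-1)=q$ the homogeneity identity also gives $\|\nabla g^*(u)\|_p^p=\|u\|_q^q$, so $\expect\|v_k(X_{kh})\|_p^p\lesssim_p\expect_{\mu_{kh}}\|\nabla\log\frac{\pi}{\mu_{kh}}\|_q^q+\varepsilon_k$: drift moments are controlled by the very score divergence being bounded. Propagating the displacement through $v_k$ costs its $\ell_2$-Lipschitz constant $G_2$ (with an $\ell_2$–$\ell_p$ norm conversion producing one dimension factor), while the one-step change of $\nabla\log\mu_t$ is, through the continuity equation, governed by $\nabla\divg v_k$, whose trace structure contributes the factor $d$ and whose magnitude is controlled by the Hessian-Lipschitz-type constant $M_p$. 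Crucially, $\nabla g^*$ is only Hölder-continuous when $1<q<2$ and has polynomially growing derivative when $q>2$, so a naive Lipschitz estimate of the ideal field is unavailable; the moment identity above is precisely what lets one bypass it. The outcome has the form $\expect\|e_{\mathrm{disc}}\|_p^p\lesssim_q(M_pd\,h)^q+G_2\,d\,h\cdot\expect_{\mu_{kh}}\|\nabla\log\frac{\pi}{\mu_{kh}}\|_q^q+(\text{higher order in }h)$, where the $G_2$ term, being proportional to the score divergence and linear in $h$, is moved to the left-hand side of the integrated inequality once $h$ is small.

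\emph{Fourth, integrate and optimize.} Integrating the dissipation inequality over $[0,Nh]$, telescoping $\KL$, and using the third step (after replacing $\expect_{\mu_{kh}}\|\cdot\|_q^q$ by its time-local counterpart up to $\mathcal{O}(h)$ corrections) yields $\int_0^{Nh}\expect_{\mu_t}\|\nabla\log\frac{\pi}{\mu_t}\|_q^q\,dt\lesssim_q K_0+\sum_k\varepsilon_k+Nh(M_pdh)^q+G_2dhK_0$; dividing by $Nh$ and choosing $h\asymp\big(K_0/(N(M_pd)^q)\big)^{1/(q+1)}$ to balance $K_0/(Nh)$ against $(M_pdh)^q$ produces the term $(M_pK_0d/N)^{q/(q+1)}$, while the $G_2$ and $\varepsilon$ contributions pass through as $G_2K_0d/N$ and $\frac1N\sum_k\varepsilon_k$; combined with the first step this is the claim, all $q$-dependent constants being hidden in $\tilde{\mathcal O}$. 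The main obstacle is the third step: for general $q$ the map $\nabla g^*$ has no global Lipschitz gradient, so the one-step change of the ideal velocity field cannot be estimated crudely, and the device of trading Lipschitz bounds for the homogeneity identity $\|\nabla g^*(u)\|_p^p=\|u\|_q^q$ — which converts drift moments into score divergences — is what closes the self-referential loop between the discretization error and the quantity being bounded, at the price of the smallness restriction on $h$ and the exponent $\frac{q}{q+1}$; carefully tracking where the two separate dimension factors enter (the $\ell_2$–$\ell_p$ conversion and the trace in the evolution of $\nabla\log\mu_t$) is the other bookkeeping hurdle. Under Assumption~\ref{asp:error_l2} the same four steps go through with ordinary strong-convexity/smoothness estimates of $g^*$ in place of this identity.
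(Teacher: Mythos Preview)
Your overall architecture matches the paper's: convexity of the $L^q$ Fisher information to reduce to a time average, the continuity-equation identity for $\partial_t\KL$, a Young-inequality split isolating the score divergence, the determinant formula $\log\frac{\mu_t}{\mu_{kh}}=-\log\det(I+(t-kh)\nabla v_k)$ for the measure change, and the step-size optimization balancing $K_0/(Nh)$ against $(M_pdh)^q$. Two points, however, are not right as written.

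First, the homogeneity identity $\|\nabla g^*(u)\|_p^p=\|u\|_q^q$ is not what handles the failure of Lipschitz continuity of $\nabla g^*$ in the measure-change term. That identity is used exactly where you say --- to bound $\expect\|v_k(X_{kh})\|_p^p$ by the score divergence plus $\varepsilon_k$, closing the self-referential loop for the \emph{displacement} piece. But you still have to control the \emph{difference} $\|\nabla g^*(\nabla\log\frac{\pi}{\mu_t})-\nabla g^*(\nabla\log\frac{\pi}{\mu_{kh}})\|_p^p$ (both scores at the same $X_t$; this is the part you attribute to $M_p d$). The paper does this via an elementary coordinatewise inequality: $\bigl|\sgn(a)|a|^{q-1}-\sgn(b)|b|^{q-1}\bigr|^p\le c_1|a|^q+c_2|a-b|^q$, with $c_1=0$ when $q\le2$ (H\"older continuity suffices) but $c_1>0$ when $q>2$ (the growing derivative forces an extra $|a|^q$, which is then absorbed into the score-divergence side of the Young split). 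Your claimed discretization bound $(M_pdh)^q$ alone is therefore incomplete for $q>2$; the self-referential coefficient in front of $\expect_{\mu_t}\|\nabla\log\frac{\pi}{\mu_t}\|_q^q$ picks up this constant $c_1$ plus an $O((G_ph)^p)$ from the displacement, not $G_2dh$.

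Second, the factor $G_2d$ does not come from an $\ell_2$--$\ell_p$ norm conversion. In the paper it arises from the density-ratio bound $\sup_x \mu_t(x)/\mu_{kh}(x)\le(1-hG_2)^{-d}$, needed to transfer Assumption~\ref{asp:error_lp} from $\mu_{kh}$ to $\mu_t$; keeping this $O(1)$ forces $h\lesssim 1/(dG_2)$, and the term $G_2K_0d/N$ is simply $K_0/(Nh)$ at that step-size cap. Relatedly, your written split puts $e_{\mathrm{app}}$ at $X_{kh}$, so $e_{\mathrm{disc}}$ nominally contains $\nabla\log\pi(X_{kh})-\nabla\log\pi(X_t)$, which cannot be bounded without smoothness of $\log\pi$ (explicitly not assumed). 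The paper inserts the intermediate $\nabla g^*(\nabla\log\frac{\pi}{\mu_{kh}})$ at $X_t$ instead, so the only spatial displacement left is in $v_k$ itself; your verbal description (``propagating the displacement through $v_k$'') is effectively that same move, so align your written decomposition with it.
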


\begin{thm}[Full version see Theorem \ref{thm:l2}]\label{thm:rate_l2}
    Under Assumption \ref{asp:error_l2}, \ref{asp:smooth} with $\alpha=1$, the following bound holds with proper step size $h$:
    \begin{equation}
         \expect_{\bar{\mu}_{Nh}} \bv\bv \nabla\log\frac{\pi}{\bar{\mu}_{Nh}}\bv\bv_2^2 = \mathcal{O}\left((\frac{\kappa M_2K_0d}{N})^{\frac{2}{3}} + \frac{G_2K_0(d+\kappa)}{N} + \frac{\sum_{k=0}^{N-1}\varepsilon_k}{N}\right).
    \end{equation}
\end{thm}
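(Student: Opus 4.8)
The plan is to adapt the non-log-concave (isoperimetry-free) analysis of Langevin Monte Carlo of \citet{Balasubramanian2022TowardsAT} to the \emph{deterministic} GWG interpolation, substituting the role of smoothness of $\log\pi$ by the smoothness of the estimated field $v_k$ (Assumption \ref{asp:smooth}) and using $\alpha$-strong convexity and $\beta$-smoothness of $g^*$ (Assumption \ref{asp:error_l2}, with $\alpha=1$, $\beta=\kappa$) to couple the drift with the score. The starting point is the KL dissipation identity along the interpolation: since $\mu_t$ is the law of $X_t=X_{kh}+(t-kh)v_k(X_{kh})$ on $[kh,(k+1)h]$, the continuity equation gives $\partial_t\KL(\mu_t\|\pi)=-\expect\langle v_k(X_{kh}),\nabla\log\frac{\pi}{\mu_t}(X_t)\rangle$, the expectation taken over the joint law of $(X_{kh},X_t)$. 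Introducing the ideal field $b_t:=\nabla g^*(\nabla\log\frac{\pi}{\mu_t}(X_t))$ and splitting, $\partial_t\KL=-\expect\langle b_t,\nabla\log\frac{\pi}{\mu_t}(X_t)\rangle+\expect\langle b_t-v_k(X_{kh}),\nabla\log\frac{\pi}{\mu_t}(X_t)\rangle$. Since $g^*$ is even and $1$-strongly convex, $\nabla g^*(0)=0$ and $\langle\nabla g^*(u),u\rangle\ge\|u\|_2^2$, so the first inner product is $\ge\|\nabla\log\frac{\pi}{\mu_t}(X_t)\|_2^2$; Cauchy--Schwarz and Young's inequality bound the second by $\tfrac12\mathcal{E}_t+\tfrac12\expect_{\mu_t}\|\nabla\log\frac{\pi}{\mu_t}\|_2^2$ with $\mathcal{E}_t:=\expect\|b_t-v_k(X_{kh})\|_2^2$. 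Hence $\partial_t\KL(\mu_t\|\pi)\le-\tfrac12\expect_{\mu_t}\|\nabla\log\frac{\pi}{\mu_t}\|_2^2+\tfrac12\mathcal{E}_t$.

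Next I would bound the error $\mathcal{E}_t$. By the triangle inequality and $\beta$-smoothness of $g^*$ (so $\nabla g^*$ is $\beta$-Lipschitz), $\|b_t-v_k(X_{kh})\|_2\le\beta\|\Delta_t\|_2+\|\nabla g^*(\nabla\log\frac{\pi}{\mu_{kh}}(X_{kh}))-v_k(X_{kh})\|_2$ with $\Delta_t:=\nabla\log\frac{\pi}{\mu_t}(X_t)-\nabla\log\frac{\pi}{\mu_{kh}}(X_{kh})$, so $\mathcal{E}_t\le2\beta^2\expect\|\Delta_t\|_2^2+2\varepsilon_k$ using Assumption \ref{asp:error_l2} on the estimation term. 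The crux is the one-step bound on $\expect\|\Delta_t\|_2^2$. I would split $\Delta_t$ into the change of $\nabla\log\pi$ and the change of $\nabla\log\mu$ along the flow $\Phi(x)=x+(t-kh)v_k(x)$. For the potential part, $X_t-X_{kh}=(t-kh)v_k(X_{kh})$ and a second-order expansion is controlled by $G_2$ at first order and by the Hessian-type constant $M_2$ at second order (keeping this second-order term is what later yields the $2/3$ exponent rather than $1/2$). For the density part, the change-of-variables identity $\nabla\log\mu_t(\Phi(x))=(I+(t-kh)\nabla v_k(x))^{-\top}\big(\nabla\log\mu_{kh}(x)-\nabla\log\det(I+(t-kh)\nabla v_k(x))\big)$, expanded for small $t-kh$ with $\|\nabla v_k\|_{op}\le G_2$ and $\|\nabla^2 v_k\|\lesssim M_2$, produces a term of order $(t-kh)G_2\|\nabla\log\mu_{kh}(X_{kh})\|_2$ and a dimension-dependent term of order $(t-kh)M_2 d$ from the log-determinant gradient. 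Taking expectations, using $\nabla g^*(0)=0$ and $\nabla g^*$ $\beta$-Lipschitz together with Assumption \ref{asp:error_l2} to get $\expect_{\mu_{kh}}\|v_k\|_2^2\lesssim\beta^2\,\mathrm{FI}_{kh}+\varepsilon_k$ where $\mathrm{FI}_{kh}:=\expect_{\mu_{kh}}\|\nabla\log\frac{\pi}{\mu_{kh}}\|_2^2$, I would obtain $\expect\|\Delta_t\|_2^2\lesssim(t-kh)^2\big(G_2^2\,\mathrm{FI}_{kh}+M_2^2 d^2+\text{h.o.t.}\big)+G_2^2(t-kh)^2\varepsilon_k$; the $G_2^2\,\mathrm{FI}_{kh}$ piece is the self-referential term that must eventually be absorbed.

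Then I would integrate over $[0,Nh]$ and telescope: $K_0\ge\KL(\mu_0\|\pi)-\KL(\mu_{Nh}\|\pi)\ge\tfrac12\int_0^{Nh}\expect_{\mu_t}\|\nabla\log\frac{\pi}{\mu_t}\|_2^2\,dt-\tfrac12\int_0^{Nh}\mathcal{E}_t\,dt$. Relative Fisher information is convex in its first argument, so $\expect_{\bar{\mu}_{Nh}}\|\nabla\log\frac{\pi}{\bar{\mu}_{Nh}}\|_2^2\le\frac{1}{Nh}\int_0^{Nh}\expect_{\mu_t}\|\nabla\log\frac{\pi}{\mu_t}\|_2^2\,dt$. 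Substituting the bound on $\mathcal{E}_t$, using $\int_{kh}^{(k+1)h}(t-kh)^2 dt=h^3/3$, relating $\mathrm{FI}_{kh}$ to the continuous-time average up to the same discretization error, and choosing the step size small enough (of order $1/(G_2\cdot\mathrm{poly}(d,\kappa))$) that the self-referential Fisher-information contributions $\lesssim\kappa^2 G_2^2 h^2(\cdots)$ can be absorbed into the left-hand side, I would arrive, schematically, at $\expect_{\bar{\mu}_{Nh}}\|\nabla\log\frac{\pi}{\bar{\mu}_{Nh}}\|_2^2\lesssim\frac{K_0}{Nh}+\kappa^2 M_2^2 d^2 h^2+\frac{1}{N}\sum_{k=0}^{N-1}\varepsilon_k$, valid for $h$ below the stated constraint. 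Balancing $\frac{K_0}{Nh}$ against $\kappa^2 M_2^2 d^2 h^2$ gives $h\asymp(K_0/(\kappa^2 M_2^2 d^2 N))^{1/3}$ and the leading term $(\kappa M_2 K_0 d/N)^{2/3}$; when this optimal $h$ exceeds the step-size constraint we instead use $h\asymp1/(G_2(d+\kappa))$, which turns $\frac{K_0}{Nh}$ into the $\frac{G_2 K_0(d+\kappa)}{N}$ term. Collecting everything recovers the stated rate. (The argument parallels that for Theorem \ref{thm:rate_lp}, the $L^2$ Hilbert-space structure plus strong convexity/smoothness of $g^*$ making the inequalities cleaner.)

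The main obstacle is the one-step score-change estimate $\expect\|\Delta_t\|_2^2$. Unlike in LMC there is no Gaussian smoothing, so $\mu_t$ evolves purely by pushforward and one must track $\nabla\log\mu_t$ along a possibly non-gradient flow through the Jacobian $I+(t-kh)\nabla v_k$ and its log-determinant; and because smoothness of $\log\pi$ is not assumed directly, the change of $\nabla\log\pi$ over a step must be routed through the smoothness of $v_k$ and the bi-Lipschitz map $\nabla g^*$. Pinning down the correct dimension dependence and, crucially, the $h^2$ (rather than $h$) order of the dominant discretization error — which is exactly what produces the $2/3$ exponent — is the delicate part; the rest (absorbing the self-referential Fisher-information terms, passing from discrete-time $\mathrm{FI}_{kh}$ to the continuous-time average, and aggregating the $\varepsilon_k$) is routine but lengthy bookkeeping.
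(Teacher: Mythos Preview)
Your overall architecture --- KL dissipation along the interpolation, error splitting, convexity of relative Fisher information to pass to $\bar\mu_{Nh}$, then balancing $h$ --- matches the paper's. The gap is in the one-step error bound, precisely the step you flagged as delicate. Your split of $\Delta_t=\nabla\log\frac{\pi}{\mu_t}(X_t)-\nabla\log\frac{\pi}{\mu_{kh}}(X_{kh})$ into ``change of $\nabla\log\pi$'' plus ``change of $\nabla\log\mu$'' does not go through under the stated assumptions. The potential piece $\nabla\log\pi(X_t)-\nabla\log\pi(X_{kh})$ cannot be controlled by $G_2$ or $M_2$: those are Lipschitz and Hessian constants of the \emph{network} $v_k$, not of $\log\pi$, and no smoothness of $\log\pi$ is assumed. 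The ``routing through $v_k$ and $\nabla g^*$'' you allude to cannot isolate $\nabla\log\pi$, since $v_k$ approximates $\nabla g^*(\nabla\log\pi-\nabla\log\mu_{kh})$, not a function of $\nabla\log\pi$ alone. Likewise, your density piece produces a term $(t-kh)G_2\|\nabla\log\mu_{kh}(X_{kh})\|_2$, and $\expect\|\nabla\log\mu_{kh}\|_2^2$ is \emph{not} the relative Fisher information $\mathrm{FI}_{kh}=\expect_{\mu_{kh}}\|\nabla\log\frac{\pi}{\mu_{kh}}\|_2^2$; nothing in the assumptions controls $\|\nabla\log\mu_{kh}\|$ by itself, so this term cannot be absorbed as you claim.

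The paper's fix is to insert a different intermediate quantity: instead of passing through $\nabla g^*\big(\nabla\log\frac{\pi}{\mu_{kh}}(X_{kh})\big)$, pass through $\nabla g^*\big(\nabla\log\frac{\pi}{\mu_{kh}}(X_t)\big)$, i.e.\ the \emph{old} measure's score evaluated at the \emph{new} point. The two resulting pieces are
\[
\underbrace{\nabla g^*\Big(\nabla\log\tfrac{\pi}{\mu_t}(X_t)\Big)-\nabla g^*\Big(\nabla\log\tfrac{\pi}{\mu_{kh}}(X_t)\Big)}_{\text{same point: }\nabla\log\pi\text{ cancels}}
\;+\;\underbrace{\nabla g^*\Big(\nabla\log\tfrac{\pi}{\mu_{kh}}(X_t)\Big)-v_k(X_{kh})}_{\text{route through }v_k(X_t)}.
\]
In the first piece the argument difference is exactly $\nabla\log\frac{\mu_{kh}}{\mu_t}(X_t)$, which the paper bounds uniformly by $\lesssim M_2\,d\,h$ by differentiating the log-Jacobian of the flow map (this is Lemma~\ref{lem:err_mu_t}; no bare $\nabla\log\mu_{kh}$ appears). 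In the second piece, $\nabla g^*(\nabla\log\frac{\pi}{\mu_{kh}}(X_t))-v_k(X_t)$ is controlled by $\varepsilon_k$ after a density-ratio change of measure $\mu_t\to\mu_{kh}$ (ratio $\le(1-hG_2)^{-d}$), and $v_k(X_t)-v_k(X_{kh})$ is where $G_2$ legitimately enters, yielding the self-referential contribution $\lesssim G_2^2h^2\,\expect\|v_k(X_{kh})\|_2^2\lesssim G_2^2h^2(\beta^2\,\mathrm{FI}+\varepsilon_k)$ that can be absorbed for $h\lesssim(\kappa G_2)^{-1}$. With this decomposition in place, the remainder of your outline (integration, telescoping, and the step-size choice $h\asymp(K_0/(\kappa^2M_2^2d^2N))^{1/3}\wedge(G_2d)^{-1}\wedge(G_2\kappa)^{-1}$) goes through verbatim.
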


The proofs in this section are deferred to Appendix \ref{sec:pf_main}. To interpret our results, suppose $g^*(\cdot)=\frac{1}{q}\|\cdot\|_q^q$ and $\epsilon\lesssim (\frac{M_p}{G_2})^q$. If the neural net $v_k(\cdot)$ can approximate $\nabla g^*(\nabla \log{\frac{\pi}{\mu_{kh}}})$ accurately (\ie, $\varepsilon_k\lesssim \epsilon$), then to obtain a probabilistic measure $\mu$ such that $\expect_{\mu} \bv\bv \nabla\log\frac{\pi}{\mu}\bv\bv_q^q \lesssim \epsilon$, the iteration complexity is $\Tilde{\mathcal{O}}(M_pK_0d\epsilon^{-(1+\frac{1}{q})})$. If we further let $q=p=2$, the complexity is $\mathcal{O}(K_0d\epsilon^{-\frac{3}{2}})$, which matches the complexity of Langevin Monte Carlo (LMC) under the Hessian smoothness and the growth order assumption \citep{Balasubramanian2022TowardsAT}. However, noticing that Assumption \ref{asp:smooth} is similar to Hessian smoothness informally, we can obtain this rate without additional assumption on target distribution. This suggests the potential benefits of particle-based methods.

In addition, our formulation allows a wider range of choices of Young function, including $\|\cdot\|_p^p$ and the preconditioned quadratic form \citep{dong2023particlebased}. This provides wider options of convergence metrics. We refer the readers to Appendix \ref{app_subsec:kl} for more discussions.

\section{Extensions: Adaptive Generalized Wasserstein Gradient Flow}

The GWG framework also allows adaption of the Young function $g$, instead of a fixed one. Similar ideas are also presented in \citet{wang2018variational}. In this section, we consider a special Young function class $\left\{ \frac{1}{p}\|\cdot\|_p^p: p> 1\right\}$ and propose a procedure that adaptively chooses $p$ to accelerate convergence.
Consider the continuous time dynamics $dx_t = f_t(x_t)dt$ and denote the distribution of particles at time $t$ as $\mu_t$, we have the following proposition.

\begin{prop}\label{prop:ada}
    For $g(\cdot)=\frac{1}{p}\|\cdot\|_p^p$, the derivative of KL divergence has an upper bound:
    \begin{equation}
        \partial_t \KL(\mu_t \| \pi) \leq -\frac{1}{p}\expect_{\mu_t} \bv\bv \nabla\log\frac{\pi}{\mu_t}\bv\bv_q^q + \frac{1}{p}\expect_{\mu_t} \bv\bv \nabla g^*(\nabla\log\frac{\pi}{\mu_t}) - f_t \bv\bv_p^p.
    \end{equation}
\end{prop}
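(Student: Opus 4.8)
The plan is to reduce $\partial_t\KL(\mu_t\|\pi)$ to the first-variation formula already recorded in the paper and then control the resulting inner product by a single application of Young's inequality with carefully chosen Hölder exponents. First, along the flow $dx_t = f_t(x_t)\,dt$ the law $\mu_t$ evolves by the continuity equation $\partial_t\mu_t = -\divg(\mu_t f_t)$, and to first order the flow map is $id + \epsilon f_t + o(\epsilon)$, so $\mu_{t+\epsilon} = (id+\epsilon f_t)_{\#}\mu_t + o(\epsilon)$. Applying \eqref{eq:dKL} with $v = f_t$ therefore gives
\begin{equation*}
    \partial_t\KL(\mu_t\|\pi) = -\expect_{\mu_t}\bigl\langle \nabla\log\tfrac{\pi}{\mu_t},\, f_t\bigr\rangle .
\end{equation*}
As is standard in this literature (and as done implicitly elsewhere in the paper), I would carry along mild regularity assumptions on $f_t$ and $\mu_t$ so that this differentiation under the integral is justified.

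Next, I record the elementary identities for $g(\cdot)=\tfrac1p\|\cdot\|_p^p$ as a one-line lemma: abbreviating $s := \nabla\log\tfrac{\pi}{\mu_t}(x)$ and $u := \nabla g^*(s)$, a coordinatewise computation gives $g^*(s) = \tfrac1q\|s\|_q^q$ and $u_i = |s_i|^{q-1}\sgn(s_i)$, whence $\langle s, u\rangle = \sum_i |s_i|^q = \|s\|_q^q$. With this, I would decompose the integrand of Step 1 as
\begin{equation*}
    \langle s, f_t\rangle = \langle s, u\rangle + \langle s, f_t - u\rangle = \|s\|_q^q + \langle s, f_t - u\rangle ,
\end{equation*}
and bound the cross term by Young's inequality for the $(p,q)$ pairing, $\langle a, b\rangle \ge -\tfrac1p\|a\|_p^p - \tfrac1q\|b\|_q^q$, applied with $a = f_t - u$ and $b = s$. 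Using $1 - \tfrac1q = \tfrac1p$, this yields the pointwise bound $\langle s, f_t\rangle \ge \tfrac1p\|s\|_q^q - \tfrac1p\|u - f_t\|_p^p$. Negating, taking $\expect_{\mu_t}$, and substituting back $s = \nabla\log\tfrac{\pi}{\mu_t}$ and $u = \nabla g^*(s)$ gives exactly the stated inequality; combining with Step 1 finishes the proof.

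The argument is short, so there is no deep obstacle. The one place that needs care is the bookkeeping of the Hölder exponents in the Young's inequality step: a different but still valid split would produce an inequality of the correct shape yet with a constant other than $\tfrac1p$ in front of the error term $\expect_{\mu_t}\|\nabla g^*(\nabla\log\tfrac{\pi}{\mu_t}) - f_t\|_p^p$, so one must verify that splitting $\|s\|_q^q$ as $\tfrac1q\|s\|_q^q$ (absorbed by Young) plus $\tfrac1p\|s\|_q^q$ (the leading term) is what makes the constants match. The only other technical point, routine here, is the regularity required to differentiate $\KL(\mu_t\|\pi)$ along the flow.
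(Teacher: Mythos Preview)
Your proposal is correct and follows essentially the same route as the paper: decompose $-\langle s,f_t\rangle$ as $-\langle s,\nabla g^*(s)\rangle + \langle s,\nabla g^*(s)-f_t\rangle$, use the identity $\langle s,\nabla g^*(s)\rangle=\|s\|_q^q$, and bound the cross term by Young's inequality so that $1-\tfrac{1}{q}=\tfrac{1}{p}$ gives the stated constants. The only cosmetic difference is that the paper carries a free weight $\lambda_1$ in Young's inequality (writing $\langle s,\nabla g^*(s)-f_t\rangle\le \tfrac{\lambda_1^q}{q}\|s\|_q^q+\tfrac{\lambda_1^{-p}}{p}\|\nabla g^*(s)-f_t\|_p^p$) and then specializes to $\lambda_1=1$, which is exactly your unweighted application.
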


The proof is in Appendix \ref{app_sec:prop_ada}. If the neural network $f_t$ can approximate the objective well, \ie,  $f_t\approx \nabla g^*(\nabla\log\frac{\pi}{\mu_t})$, then informally we can omit the second term and thus 
\begin{equation}\label{eq:A(p)}
    \partial_t \KL(\mu_t \| \pi) \lesssim -\frac{1}{p}\expect_{\mu_t} \bv\bv \nabla\log\frac{\pi}{\mu_t}\bv\bv_q^q \approx -\frac{1}{p}\expect_{\mu_t} \|f_t\|_p^p =: -A(p).
\end{equation}
In order to let KL divergence decrease faster, we can choose $p$ such that $A(p)$ is larger.
This leads to a simple adaptive procedure that updates $p$ by gradient ascent \wrt $A(p)$.
In practice, the adjustment of $p$ is delicate and would cause numerical instability if $p$  becomes excessively small or large. Therefore it is necessary to clip $p$ within a reasonable range.
We call this adaptive version of GWG, Ada-GWG.
The whole training procedure of Ada-GWG is shown in Algorithm \ref{alg:ada_parvi}. Note that \eqref{eq:ada_gsm} can be replaced with Hutchinson's estimator \eqref{eq:gssm} to improve computational efficiency as before.

\begin{algorithm}[t]
\caption{Ada-GWG: Adaptive Generalized Wasserstein Gradient Flow}
    \label{alg:ada_parvi}
    \begin{algorithmic}
        \REQUIRE{unnormalized target distribution $\pi$, initial particles $\{x_0^i\}_{i=1}^{n}$, initial parameter $w_0$, iteration number $N, N'$, step size $h, \eta, \Tilde{\eta}$, lower and upper bounds on $p$: $lb, ub$}
        \FOR{$k=0, \cdots, N-1$}
            \STATE{Assign $w_k^0 = w_k$}
            \FOR{$t=0, \cdots, N'-1$}
                \STATE{Compute 
                    \begin{equation}\label{eq:ada_gsm}
                        \widehat{\mathcal{L}}(w) = \frac{1}{n}\sum_{i=1}^{n} \nabla \log{\pi(x_k^i)}^T f_w(x_k^i) + \nabla \cdot f_w(x_k^i) - \frac{1}{p_k}\|f_w(x_k^i)\|_{p_k}^{p_k}
                    \end{equation}
                }
                \STATE{Update $w_k^{t+1} = w_k^{t} + \eta \nabla_w \widehat{\mathcal{L}}(w_k^t)$}
            \ENDFOR 
            \STATE{Update $w_{k+1} = w_k^{N'}$}
            \STATE{Compute $\widehat{A}(p_k)=\frac{1}{n} \sum_{i=1}^{n} \frac{1}{p_k}\|f_{w_{k+1}}(x_k^{i})\|_{p_k}^{p_k}$}
            \STATE{Update $p_{k+1} = \textbf{clip}(p_{k} + \Tilde{\eta}\nabla\widehat{A}(p_k), lb, ub)$}
            \STATE{Update particles $x_{k+1}^{i} = x_{k}^{i} + hf_{w_{k+1}}(x_{k}^{i})$ for $i=1, \cdots, n$}
            
        \ENDFOR
        \RETURN{Particles $\{x_N^i\}_{i=1}^{n}$}
    \end{algorithmic}
\end{algorithm}

\section{Numerical Experiments}

In this section, we compare GWG and Ada-GWG with other ParVI methods including SVGD \citep{Liu2016SVGD}, $L_2$-GF \citep{Langosco2021} and PFG \citep{dong2023particlebased} on both synthetic and real data problems.
In BNN experiments, we also test stochastic gradient Langevin dynamics (SGLD).
For Ada-GWG, the exponent $p$ is clipped between $1.1$ and $4.0$ unless otherwise specified.
Throughout this section, we choose $f_w$ to be a neural network with $2$ hidden layers and the initial particle distribution is $\mathcal{N}(0, I_d)$.
We refer the readers to Appendix \ref{sec:exp_setup} for more detailed setups of our experiments. The code is available at \url{https://github.com/Alexczh1/GWG}.

\subsection{Gaussian Mixture}

Our first example is on a multi-mode Gaussian mixture distribution.
Following \cite{dong2023particlebased}, we consider the 10-cluster Gaussian mixture where the variances of the mixture components are all 0.1. The number of particles is 1000.
Figure \ref{fig:gmm} shows the scatter plots of the sampled particles at different numbers of iterations.
We see that on this simple toy example, PFG performs similarly to the standard $L_2$-GF which does not involve the preconditioner, while Ada-GWG with the initial $p_0=2$ significantly accelerates the convergence compared to these two baseline methods. Please refer to appendix for further quantitative comparisons.

\begin{figure}[t]
    \centering
    \begin{tabular}{ccccc}
    \hspace{-0.1cm}\includegraphics[trim={0.2cm .2cm .2cm .2cm},clip,width=0.18\textwidth]{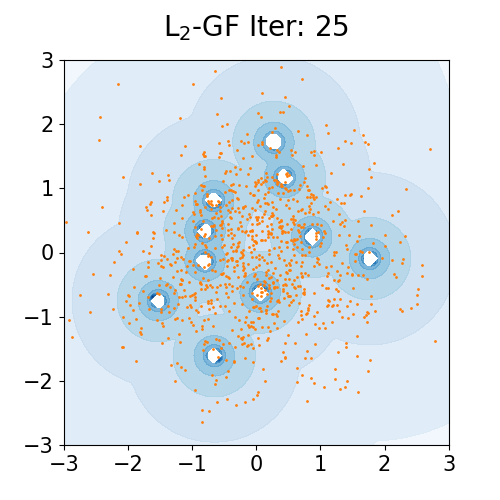}  
        & \hspace{-0.1cm}\includegraphics[trim={0.2cm .2cm .2cm .2cm},clip,width=0.18\textwidth]{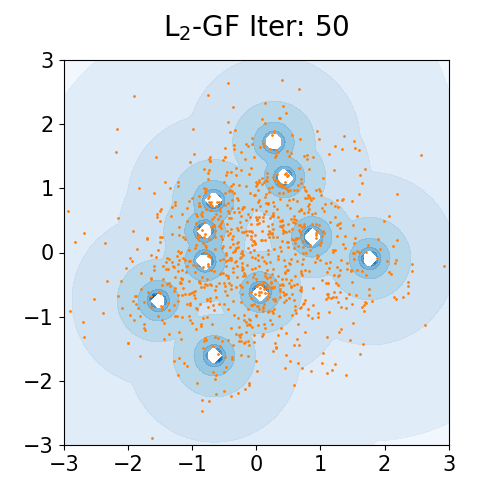}  
        & \hspace{-0.1cm}\includegraphics[trim={0.2cm .2cm .2cm .2cm},clip,width=0.18\textwidth]{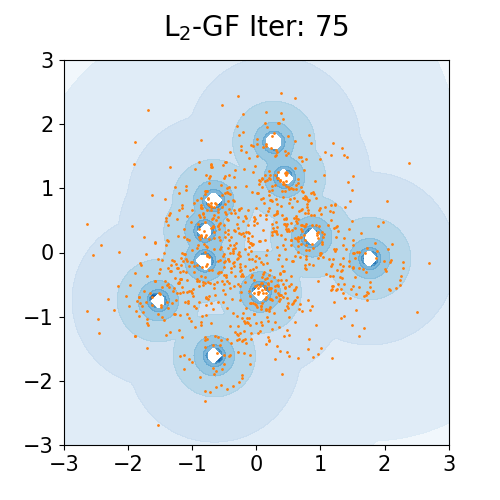}
        & \hspace{-0.1cm}\includegraphics[trim={0.2cm .2cm .2cm .2cm},clip,width=0.18\textwidth]{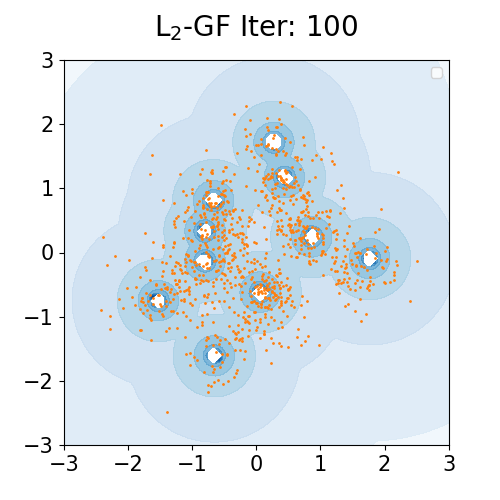}
        & \hspace{-0.1cm}\includegraphics[trim={0.2cm .2cm .2cm .2cm},clip,width=0.18\textwidth]{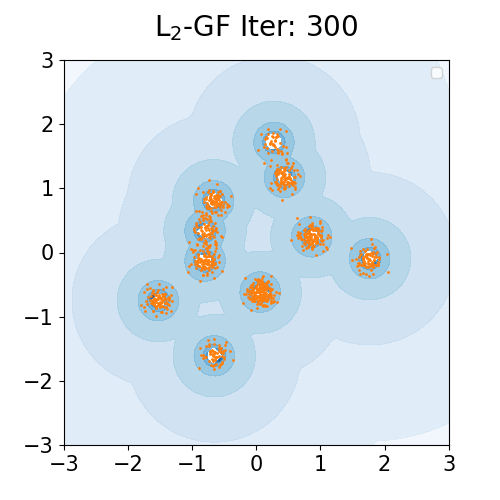}
\\
\hspace{-0.1cm}\includegraphics[trim={0.2cm .2cm .2cm .2cm},clip,width=0.18\textwidth]{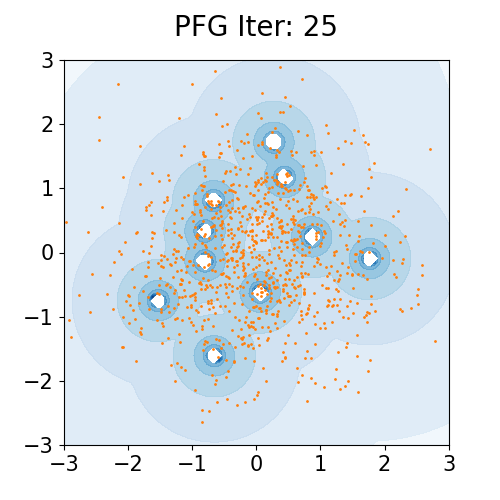}  
        &
  \hspace{-0.1cm}\includegraphics[trim={0.2cm .2cm .2cm .2cm},clip,width=0.18\textwidth]{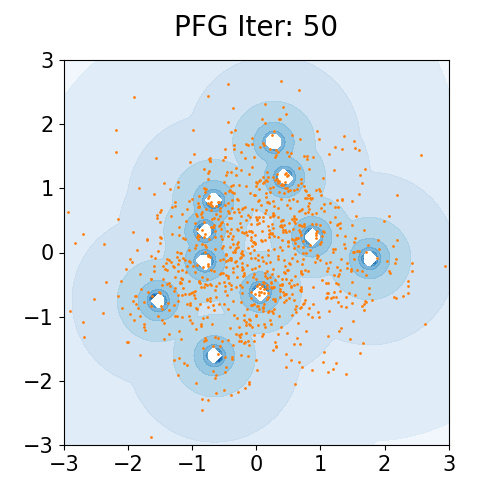}  
        & \hspace{-0.1cm}\includegraphics[trim={0.2cm .2cm .2cm .2cm},clip,width=0.18\textwidth]{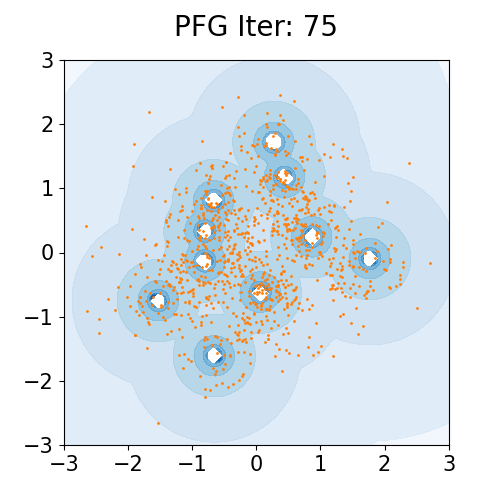}
        & \hspace{-0.1cm}\includegraphics[trim={0.2cm .2cm .2cm .2cm},clip,width=0.18\textwidth]{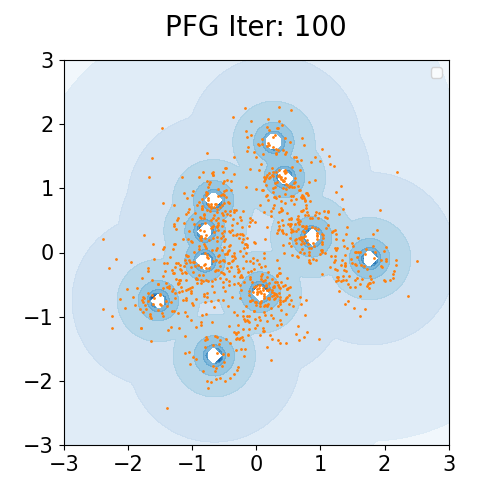}
        & \hspace{-0.1cm}\includegraphics[trim={0.2cm .2cm .2cm .2cm},clip,width=0.18\textwidth]{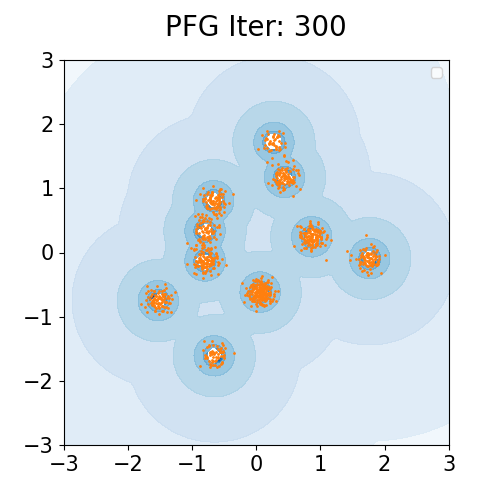}
        \\
\hspace{-0.1cm}\includegraphics[trim={0.2cm .2cm .2cm .2cm},clip,width=0.18\textwidth]{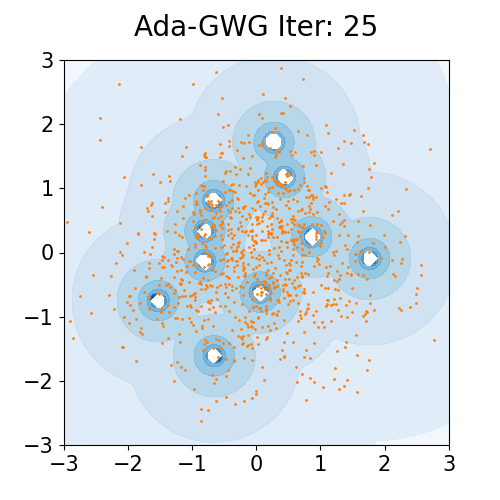}  
        &
  \hspace{-0.1cm}\includegraphics[trim={0.2cm .2cm .2cm .2cm},clip,width=0.18\textwidth]{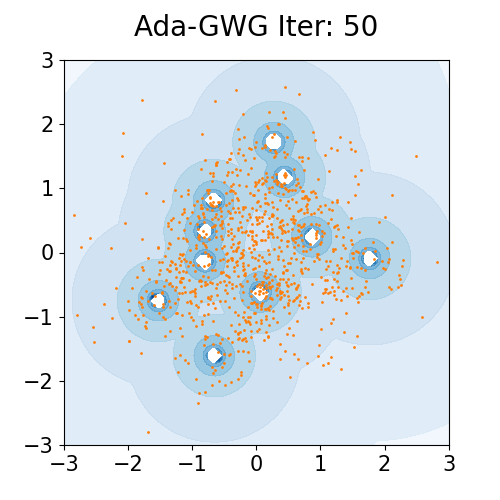}  
        & \hspace{-0.1cm}\includegraphics[trim={0.2cm .2cm .2cm .2cm},clip,width=0.18\textwidth]{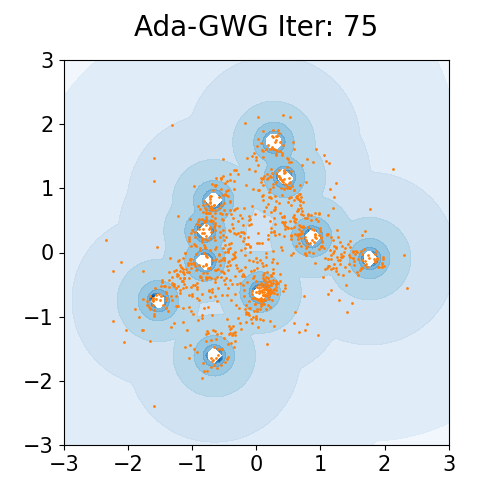}
        & \hspace{-0.1cm}\includegraphics[trim={0.2cm .2cm .2cm .2cm},clip,width=0.18\textwidth]{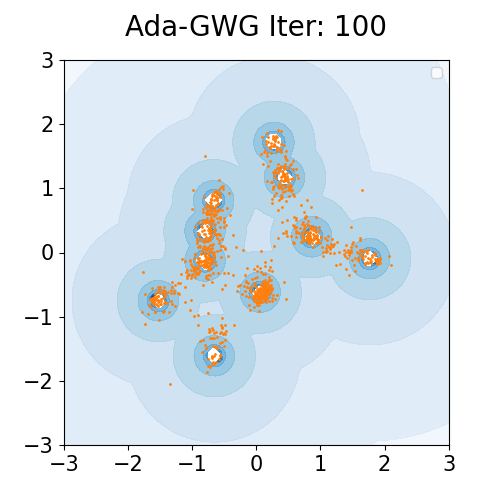}
        & \hspace{-0.1cm}\includegraphics[trim={0.2cm .2cm .2cm .2cm},clip,width=0.18\textwidth]{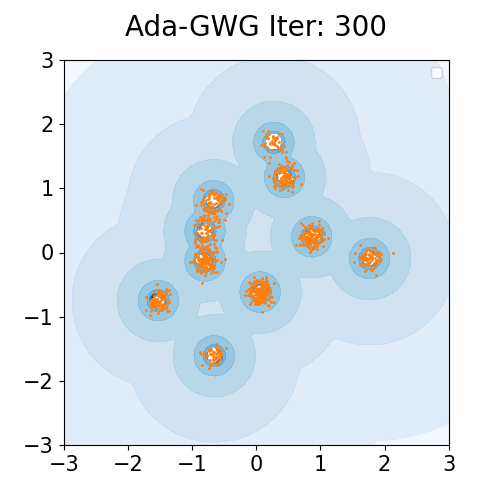}
        \\
    \end{tabular}
    \caption{Comparison of sampled particles at different numbers of iterations. {\bf Upper}: $L_2$-GF. {\bf Middle}: PFG. {\bf Lower}: Ada-GWG with $p_0=2$.
    }
    \label{fig:gmm}
    \vspace{-0.2cm}
\end{figure}

\subsection{Monomial Gamma}

To illustrate the effectiveness and efficiency of the adaptive method compared to the non-adaptive counterparts, we consider the heavy tailed Monomial Gamma distribution where the target $\pi \propto \exp(-0.3(|x_1|^{0.9}+|x_2|^{0.9}))$.

We test GWG and Ada-GWG with different choices of the initial values of $p$. The number of particles is $1000$. Figure \ref{fig:exp} demonstrates the KL divergence of different methods against the number of iterations. The dotted line represents GWG with fixed $p$, while the solid line represents the corresponding Ada-GWG that starts from the same $p$ at initialization.

We see that the adaptive method outperforms the non-adaptive counterpart consistently.
Moreover, Ada-GWG can automatically learn the appropriate value of $p$ especially when the initial values of $p$ is set inappropriately. 
For example, in our case, a relatively small value of $p=1.5$ would be inappropriate (the dotted green line) for GWG, while Ada-GWG with the same initial value of $p=1.5$ is able to provide much better approximation by automatically adjusting $p$ during runtime.
Consequently, Ada-GWG can exhibit greater robustness when determining the initial value of $p$.

\subsection{Conditioned Diffusion}

The conditioned diffusion example is a high-dimensional model arising from a Langevin SDE, with state $u:[0,1]\longrightarrow \reals$ and dynamics given by
\begin{equation}\label{eq:sde}
    du_t = \frac{10 u(1-u^2)}{1+u^2}dt + dx_t,\quad u_0=0,
\end{equation}
where $x=(x_t)_{t\geq 0}$ is a standard Brownian motion.

\begin{figure}[ht]
\vspace{-12mm}
    \centering
    \hspace{-3mm}
    \begin{minipage}[t]{0.52\textwidth}
        \centering
        \includegraphics[width=0.8\textwidth]{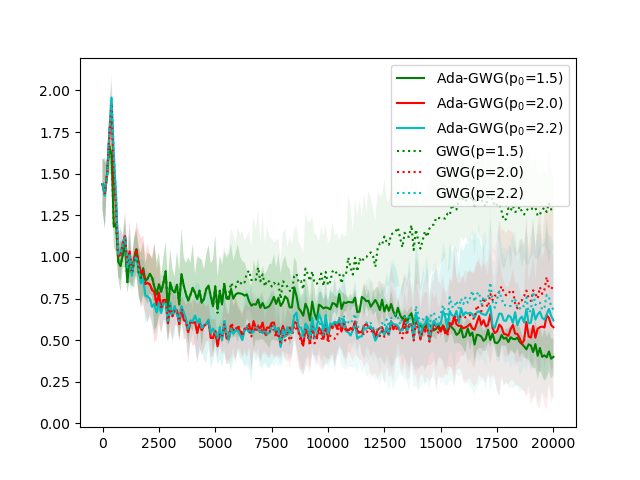} 
        \caption{KL divergence of different methods. Solid line: Ada-GWG. Dotted line: GWG counterpart.}
        \label{fig:exp}
    \end{minipage}
    \hspace{1mm} 
    \begin{minipage}[t]{0.47\textwidth}
        \centering
        \includegraphics[width=0.75\textwidth]{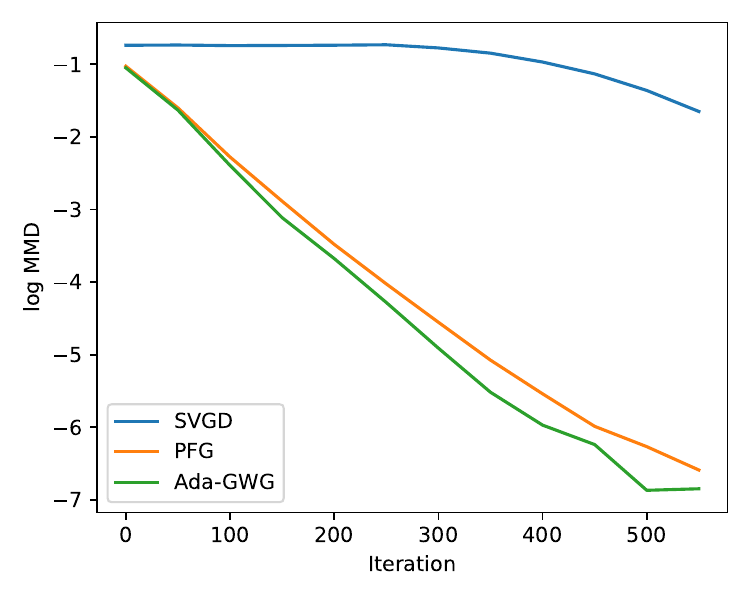}
        \caption{Comparison among PFG, Ada-GWG, SVGD in conditioned diffusion example.}
        \label{fig:cond_diff}
    \end{minipage}
    \hspace{-3mm}
    \vspace{-3mm}
\end{figure}

This system is commonly used in molecular dynamics to represent the motion of a particle with negligible mass trapped in an energy potential with thermal fluctuations represented by the Brownian forcing \cite{detommaso2018stein, cui2016dimension}. Given the perturbed observations $y$, the goal is to infer the posterior of the driving process $p(x|y)$. The forward operator is defined by $\mathcal{F}(x) = (u_{t_1}, \cdots, u_{t_{20}}) \in \reals^{20}$, where $t_i = 0.05 i$.
This is achieved by discretizing the above SDE \eqref{eq:sde} using an Euler-Maruyama scheme with step size $\Delta t = 0.01$; therefore the dimensionality of the problem is $100$.
The noisy observations are obtained as $y = \mathcal{F}(x_{\text{true}}) + \xi\in \reals^{20}$, where $x_{\text{true}}$ is a Brownian motion path and $\xi\sim\mathcal{N}(0,\sigma^2 I)$ with $\sigma=0.1$.
The prior is given by the Brownian motion $x = (x_t)_{t\geq 0}$.

We test three algorithms: PFG, Ada-GWG, and SVGD, with $n=1000$ particles.
To obtain the ground truth posterior, we run LMC with $1000$ particles in parallel, using a small step size $h=10^{-4}$ for $10000$ iterations. Figure \ref{fig:cond_diff} reports the logarithmic Maximum Mean Discrepancy (MMD) curves against iterations.
We observe that Ada-GWG provides best performance compared to the other methods.

\subsection{Bayesian Neural Networks}

We compare our algorithm with SGLD and SVGD variants on Bayesian neural networks (BNN).
Following \citet{Liu2016SVGD}, we conduct the two-layer network with 50 hidden units and ReLU activation function, and we use a $\mathrm{Gamma}(1,0.1)$ prior for the inverse covariances.
The datasets are all randomly partitioned into 90\% for training and 10\% for testing. The mini-batch size is 100 except for Concrete on which we use 400. The particle size is 100 and the results are averaged over 10 random trials. 
Table \ref{bnn} shows the average test RMSE and NLL and their standard deviation. 
We see that Ada-GWG can achieve comparable or better results than the other methods.
And the adaptive method consistently improves over $L_2$-GF. Figure \ref{fig:rmse-boston} shows the test RMSE against iterations of different methods on the Boston dataset. We can see that for this specific task, setting $p=3$ produces better results than when $p=2$. Although $L_2$-GF (\ie, GWG with $p=2$) is sub-optimal, our adaptive method (\ie, Ada-GWG with $p_0=2$) makes significant improvements and demonstrates comparable performance to the optimal choice of $p=3$. This suggests that our adaptive method is robust even if the initial exponent choice is not ideal. More comparisons of convergence results and hyperparameter tuning details can be found in the appendix.

\begin{table}[H]
  \captionof{table}{Averaged test RMSE and test negative log-likelihood of Bayesian Neural Networks on several UCI datasets. The results are averaged from 10 independent runs.}  
\label{bnn}
\begin{center}
\setlength\tabcolsep{3.3pt}
\begin{footnotesize}
\begin{sc}
\scalebox{0.85}{
\begin{tabular}{l|cccc|cccc}
\hline
& \multicolumn{4}{c|}{Avg. Test RMSE} & \multicolumn{4}{c}{Avg. Test NLL}\\
{\bf Dataset} & {\bf SGLD}& {\bf SVGD} & {\bf $L_2$-GF}& {\bf Ada-GWG} & {\bf SGLD}& {\bf SVGD} & {\bf $L_2$-GF}& {\bf Ada-GWG} \\
\hline
Boston       & $3.011_{\pm0.15}$ & $2.774_{\pm0.08}$ & $3.072_{\pm0.10}$& $\bm{2.721}_{\pm0.08}$    & $2.496_{\pm0.03}$ & $2.444_{\pm0.02}$ & $2.547_{\pm0.14}$ & $\bm{2.434}_{\pm0.02}$\\
Concrete     & $5.583_{\pm0.25}$ & $4.436_{\pm0.08}$ & $4.343_{\pm0.11}$& $\bm{3.871}_{\pm0.10}$    & $3.184_{\pm0.04}$ & $3.035_{\pm0.02}$ & $3.053_{\pm0.03}$ & $\bm{2.826}_{\pm0.02}$\\
Power        & $4.089_{\pm0.11}$ & $3.972_{\pm0.02}$ & $4.014_{\pm0.02}$& $\bm{3.944}_{\pm0.01}$    & $2.840_{\pm0.02}$ & $2.809_{\pm0.01}$ & $2.824_{\pm0.01}$ & $\bm{2.802}_{\pm0.01}$\\
Winewhite    & $0.677_{\pm0.01}$ & $0.664_{\pm0.01}$ & $0.666_{\pm0.01}$& $\bm{0.660}_{\pm0.01}$    & $1.033_{\pm0.01}$ & $1.014_{\pm0.01}$ & $1.015_{\pm0.01}$ & $\bm{1.006}_{\pm0.01}$\\
Winered      & $0.600_{\pm0.01}$ & $0.579_{\pm0.01}$ & $0.581_{\pm0.01}$& $\bm{0.575}_{\pm0.01}$    & $0.910_{\pm0.01}$ & $0.887_{\pm0.02}$ & $0.860_{\pm0.02}$ & $\bm{0.839}_{\pm0.02}$\\
protein      & $\bm{4.560}_{\pm0.04}$ & $4.779_{\pm0.03}$ & $4.867_{\pm0.01}$& $4.686_{\pm0.02}$    & $\bm{2.934}_{\pm0.01}$ & $2.984_{\pm0.01}$ & $3.003_{\pm0.00}$ & $2.964_{\pm0.00}$\\
\hline
\end{tabular}
}
\end{sc}
\end{footnotesize}
\end{center}

\end{table}

\begin{figure}[ht]
    \centering
    \includegraphics[scale=0.35]{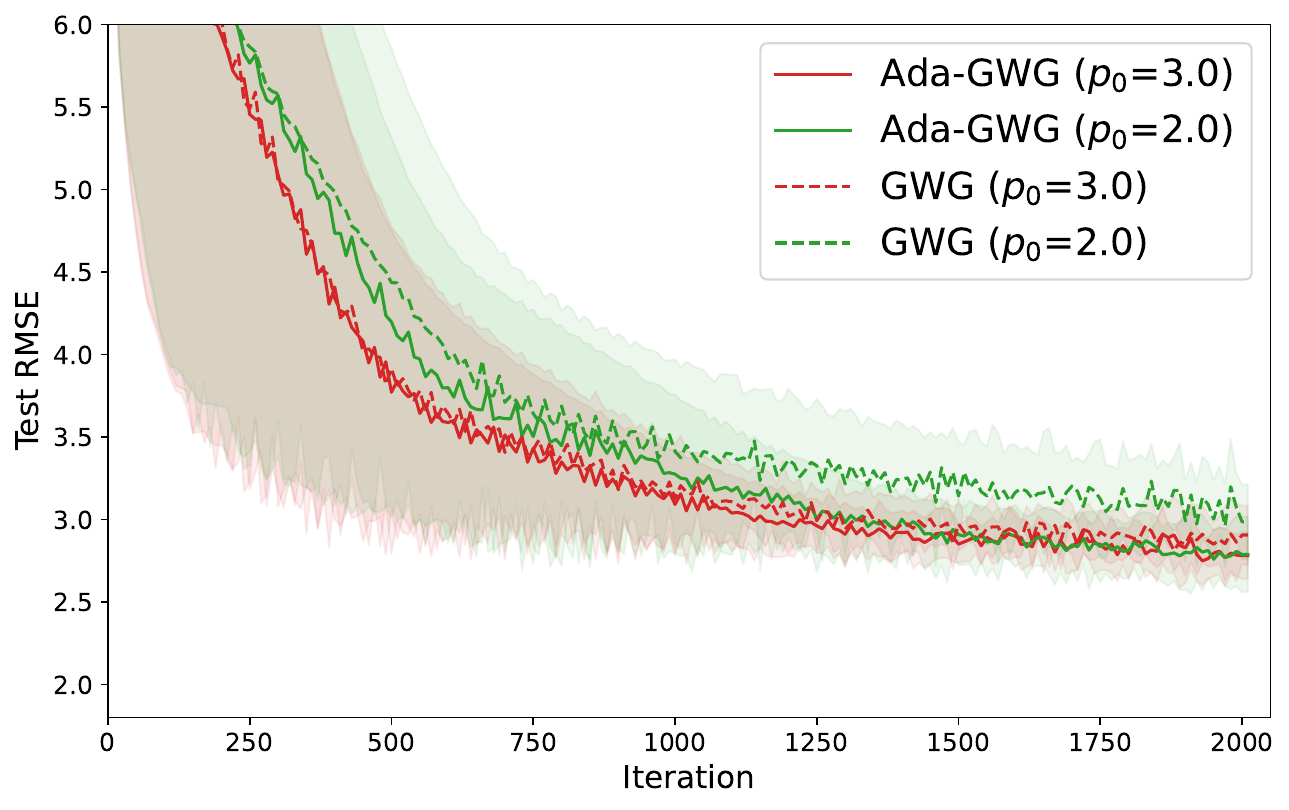}
    \caption{Test RMSE for the Bayesian Neural Networks on Boston dataset. The number in parentheses specifies the initial exponent $p_0$. The results are averaged from 10 independent runs.}
    \label{fig:rmse-boston}
\end{figure}

\section{Conclusion}
We introduced a new ParVI method, called GWG, which corresponds to a generalized Wasserstein gradient flow of KL divergence.
We show that our method has strong convergence guarantees in discrete time setting.
We also propose an adaptive version, called Ada-GWG, that can automatically adjust the Wassertein metric to accelerate convergence.
Extensive numerical results showed that Ada-GWG outperforms conventional ParVI methods.

\section*{Acknowledgements}
This work was supported by National Natural Science Foundation of China (grant no. 12201014 and grant no. 12292983).
The research of Cheng Zhang was supported in part by National Engineering Laboratory for Big Data Analysis and Applications, the Key Laboratory of Mathematics and Its Applications (LMAM) and the Key Laboratory of Mathematical Economics and Quantitative Finance (LMEQF) of Peking University. 
Ziheng Cheng and Shiyue Zhang are partially supported by the elite undergraduate training program of School of Mathematical Sciences in Peking University.
The authors are grateful for the computational resources provided by the High-performance Computing Platform of Peking University.
The authors appreciate the anonymous NeurIPS reviewers for their constructive feedback.

\bibliographystyle{nips_bib}
\bibliography{Bibliography}


\newpage
\appendix
\section{Minimizing Movement Scheme}\label{app_sec:mms}

\subsection{Geometric Interpretation}

In fact, under some mild conditions of $g$, the transportation cost $W_{c_h}(\cdot,\cdot)$ can induce a Wasserstein metric and thus $\mathcal{P}_{c_h}(\reals^d):=\{\mu\in \mathcal{P}(\reals^d): \expect_{\mu}[g(\frac{2x}{h})] < \infty \}$ is indeed a Wasserstein space.

\begin{prop}\label{prop:metric}
    Let $g(\cdot)=g_0(\|\cdot\|)$ where $g_0: \reals^+\cup{\{0\}}\rightarrow \reals^+\cup{\{0\}}$ satisfies $g_0(0)=0$ and $\|\cdot\|$ can be any norm in $\reals^d$. Then $g_0^{-1}(W_{c_h}(\cdot,\cdot))$ is a metric on $\mathcal{P}_{c_h}(\reals^d)$ if $g_0$ satisfies: (1) $g_0$ is continuous and strictly increasing; (2) $g_0$ is convex; (3) $\log g_0(x)$ is a convex function of $\log x$.
\end{prop}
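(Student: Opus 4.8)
The plan is to verify the four metric axioms for $d(\mu,\nu):=g_0^{-1}(W_{c_h}(\mu,\nu))$ on $\mathcal{P}_{c_h}(\reals^d)$; only the triangle inequality is genuinely nontrivial. First I would record the easy facts. Since $g_0(0)=0$ and $g_0$ is strictly increasing, the cost $c_h(x,y)=h\,g_0(\|x-y\|/h)$ is nonnegative, symmetric (because $\|\cdot\|$ is a norm), and vanishes exactly on the diagonal; hence $W_{c_h}\ge 0$, $W_{c_h}(\mu,\nu)=W_{c_h}(\nu,\mu)$, and $W_{c_h}(\mu,\mu)=0$ via the diagonal coupling, so $d$ inherits nonnegativity, symmetry and $d(\mu,\mu)=0$. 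Finiteness of $d$ on $\mathcal{P}_{c_h}(\reals^d)$ comes from the product coupling $\mu\otimes\nu$ together with convexity of $g_0$ and $g_0(0)=0$: from $\|x-y\|\le\|x\|+\|y\|$, monotonicity of $g_0$, and $g_0\!\big(\tfrac{\|x\|+\|y\|}{h}\big)\le\tfrac12 g_0\!\big(\tfrac{2\|x\|}{h}\big)+\tfrac12 g_0\!\big(\tfrac{2\|y\|}{h}\big)$ one gets $W_{c_h}(\mu,\nu)\le\tfrac h2\big(\expect_\mu[g(2x/h)]+\expect_\nu[g(2x/h)]\big)<\infty$, which is exactly why the moment condition defining $\mathcal{P}_{c_h}$ is imposed (and why $g$ being a Young function, so that $g_0$ is unbounded, matters: $g_0^{-1}$ is then defined on all of $[0,\infty)$). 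For $d(\mu,\nu)=0\Rightarrow\mu=\nu$ I would use the standard existence of an optimal plan $\rho\in\Pi(\mu,\nu)$ (continuous cost bounded below, tight and weakly closed $\Pi(\mu,\nu)$): zero total cost forces $\rho$ onto the diagonal because $c_h>0$ off it, hence $\mu=\nu$.

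The core is the triangle inequality $d(\mu,\lambda)\le d(\mu,\nu)+d(\nu,\lambda)$. I would fix optimal plans $\rho_{12}\in\Pi(\mu,\nu)$ and $\rho_{23}\in\Pi(\nu,\lambda)$, glue them along the shared marginal $\nu$ into $\rho_{123}$ on $(\reals^d)^3$, and let $\rho_{13}$ be its $(1,3)$-marginal, so that $W_{c_h}(\mu,\lambda)\le\int c_h(x,z)\,d\rho_{123}$. By the triangle inequality for $\|\cdot\|$ and monotonicity of $g_0$, $c_h(x,z)\le h\,g_0\!\big(\tfrac{\|x-y\|+\|y-z\|}{h}\big)$ pointwise. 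The decisive ingredient is then a Mulholland-type generalization of Minkowski's inequality: for a function $\Phi$ that is continuous, strictly increasing, $\Phi(0)=0$, convex, and with $t\mapsto\log\Phi(e^t)$ convex, one has, for any probability measure $\rho$ and nonnegative measurable $a,b$,
\[
    \Phi^{-1}\!\left(\int \Phi(a+b)\,d\rho\right)\le\Phi^{-1}\!\left(\int \Phi(a)\,d\rho\right)+\Phi^{-1}\!\left(\int \Phi(b)\,d\rho\right),
\]
the integral form following from the classical finitary statement by approximation. Hypotheses (1)--(3) are precisely these conditions ((1)+(2) already give convexity of $\Phi(e^t)$ as a composition of convex nondecreasing maps, and (3) gives the log-log convexity), applied to the cost profile $\Phi(r):=h\,g_0(r/h)$, which inherits (1)--(3) from $g_0$. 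Taking $a=\|x-y\|$, $b=\|y-z\|$ and $\rho=\rho_{123}$, and using that the marginals of $\rho_{123}$ reproduce $\rho_{12}$ and $\rho_{23}$, the right side becomes $\Phi^{-1}(W_{c_h}(\mu,\nu))+\Phi^{-1}(W_{c_h}(\nu,\lambda))$, while the left side dominates $\Phi^{-1}(W_{c_h}(\mu,\lambda))$ by the pointwise bound above and monotonicity of $\Phi^{-1}$. This is the triangle inequality for $\Phi^{-1}\circ W_{c_h}$, which is the natural metric; it coincides with $g_0^{-1}\circ W_{c_h}$ when $h=1$ and otherwise differs only by absorbing $h$ into $g_0$. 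Combined with the first paragraph, $(\mathcal{P}_{c_h}(\reals^d),d)$ is then a metric (Wasserstein) space.

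The main obstacle is this Mulholland step: one must either cite the inequality or reprove it, and in either case its whole content is that convexity of $g_0$ plus convexity of $\log g_0$ in $\log x$ is exactly what upgrades the pointwise estimate $\|x-z\|\le\|x-y\|+\|y-z\|$ to a triangle inequality for the (nonhomogeneous) optimal transport cost --- the homogeneous special case $g_0(r)=r^p$ reducing to the usual proof of the triangle inequality for $W_p$ via Minkowski in $L^p(\rho_{123})$. The remaining points --- keeping $\rho_{123}$ a probability measure so the integral Minkowski applies, using the tightness built into $\mathcal{P}_{c_h}$ for the gluing lemma and for existence of optimal plans, and using unboundedness of $g_0$ so $g_0^{-1}$ acts on the whole range of $W_{c_h}$ --- are standard optimal-transport bookkeeping.
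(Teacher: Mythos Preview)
Your approach is essentially the same as the paper's: both reduce to the gluing lemma plus Mulholland's generalized Minkowski inequality, with the easy axioms handled first. You are in fact more careful than the paper on finiteness and identity of indiscernibles, and your remark that the natural metric is $\Phi^{-1}\circ W_{c_h}$ with $\Phi(r)=h\,g_0(r/h)$ rather than $g_0^{-1}\circ W_{c_h}$ is a valid observation---the paper's proof silently drops the factor $h$ throughout, effectively proving the triangle inequality for $g_0^{-1}(W_{c_h}/h)$.
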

\begin{proof}
    Suppose $\pi,\mu,\nu\in \mathcal{P}_{c_h}$. It is obvious that $g_0^{-1}(W_{c_h}(\mu,\nu))=0$ if and only if $\mu=\nu$. Besides, $g_0^{-1}(W_{c_h}(\cdot,\cdot))$ is symmetric. In the rest part of proof we aim to show that $g_0^{-1}(W_{c_h}(\mu,\pi))+g_0^{-1}(W_{c_h}(\nu,\pi))\geq g_0^{-1}(W_{c_h}(\mu,\nu))$. By Gluing lemma \cite{villani2009optimal}, we can construct random variables $X\sim \pi, Y\sim \mu, Z\sim \nu$ such that $(X,Y)$, $(X,Z)$ are the optimal coupling of $(\pi, \mu)$ and $(\pi,\nu)$ for transportation cost $W_{c_h}$, respectively. Then we have
    \begin{equation*}
        \begin{aligned}
            g_0^{-1}(W_{c_h}(\mu,\nu))
            &\leq g_0^{-1}\left(\expect g_0\left(\frac{\|Y-Z\|}{h}\right)\right) \\
            &\leq g_0^{-1}\left(\expect g_0\left(\frac{\|X-Y\|+\|X-Z\|}{h}\right)\right) \\
            &\leq g_0^{-1}\left(\expect g_0\left(\frac{\|X-Y\|}{h} \right)\right) + g_0^{-1}\left(\expect g_0\left(\frac{\|X-Z\|}{h}\right)\right) \\
            &= g_0^{-1}(W_{c_h}(\mu,\pi)) + g_0^{-1}(W_{c_h}(\pi,\nu)).
        \end{aligned}
    \end{equation*}
    Here the last inequality is due to generalized Minkowski's inequality \cite{mulholland1949generalizations}.
\end{proof}

The conditions in Proposition \ref{prop:metric} are mild and the most common choices of Young function $g$ satisfy them, and hence can induce a Wasserstein space the generalized Wasserstein gradient flow. Some typical examples of $g_0$ include $|x|^p, \exp(ax^2)-1, x\exp(ax^b)$, while the norm $\|\cdot\|$ in $\reals^d$ can be $\|\cdot\|_p, \|\cdot\|_H$ and so on.

\subsection{Derivation of Generalized Wasserstein Gradient Flow}

\begin{thm}[Restatement of Theorem \ref{thm:wgf}]\label{thm:full_wgf}
    Given a continuously differentiable Young function $g$ and step size $h>0$, define cost function $c_h(x,y)=g(\frac{x-y}{h})h$. Suppose that $\pi, \mu_{kh}\in\mathcal{P}_{c_h}(\reals^d):=\{\mu\in \mathcal{P}(\reals^d): \expect_{\mu}[g(\frac{2x}{h})] < \infty \}$.
    If $g$ satisfies assumptions in Proposition \ref{prop:metric}, $\mathcal{P}_{c_h}(\reals^d)$ is a Wasserstein space equipped with Wasserstein metric. Consider MMS under transportation cost $W_{c_h}$:
    \begin{equation}\label{eq:wgf_mms}
        \mu_{(k+1)h} := \argmin_{\mu\in \mathcal{P}_{c_h}(\reals^d)} \KL(\mu\|\pi) + W_{c_h}(\mu, \mu_{kh}).
    \end{equation}
    Denote the optimal transportation map under $W_{c_h}$ from $\mu_{(k+1)h}$ to $\mu_{kh}$ by $T_k(\cdot)$. Then we have
    \begin{equation}
        \frac{T_k(x) - x}{h} = -\nabla g^*\left(\nabla\log\pi(x)-\nabla\log\mu_{(k+1)h}(x)\right).
    \end{equation}
\end{thm}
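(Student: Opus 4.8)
## Proof Proposal for Theorem \ref{thm:full_wgf}

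\textbf{Overall strategy.} The plan is to follow the classical Jordan–Kinderlehrer–Otto (JKO) argument, adapted from the quadratic cost $\|x-y\|_2^2$ to the general cost $c_h(x,y) = g\bigl(\frac{x-y}{h}\bigr)h$. The core idea is a first-variation (Euler–Lagrange) computation for the functional $\mu \mapsto \KL(\mu\|\pi) + W_{c_h}(\mu,\mu_{kh})$ at its minimizer $\mu_{(k+1)h}$, performed by perturbing the minimizer along a family of pushforwards generated by a smooth vector field. Two ingredients must be combined: (i) the first variation of the entropy/potential part $\KL(\cdot\|\pi)$, which is standard and yields a $-\nabla\log\frac{\pi}{\mu_{(k+1)h}}$ term; and (ii) the first variation of the transport cost $W_{c_h}(\cdot,\mu_{kh})$, for which I would invoke a Kantorovich-duality / envelope-theorem argument so that only the explicit dependence of the cost on the perturbed marginal matters, with the optimal plan held fixed to first order.

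\textbf{Key steps, in order.}
\begin{enumerate}
\item[\textit{Step 1 (Existence and regularity of the minimizer and optimal map).}] First I would establish that the minimizer $\mu_{(k+1)h}$ of \eqref{eq:wgf_mms} exists in $\mathcal{P}_{c_h}(\reals^d)$ (lower semicontinuity of $\KL$, the coercivity built into the Young-function condition $hg(z/h)\to\infty$, and Proposition \ref{prop:metric} giving a genuine metric structure), and that $\KL(\mu_{(k+1)h}\|\pi)<\infty$ so the minimizer has a density. I would also record, via the standard theory of optimal transport for costs of the form $c(x,y)=h\,g((x-y)/h)$ with $g$ strictly convex and $C^1$ (a Gangbo–McCann type result), that the optimal plan from $\mu_{(k+1)h}$ to $\mu_{kh}$ is induced by a map $T_k$ and that the optimal Kantorovich potential $\varphi$ satisfies $\nabla\varphi(x) = \nabla_x c_h(x, T_k(x)) = \nabla g\bigl(\tfrac{x-T_k(x)}{h}\bigr)$ at $\mu_{(k+1)h}$-a.e.\ $x$.
\item[\textit{Step 2 (Perturbation along a flow).}] For a fixed smooth compactly supported $\zeta:\reals^d\to\reals^d$, set $\mu^\tau := (\mathrm{id}+\tau\zeta)_\#\,\mu_{(k+1)h}$, which lies in $\mathcal{P}_{c_h}$ for small $\tau$. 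Since $\mu_{(k+1)h}$ is the minimizer, $\frac{d}{d\tau}\big|_{\tau=0}\bigl[\KL(\mu^\tau\|\pi) + W_{c_h}(\mu^\tau,\mu_{kh})\bigr] = 0$.
\item[\textit{Step 3 (First variation of the KL term).}] Compute $\frac{d}{d\tau}\big|_{\tau=0}\KL(\mu^\tau\|\pi) = -\expect_{\mu_{(k+1)h}}\bigl\langle \nabla\log\tfrac{\pi}{\mu_{(k+1)h}},\, \zeta\bigr\rangle$, exactly as in \eqref{eq:dKL}.
\item[\textit{Step 4 (First variation of the transport term).}] This is the crux. Using Kantorovich duality, $W_{c_h}(\mu^\tau,\mu_{kh}) = \sup_{(\phi,\psi)}\bigl[\int\phi\,d\mu^\tau + \int\psi\,d\mu_{kh}\bigr]$ over $c_h$-conjugate pairs; by an envelope argument the derivative at $\tau=0$ equals $\int \langle\nabla\varphi,\zeta\rangle\,d\mu_{(k+1)h}$ with $\varphi$ the optimal potential from Step 1. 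Hence $\frac{d}{d\tau}\big|_{\tau=0}W_{c_h}(\mu^\tau,\mu_{kh}) = \expect_{\mu_{(k+1)h}}\bigl\langle \nabla g\bigl(\tfrac{x-T_k(x)}{h}\bigr),\,\zeta(x)\bigr\rangle$.
\item[\textit{Step 5 (Combine and localize).}] Adding Steps 3 and 4 and setting the sum to zero for all test fields $\zeta$ gives $\nabla g\bigl(\tfrac{x-T_k(x)}{h}\bigr) = \nabla\log\tfrac{\pi}{\mu_{(k+1)h}}(x)$ for $\mu_{(k+1)h}$-a.e.\ $x$. Applying $\nabla g^*$ (the inverse of $\nabla g$, since $g$ is strictly convex) and using $\nabla g^*\circ\nabla g = \mathrm{id}$ yields $\tfrac{x-T_k(x)}{h} = \nabla g^*\bigl(\nabla\log\pi(x)-\nabla\log\mu_{(k+1)h}(x)\bigr)$, i.e.\ $\tfrac{T_k(x)-x}{h} = -\nabla g^*\bigl(\nabla\log\pi(x)-\nabla\log\mu_{(k+1)h}(x)\bigr)$, which is the claim.
\end{enumerate}

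\textbf{Main obstacle.} I expect Step 4 — the differentiation of $W_{c_h}(\mu^\tau,\mu_{kh})$ in $\tau$ — to be the delicate part. One must justify the interchange of differentiation and supremum (envelope theorem for Kantorovich duality), which requires uniqueness/stability of the optimal potential and some uniform integrability; alternatively one can differentiate the primal problem directly, writing $W_{c_h}(\mu^\tau,\mu_{kh}) \le \int c_h(x+\tau\zeta(x), y)\,d\rho_k(x,y)$ for the optimal plan $\rho_k$ at $\tau=0$ to get the upper bound on the derivative, and a matching lower bound from a competitor plan for $\mu^\tau$; the two bounds pinch to $\int \nabla_x c_h(x, T_k(x))\cdot\zeta(x)\,d\mu_{(k+1)h}$. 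Secondary technical points are the tail/integrability conditions ensuring $\mu^\tau\in\mathcal{P}_{c_h}$ and that the KL first variation is valid (handled by the $\mathcal{P}_{c_h}$ membership hypothesis and choosing $\zeta$ compactly supported), plus the a.e.\ differentiability of $T_k$ coming from the Gangbo–McCann regularity theory for strictly convex costs. The rest is bookkeeping; the essential structure is that strict convexity of $g$ makes $\nabla g$ invertible with inverse $\nabla g^*$, turning the Euler–Lagrange identity directly into the stated velocity field.
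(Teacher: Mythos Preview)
Your proposal is correct and follows essentially the same JKO-style argument as the paper: both compute the first variation of $\KL(\cdot\|\pi)+W_{c_h}(\cdot,\mu_{kh})$ at the minimizer, invoke Kantorovich duality so that the transport part contributes the optimal potential $\varphi$, use the $c_h$-conjugacy relation to identify $\nabla\varphi(x)=\nabla g\bigl(\tfrac{x-T_k(x)}{h}\bigr)$, and then invert via $\nabla g^*$. The only cosmetic difference is that the paper phrases the optimality condition directly as $\frac{\delta}{\delta\mu}\KL+\psi_{kh}=\text{const}$ (an ``outer'' functional-derivative statement), whereas you implement it via pushforward perturbations $\mu^\tau=(\mathrm{id}+\tau\zeta)_\#\mu_{(k+1)h}$ and an explicit envelope argument; your version is the more carefully justified execution of the same idea.
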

\begin{proof}
    By Kantorovich duality \citep{villani2021topics}, the optimal transportation cost \eqref{eq:wc_distance} has an equivalent definition:
    \begin{equation}\label{eq:ka_dual}
        W_{c} (\mu,\nu) = \sup_{\varphi} \int \varphi d\mu + \int \varphi^c d\nu,\ \text{where}\ \varphi^c(y):=\inf_{x\in \mathbb{R}^d} c(x,y)-\varphi(x).
    \end{equation}
    Take the functional derivative of the optimization problem \eqref{eq:wgf_mms} and define the optimal $\varphi$ in \eqref{eq:ka_dual} as $\psi$. The following holds:
    \begin{equation}\label{eq:func_dr}
        \frac{\delta}{\delta\mu} \KL(\mu_{(k+1)h}\|\pi) + \psi_{kh} = \text{const}.
    \end{equation}
    Here the reason for have a constant instead of zero is that we constrain $\mu_{(k+1)h}$ in the space of smooth probability density. Note that $c_h(x,y)\leq \frac{1}{2}\left(g(\frac{2x}{h})+g(\frac{2y}{h})\right)$ and $\mu_{kh}, \mu_{(k+1)h}\in \mathcal{P}_{c_h}(\reals^d)$, then by the fundamental theorem of optimal transportation \citep{villani2021topics},
    \begin{equation}
        \psi_{kh}(x)+\psi_{kh}^{c_h}(y)=c_h(x,y),\ \text{for}\ y=T_k(x),
    \end{equation}
    which implies $\nabla \psi_{kh}(x)=\nabla_xc_h(x,y)=\nabla g(\frac{x-y}{h})$, \ie, 
    \begin{equation}
        x-y=\nabla g^*(\nabla \psi_{kh}(x)) h.
    \end{equation}
    Combine this equation with \eqref{eq:func_dr} and thus the optimal map is given by 
    \begin{equation*}
        \begin{aligned}
            \frac{T_k(x)-x}{h}
            &= -\nabla g^*( -\nabla\frac{\delta}{\delta\mu} \KL(\mu_{(k+1)h}\|\pi)) \\
            &= -\nabla g^*(\nabla \log\frac{\pi}{\mu_{(k+1)h}}).
        \end{aligned}
    \end{equation*}
\end{proof}

\section{Details of Motivating Example}\label{app_sec:dme}
We use Example \ref{eg:Lq_mog} to illustrate the benefits of choosing general Young function $g^*$, which is also discussed in \cite{Balasubramanian2022TowardsAT,Wibisono2022ConvergenceIK}. 

We follow the procedures of \cite{Wibisono2022ConvergenceIK}. For convenience, let $\pi_0=\mathcal{N}(-m, 1),\pi_1=\mathcal{N}(m, 1)$ and rewrite $\pi=\frac{1}{2}\pi_0+\frac{1}{2}\pi_1, \mu=\frac{3}{4}\pi_0+\frac{1}{4}\pi_1$. The lower bound of KL divergence follows from \cite{devroye2018total} and \textit{Pinsker inequality}. In addition, \cite{Balasubramanian2022TowardsAT} shows 
\begin{equation}
    \nabla\log\pi - \nabla\log\mu = -m\frac{\pi_0\pi_1}{2\pi\mu}.
\end{equation}
Also note that $\frac{\pi_0}{\pi_1}=\exp(-2mx)$. Therefore for any $q\geq 1$, the following bound holds:
\begin{equation}\label{eq:sd_lb}
    \begin{aligned}
        \expect_{\mu}\bv\bv \nabla\log\frac{\pi}{\mu} \bv\bv_q^q
        &= \frac{m^q}{2^q}\int \frac{\pi_0^q\pi_1^q}{\mu^{q-1}\pi^q}dx \\
        &= 4^{q-1}m^q \int \frac{\pi_0^q\pi_1^q}{(3\pi_0+\pi_1)^{q-1}(\pi_0+\pi_1)^q}dx \\
        &\geq 4^{q-1}m^q \left( \int_{x\geq \frac{1}{m}} \frac{\pi_0^q}{(1+e^{-2})^q\pi_1^{q-1}(1+3e^{-2})^{q-1}}dx+ \int_{x\leq -\frac{1}{m}} \frac{\pi_1^q}{(1+e^{-2})^q\pi_0^{q-1}(3+e^{-2})^{q-1}}dx\right).
    \end{aligned}
\end{equation}

\begin{equation}\label{eq:pi_0/pi_1_lb}
    \begin{aligned}
        \int_{x\geq \frac{1}{m}} \frac{\pi_0^q}{\pi_1^{q-1}}dx
        &= \int_{x\geq \frac{1}{m}} \frac{1}{\sqrt{2\pi}}\exp(-\frac{1}{2}(x+m)^2-2m(q-1)x)dx \\
        &= \prob_{\mathcal{N}(0,1)}\{Z\geq (2q-1)m+\frac{1}{m}\} \exp(2q(q-1)m^2) \\
        &\geq \frac{3}{4}\frac{1}{(2q-1)m+\frac{1}{m}}\frac{1}{\sqrt{2\pi}}\exp\left(-\frac{1}{2}((2q-1)m+\frac{1}{m})^2+2q(q-1)m^2\right) \\
        &\geq \frac{3}{4}\frac{1}{2qm}\frac{1}{\sqrt{2\pi}}\exp(-\frac{1}{2}m^2-2q+\frac{1}{2}).
    \end{aligned}
\end{equation}
Here the first inequality is by $\prob_{\mathcal{N}(0,1)}\{Z\geq t\}\geq (\frac{1}{t}-\frac{1}{t^3})\frac{1}{\sqrt{2\pi}}\exp(-\frac{t^2}{2})$ for any $t>0$ and $(2q-1)m+\frac{1}{m}\geq 2$.
Similarly, we can prove that
\begin{equation}\label{eq:pi_1/pi_0_lb}
    \int_{x\leq -\frac{1}{m}} \frac{\pi_1^q}{\pi_0^{q-1}}dx \geq \frac{3}{4}\frac{1}{2qm}\frac{1}{\sqrt{2\pi}}\exp(-\frac{1}{2}m^2-2q+\frac{1}{2}).
\end{equation}
Plug \eqref{eq:pi_0/pi_1_lb} and \eqref{eq:pi_1/pi_0_lb} in \eqref{eq:sd_lb},
\begin{equation*}
    \begin{aligned}
        \expect_{\mu}\bv\bv \nabla\log\frac{\pi}{\mu} \bv\bv_q^q 
        &\geq \frac{1}{q}m^{q-1}\exp(-\frac{m^2}{2})\cdot \frac{3}{4^3}\sqrt{\frac{2e}{\pi}}4^{q}(e^2+1)^{-q}\left[(\frac{1}{1+3e^{-2}})^{q-1}+(\frac{1}{3+e^{-2}})^{q-1}\right] \\
        &\geq \frac{1}{q}m^{q-1}\exp(-\frac{m^2}{2})\cdot \frac{3}{4^3}\sqrt{\frac{2e}{\pi}}(1+3e^{-2})\left[\frac{4}{(1+e^2)(1+3e^{-2})}\right]^q \\
        &\geq \frac{0.08}{qm}(\frac{m}{3})^q\exp(-\frac{m^2}{2}).
    \end{aligned}
\end{equation*}
As for the upper bound,
\begin{equation}\label{eq:sd_ub}
    \begin{aligned}
        \expect_{\mu}\bv\bv \nabla\log\frac{\pi}{\mu} \bv\bv_q^q
        &= \frac{m^q}{2^q}\int \frac{\pi_0^q\pi_1^q}{\mu^{q-1}\pi^q}dx \\
        &= 4^{q-1}m^q \int \frac{\pi_0^q\pi_1^q}{(3\pi_0+\pi_1)^{q-1}(\pi_0+\pi_1)^q}dx \\
        &\leq 4^{q-1}m^q \left( \int_{x\geq 0} \frac{\pi_0^q}{\pi_1^{q-1}}dx+ \int_{x\leq 0} \frac{\pi_1^q}{\pi_0^{q-1}}dx\right).
    \end{aligned}
\end{equation}

\begin{equation}\label{eq:pi_0/pi_1_ub}
    \begin{aligned}
        \int_{x\geq 0} \frac{\pi_0^q}{\pi_1^{q-1}}dx
        &= \int_{x\geq 0} \frac{1}{\sqrt{2\pi}}\exp(-\frac{1}{2}(x+m)^2-2m(q-1)x)dx \\
        &= \prob_{\mathcal{N}(0,1)}\{Z\geq (2q-1)m\} \exp(2q(q-1)m^2) \\
        &\leq \frac{1}{(2q-1)m}\frac{1}{\sqrt{2\pi}}\exp\left(-\frac{1}{2}((2q-1)m)^2+2q(q-1)m^2\right) \\
        &\leq \frac{1}{qm}\frac{1}{\sqrt{2\pi}}\exp(-\frac{1}{2}m^2).
    \end{aligned}
\end{equation}
Here the first inequality is by $\prob_{\mathcal{N}(0,1)}\{Z\geq t\}\leq \frac{1}{t}\frac{1}{\sqrt{2\pi}}\exp(-\frac{t^2}{2})$ for any $t>0$. Similarly, we can prove that
\begin{equation}\label{eq:pi_1/pi_0_ub}
    \int_{x\leq 0} \frac{\pi_1^q}{\pi_0^{q-1}}dx \leq \frac{1}{qm}\frac{1}{\sqrt{2\pi}}\exp(-\frac{1}{2}m^2).
\end{equation}
Therefore
\begin{equation*}
    \begin{aligned}
        \expect_{\mu}\bv\bv \nabla\log\frac{\pi}{\mu} \bv\bv_q^q 
        &\leq 4^{q-1}m^q\cdot \frac{2}{qm}\frac{1}{\sqrt{2\pi}}\exp(-\frac{1}{2}m^2) \\
        &\leq \frac{0.2}{qm}(4m)^q\exp(-\frac{m^2}{2}).
    \end{aligned}
\end{equation*}

\section{Asymptotic Normality of Estimator}
In practice, there are finite particles and we can only get a Monte Carlo estimation of \eqref{eq:gsm_pop}. But our theoretical analysis is based on the population loss. With this concern, we show that the maximum point of estimation \eqref{eq:gsm} have good statistical properties. To be specific, the estimator converges to true maximum point with asymptotic normality under mild conditions. Similar properties are also studied in \cite{barp2019minimum, song2020sliced, koehler2023statistical}.

Define objective function $\ell(w, x):=\nabla \log{\pi(x)}^T f_w(x) + \nabla_x \cdot f_w(x) - g(f_w(x))$.
\begin{asp}\label{asp:unique}
    $\mathcal{W}$ is compact and $\mathcal{L}(\cdot)$ defined in \eqref{eq:gsm_pop} has a unique maximum point $w^*\in \text{int}(\mathcal{W})$.
\end{asp}
\begin{asp}\label{asp:continuous}
    $f_w(x), \nabla_x f_w(x)$ are continuous with $w$. $\max\left\{\|f_w(x)\|, |\nabla_x\cdot f_w(x)|, g(f_w(x))\right\}\leq M_0(x)$ for some $M_0\in \mathcal{L}^1(\mu)$.
\end{asp}
\begin{asp}\label{asp:twice_diff}
    There exists a neighborhood $\mathcal{N}$ of $w^*$ such that $\ell(\cdot,x)$ is twice differentiable in $\mathcal{N}$ and $\|\nabla^2 \ell(w, x)\|\leq M_1(x)$ for all $w\in \mathcal{N}$. Additionally, assume $M_1\in \mathcal{L}^1(\mu)$ and $H:=\nabla^2 \mathcal{L}(w^*)$ is non-singular. 
\end{asp}

\begin{thm}\label{thm:asp_normal}
    Given $x_1,\cdots, x_n\overset{\iid}{\sim}\mu$, let $\widehat{w}_n:=\argmin_{w} \widehat{\mathcal{L}}_n(w):=\frac{1}{n}\sum_{i=1}^{n} \ell(w,x_i)$. Under Assumption \ref{asp:unique}-\ref{asp:twice_diff}, we have
    \begin{equation}
        \sqrt{n}(\widehat{w}_n-w^*) \overset{p}{\to}  \mathcal{N}\left(0, H^{-1}\Sigma H^{-1}\right),
    \end{equation}
    where $\Sigma = \expect_{\mu} \nabla_w\ell(w^*,\cdot) \otimes \nabla_w\ell(w^*,\cdot)$.
\end{thm}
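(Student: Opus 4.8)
The plan is to treat Theorem~\ref{thm:asp_normal} as a standard result on the asymptotic normality of extremum (M-)estimators and to check that Assumptions~\ref{asp:unique}--\ref{asp:twice_diff} supply exactly the regularity conditions that the classical proof requires; the argument splits into a consistency step and a local-expansion step. For consistency I would first establish the uniform law of large numbers $\sup_{w\in\mathcal{W}}|\widehat{\mathcal{L}}_n(w)-\mathcal{L}(w)|\overset{p}{\to}0$: the parameter set $\mathcal{W}$ is compact (Assumption~\ref{asp:unique}), $\ell(\cdot,x)$ is continuous in $w$ (Assumption~\ref{asp:continuous}), and the integrable envelope $M_0$ dominates each of the three terms defining $\ell$, so the class $\{\ell(w,\cdot):w\in\mathcal{W}\}$ is Glivenko--Cantelli. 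Since $w^*$ is the \emph{unique} maximizer of the continuous function $\mathcal{L}$ on the compact set $\mathcal{W}$, it is well separated, and the usual argmax argument then gives $\widehat{w}_n\overset{p}{\to}w^*$.

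Next, because $w^*\in\mathrm{int}(\mathcal{W})$ and $\widehat{w}_n\overset{p}{\to}w^*$, with probability tending to one $\widehat{w}_n$ lies in the neighborhood $\mathcal{N}$ of Assumption~\ref{asp:twice_diff} and is an interior critical point, so $\nabla_w\widehat{\mathcal{L}}_n(\widehat{w}_n)=0$. A coordinatewise mean value expansion around $w^*$ gives
\[
0=\nabla_w\widehat{\mathcal{L}}_n(w^*)+\bar H_n\,(\widehat{w}_n-w^*),
\]
where the $j$-th row of $\bar H_n$ is $\nabla_w\partial_{w_j}\widehat{\mathcal{L}}_n$ evaluated at a point $w^{(j)}_n$ on the segment joining $\widehat{w}_n$ and $w^*$. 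Once $\bar H_n$ is shown to be invertible with probability tending to one, this rearranges to
\[
\sqrt{n}\,(\widehat{w}_n-w^*)=-\bar H_n^{-1}\,\sqrt{n}\,\nabla_w\widehat{\mathcal{L}}_n(w^*).
\]

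For the score term, interchanging gradient and expectation is justified by $\|\nabla_w\ell(w,x)-\nabla_w\ell(w^*,x)\|\le M_1(x)\|w-w^*\|$ on $\mathcal{N}$ (a consequence of the Hessian bound in Assumption~\ref{asp:twice_diff}) together with square-integrability of $\nabla_w\ell(w^*,\cdot)$, which is implicit in $\Sigma$ being finite; hence $\expect_\mu\nabla_w\ell(w^*,\cdot)=\nabla\mathcal{L}(w^*)=0$, and the multivariate CLT yields $\sqrt{n}\,\nabla_w\widehat{\mathcal{L}}_n(w^*)\overset{d}{\to}\mathcal{N}(0,\Sigma)$. For the Hessian term, a second uniform law of large numbers over $\mathcal{N}$ — using continuity of $\nabla^2\ell(\cdot,x)$ and the integrable envelope $M_1$ — gives $\sup_{w\in\mathcal{N}}\|\nabla^2\widehat{\mathcal{L}}_n(w)-\nabla^2\mathcal{L}(w)\|\overset{p}{\to}0$; combined with $w^{(j)}_n\overset{p}{\to}w^*$ and continuity of $\nabla^2\mathcal{L}$ at $w^*$, this gives $\bar H_n\overset{p}{\to}H$. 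Since $H$ is nonsingular (Assumption~\ref{asp:twice_diff}), the continuous mapping theorem gives $\bar H_n^{-1}\overset{p}{\to}H^{-1}$ (so $\bar H_n$ is eventually invertible), and Slutsky's theorem applied to the display above, using $H=H^{\top}$, delivers $\sqrt{n}\,(\widehat{w}_n-w^*)\overset{d}{\to}\mathcal{N}(0,H^{-1}\Sigma H^{-1})$.

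The individual ingredients are textbook, so the main work is verification rather than invention: one must match the stated domination and continuity hypotheses to the two uniform laws of large numbers and to the differentiation-under-the-integral step — the subtlest point being to extract square-integrability of $\nabla_w\ell(w^*,\cdot)$ from the mere finiteness of $\Sigma$ — and one must carry the mean-value bookkeeping carefully, since the intermediate points $w^{(j)}_n$ differ across coordinates yet all must be shown to converge to $w^*$. No step is genuinely hard, but essentially all the content of the proof lies in this verification.
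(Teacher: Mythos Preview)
Your proposal is correct and follows essentially the same route as the paper: the paper's proof simply cites Newey and McFadden's large-sample results (Lemma~2.4 and Theorem~2.1 for consistency via a uniform law of large numbers, the CLT for the score, Lemma~2.4 again for uniform convergence of the empirical Hessian, and Theorem~3.1 for the final asymptotic normality), while you spell out the classical M-estimator argument that those citations encapsulate. The only difference is level of detail --- you unpack the mean-value expansion and Slutsky step explicitly, whereas the paper delegates them to the reference --- but the logical structure is identical.
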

\begin{proof}
    Note that under Assumption \ref{asp:continuous}, $|\ell(w,x)|\leq M(x)$ for some $M\in \mathcal{L}^1(\mu)$. By \citet[Lemma 2.4, Theorem 2.1]{Newey1986LargeSE}, $\widehat{w}_n$ is weakly consistent for $w^*$. Additionally, since $w^*$ is the maximum point of $\mathcal{L}$ and by Central Limit Theorem,
    \begin{equation*}
        \sqrt{n}\nabla \widehat{\mathcal{L}}_n(w^*)=\frac{1}{\sqrt{n}}\sum_{i=1}^{n}\nabla_w \ell(w^*, x_i)\overset{p}{\to} \mathcal{N}(0, \Sigma).
    \end{equation*}
    By Assumption \ref{asp:twice_diff} and \citet[Lemma 2.4]{Newey1986LargeSE}, the second order derivative converges uniformly, \ie
    \begin{equation*}
        \sup_{w\in \mathcal{N}} \|\nabla^2\widehat{\mathcal{L}}_n(w)-\nabla^2\mathcal{L}(w)\|\overset{p}{\to} 0.
    \end{equation*}
    Finally, the result follows \citet[Theorem 3.1]{Newey1986LargeSE}.
    
\end{proof}

\section{Proof in Section \ref{sec:convergence_parvi}}\label{sec:pf_main}
We first justify our Assumption $\ref{asp:error_lp}$ and \ref{asp:error_l2}, then present some crucial lemmas and finish the proof of convergence results. Our proof procedure uses interpolation process of discrete dynamics, following \cite{NEURIPS2019_65a99bb7, Balasubramanian2022TowardsAT}. Informally, the difference between discrete dynamics and continuous dynamics consists of two parts: discretization error and estimation error (by neural nets). We bound the discretization error in Lemma \ref{lem:discrete_err_lp}, \ref{lem:discrete_err_l2} and the estimation error in Lemma \ref{lem:nn_err_lp}, \ref{lem:nn_err_l2}.

\subsection{Justification for Assumption \ref{asp:error_lp} and \ref{asp:error_l2}}\label{sec:asp_12}

\begin{proof}[Proof of Proposition \ref{prop:a1a2}]
    It suffices to show that there exists $\varepsilon_2<\infty$ such that for any $a,b\in \reals$, the following inequality holds:
    \begin{equation}
        |b-\sgn(a)|a|^{q-1}|^p\leq \varepsilon_1 \frac{|a|^q}{q} + \varepsilon_2\left(\frac{|a|^q}{q} -ab + \frac{|b|^p}{p}\right).
    \end{equation}
    If $a=0$, then $\varepsilon_2\geq p$ is sufficient. Without loss of generality, suppose $a=1$ (by replacing $b$ with $\sgn(a)b/|a|^{q-1}$). We only need to show that
    \begin{equation}\label{eq:goal}
        |b-1|^p\leq \varepsilon_1 + \varepsilon_2\left(\frac{1}{q} -b + \frac{|b|^p}{p}\right).
    \end{equation}
    
    (1) \textbf{Case} $p\geq 2$.

    Let $\varepsilon_1=0$. Since $\lim_{b\to 1} \frac{|b-1|^p}{\frac{1}{q} -b + \frac{|b|^p}{p}} = \lim_{b\to 1}\frac{2|b-1|^{p-2}}{p-1} \leq 2$, so there exists $\delta>0$ such that when $b\in [1-\delta,1+\delta]$, \eqref{eq:goal} holds if $\varepsilon_2\geq p+1>2$. Also note that $f(b)=\frac{|b-1|^p}{\frac{1}{q} -b + \frac{|b|^p}{p}}$ is a continuous function on $\reals\backslash (1-\delta,1+\delta) $ and $\lim_{b\to \infty} f(b)=p<+\infty$. Therefore, $f(b)$ is bounded on $\reals\backslash [1-\delta,1+\delta]$ and thus \eqref{eq:goal} holds for finite $\varepsilon_2$. It's obvious that $\varepsilon_2$ only depends on $p$ in this case.

    (2) \textbf{Case} $p<2$.
    
    Similarly, let $\delta=\varepsilon_1^{1/p}$. When $b\in [1-\delta, 1+\delta]$, \eqref{eq:goal} will trivially hold for any $\varepsilon_2>0$. Also, $f(b)$ is bounded on $\reals\backslash [1-\delta,1+\delta]$ and thus there exists  finite $\varepsilon_2$ determined by $p, \delta$ such that \eqref{eq:goal} holds.
\end{proof}

\subsection{Main Lemmas}

\begin{lemma}\label{lem:dKL}
    For any $t\in (kh, (k+1)h)$, $\partial_t \KL(\mu_t\|\pi) = - \expect_{\mu_t} \left\langle \nabla\log{\frac{\pi}{\mu_t}}, \expect[v_k(X_{kh})|X_t=\cdot] \right\rangle$
\end{lemma}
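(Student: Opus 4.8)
The plan is to derive the transport (continuity) equation satisfied by the curve $\{\mu_t\}_{t\in(kh,(k+1)h)}$ and then differentiate the relative entropy functional along it. On this interval the interpolation is $X_t = X_{kh} + (t-kh)v_k(X_{kh})$, so $\dot X_t = v_k(X_{kh})$ almost surely; the Lagrangian velocity depends only on the starting point $X_{kh}$. First I would fix an arbitrary test function $\phi\in C_c^\infty(\reals^d)$ and compute, by differentiation under the expectation (using Assumption \ref{asp:smooth} to dominate, since $v_k$ is Lipschitz),
\[
  \frac{d}{dt}\expect[\phi(X_t)]
  = \expect\big[\langle\nabla\phi(X_t),v_k(X_{kh})\rangle\big]
  = \expect\big[\langle\nabla\phi(X_t),\expect[v_k(X_{kh})\mid X_t]\rangle\big]
  = \expect_{\mu_t}\langle\nabla\phi,b_t\rangle,
\]
where $b_t(\cdot):=\expect[v_k(X_{kh})\mid X_t=\cdot]$ and the middle step is the tower property. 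Since the left-hand side also equals $\int\phi\,\partial_t\mu_t\,dx$, this establishes $\partial_t\mu_t + \divg(\mu_t b_t)=0$ in the weak sense.

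Next I would record the regularity that legitimizes the subsequent manipulations. For a proper step size $h$ (small enough that $I+(t-kh)\nabla v_k(x)$ is invertible for every $x$, which holds under Assumption \ref{asp:smooth}), the map $T_t:=\mathrm{id}+(t-kh)v_k$ is a $C^1$-diffeomorphism of $\reals^d$ with $\mu_t=(T_t)_\#\mu_{kh}$; hence $\mu_t$ inherits a (smooth, positive where $\mu_{kh}$ is) density via the change-of-variables formula, and in fact $b_t(x)=v_k(T_t^{-1}(x))$, i.e. the conditional expectation is just the Eulerian velocity field. This lets us differentiate under the integral sign below and integrate by parts with vanishing boundary terms (the pushforward $\mu_t$ and its mass decay at infinity, as $\mu_{kh}\in\mathcal P_{c_h}$, and $\nabla\log(\mu_t/\pi)$ is controlled).

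Then I would differentiate the KL functional directly. Using $\partial_t\big(\mu_t\log\tfrac{\mu_t}{\pi}\big)=(\partial_t\mu_t)\log\tfrac{\mu_t}{\pi}+\partial_t\mu_t$ and $\tfrac{d}{dt}\int\mu_t\,dx=0$,
\[
  \partial_t\KL(\mu_t\|\pi)
  = \int(\partial_t\mu_t)\log\frac{\mu_t}{\pi}\,dx
  = -\int\divg(\mu_t b_t)\,\log\frac{\mu_t}{\pi}\,dx
  = \int\Big\langle\mu_t b_t,\ \nabla\log\frac{\mu_t}{\pi}\Big\rangle dx
  = -\expect_{\mu_t}\Big\langle\nabla\log\frac{\pi}{\mu_t},\ b_t\Big\rangle,
\]
which is the claimed identity once $b_t$ is written back as $\expect[v_k(X_{kh})\mid X_t=\cdot]$.

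The main obstacle is purely technical rather than conceptual: establishing enough regularity of $\mu_t$ (existence and smoothness of the density, decay at infinity) to justify differentiation under the integral sign and an integration by parts with no boundary contribution. The diffeomorphism property of $T_t$ for a proper step size $h$ delivers the density and its smoothness, and the finite $g(\cdot/h)$-moment of $\mu_{kh}$ together with Assumption \ref{asp:smooth} controls the tails; everything else is the one-line computation above.
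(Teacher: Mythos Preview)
Your proposal is correct and follows the same overall strategy as the paper: establish the continuity equation $\partial_t\mu_t+\divg(\mu_t b_t)=0$ with $b_t=\expect[v_k(X_{kh})\mid X_t=\cdot]$, then differentiate $\KL(\mu_t\|\pi)$ along it via integration by parts. The paper derives the continuity equation by writing the Fokker--Planck equation for the conditional law $\mu_{t\mid\mathcal F_{kh}}$ and then averaging using a ``Bayesian formula'' (disintegration) to obtain the effective drift $b_t$; you instead test against $\phi\in C_c^\infty$ and invoke the tower property directly, which is a slightly more elementary and self-contained route to the same identity. Your additional observation that, because $T_t=\mathrm{id}+(t-kh)v_k$ is a diffeomorphism for small $h$, the conditional expectation collapses to the deterministic Eulerian field $b_t=v_k\circ T_t^{-1}$ is not made explicit in the paper but is a useful clarification: it shows the conditional-expectation notation is carried only for uniformity with stochastic settings, and it also supplies the regularity you need for the integration by parts. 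In short, both arguments are the standard ``continuity equation $+$ chain rule for entropy'' computation; yours packages the first step more directly.
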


\begin{proof}
    Let $\mu_{t|\mathcal{F}_{kh}}$ denote the law of $X_t$ conditioned on the filtration $\mathcal{F}_{kh}$ at time $kh$. Then by Fokker-Planck equation, we have 
    \begin{equation*}
        \partial_t \mu_{t|\mathcal{F}_{kh}} = -\divg \left( \mu_{t|\mathcal{F}_{kh}} v_k(X_{kh}) \right).
    \end{equation*}
    Then we take expectation of the above equation; by Bayesian formula \citep{NEURIPS2019_65a99bb7},
    \begin{equation*}
        \begin{aligned}
            \partial_t \mu_t 
            &= -\divg\ \expect[\mu_{t|\mathcal{F}_{kh}} v_k(X_{kh})] \\
            &= -\divg \left( \mu_t \expect[v_k(X_{kh})|X_t=\cdot] \right)
        \end{aligned}
    \end{equation*}
    Hence 
    \begin{equation*}
        \partial_t \KL(\mu_t\|\pi) = -\expect_{\mu_t} \left\langle \nabla\log{\frac{\pi}{\mu_t}}, \expect[v_k(X_{kh})|X_t=\cdot] \right\rangle.
    \end{equation*}
\end{proof}

\begin{lemma}\label{lem:err_mu_t}
    Suppose that $h<\frac{1}{G_2}$. Under Assumption \ref{asp:smooth}, for any $t\in [kh, (k+1)h],\ q>1, \frac{1}{p} + \frac{1}{q}=1$, 
    \begin{equation*}
        \expect_{\mu_t}\bv\bv \nabla\log{\frac{\mu_t}{\mu_{kh}}} \bv\bv_q^q \leq \left(M_p(1-hG_2)^{-1}dh\right)^q
    \end{equation*}
\end{lemma}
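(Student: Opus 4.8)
The plan is to exploit the fact that the interpolation step is a \emph{deterministic} transport. Fix $t\in[kh,(k+1)h]$, write $s:=t-kh\in[0,h]$, and set $T_s(x):=x+s\,v_k(x)$, so that $\mu_t=(T_s)_\#\mu_{kh}$, with Jacobian $J_s(x):=\nabla T_s(x)=I+s\nabla v_k(x)$. Since $s\le h<1/G_2$ and $\|s\nabla v_k(x)\|_{\mathrm{op}}\le sG_2<1$ by Assumption~\ref{asp:smooth}, every $J_s(x)$ is invertible with $\|J_s(x)^{-1}\|_{\mathrm{op}}\le(1-sG_2)^{-1}\le(1-hG_2)^{-1}$, and $T_s$ is a diffeomorphism of $\reals^d$. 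The first step is then a change of variables: using $\E_{\mu_t}[F]=\E_{X\sim\mu_{kh}}[F(T_s(X))]$, the left-hand side is rewritten as a $\mu_{kh}$-expectation of a pointwise quantity evaluated along the transport.

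The second step tracks how the score changes along $T_s$. The density change-of-variables formula gives $\log\mu_t(T_s(x))=\log\mu_{kh}(x)-\log|\det J_s(x)|$; differentiating in $x$ and using $\nabla_x[\phi(T_s(x))]=J_s(x)^{\top}(\nabla\phi)(T_s(x))$ yields
\begin{equation*}
    J_s(x)^{\top}(\nabla\log\mu_t)(T_s(x))-\nabla\log\mu_{kh}(x)=-\,\nabla_x\log|\det J_s(x)|.
\end{equation*}
Thus, after the change of variables, the score increment over one interpolation step is governed by $\nabla_x\log|\det J_s(x)|$, and it remains to bound $\E_{\mu_{kh}}\big\|\nabla_x\log|\det J_s(X)|\big\|_q^q$.

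For the third step, fix $x$ and a direction $z$ with $\|z\|_p\le1$. Differentiating $\log\det$ gives the directional derivative $\langle\nabla_x\log|\det J_s(x)|,z\rangle = s\,\mathrm{tr}\!\big[J_s(x)^{-1}D_z(\nabla v_k)(x)\big]$, where $D_z(\nabla v_k)(x)$ is the directional derivative of the Jacobian $\nabla v_k$ in direction $z$. Assumption~\ref{asp:smooth} gives $\|D_z(\nabla v_k)(x)\|_{\mathrm{op}}\le M_p\|z\|_p\le M_p$; combining with $|\mathrm{tr}(AB)|\le d\,\|A\|_{\mathrm{op}}\|B\|_{\mathrm{op}}$ and $\|J_s(x)^{-1}\|_{\mathrm{op}}\le(1-hG_2)^{-1}$ yields $|\langle\nabla_x\log|\det J_s(x)|,z\rangle|\le s\,d\,(1-hG_2)^{-1}M_p$. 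Taking the supremum over $\|z\|_p\le1$ (dual norm) gives $\|\nabla_x\log|\det J_s(x)|\|_q\le M_p(1-hG_2)^{-1}dh$ pointwise in $x$, and the claim follows after inserting this into the $\mu_{kh}$-expectation.

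The part requiring care — and the main obstacle — is the second step: making precise the sense in which the score increment $\nabla\log\frac{\mu_t}{\mu_{kh}}$ along the step reduces, after the change of variables, to $\nabla_x\log|\det J_s(x)|$ (which is exactly where the deterministic-transport structure is essential, so that no uncontrolled contributions of $\nabla\log\mu_{kh}$ survive), together with the $\ell_p$--$\ell_q$ bookkeeping for $\|\nabla_x\log|\det J_s|\|_q$: it is the trace in $\log\det$ that produces the dimension factor $d$, the Hessian-Lipschitz constant $M_p$ enters through $D_z(\nabla v_k)$, and the invertibility margin $1-hG_2$ through $J_s^{-1}$.
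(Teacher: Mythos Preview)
Your overall strategy mirrors the paper's, and your Step~3 (bounding $\|\nabla_x\log|\det J_s|\|_q$ via the trace formula, the directional derivative of $\nabla v_k$, and the invertibility margin) is correct and matches the paper's computation through Lemma~\ref{lem:logdet}.

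The problem is Step~2, precisely the part you flag as delicate. Your identity
\[
J_s(x)^{\top}(\nabla\log\mu_t)(T_s(x))-\nabla\log\mu_{kh}(x)=-\nabla_x\log|\det J_s(x)|
\]
is correct, but it does \emph{not} control $\big(\nabla\log\tfrac{\mu_t}{\mu_{kh}}\big)(y)=(\nabla\log\mu_t)(y)-(\nabla\log\mu_{kh})(y)$ with both scores at the \emph{same} point $y=T_s(x)$. Solving for $(\nabla\log\mu_t)(T_s(x))$ and subtracting $(\nabla\log\mu_{kh})(T_s(x))$ leaves the residual
\[
\big(J_s(x)^{-\top}-I\big)\nabla\log\mu_{kh}(x)\;+\;\big[\nabla\log\mu_{kh}(x)-\nabla\log\mu_{kh}(T_s(x))\big],
\]
which involves $\nabla\log\mu_{kh}$ and $\nabla^2\log\mu_{kh}$ --- quantities not controlled by Assumption~\ref{asp:smooth}. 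So ``no uncontrolled contributions of $\nabla\log\mu_{kh}$ survive'' is exactly what fails. A one-line check: in $d=1$ take $v_k\equiv c\neq 0$ constant, so $G_2=M_p=0$ and the claimed bound is zero; yet $\mu_t=\mu_{kh}(\cdot-sc)$ gives $\big(\log\tfrac{\mu_t}{\mu_{kh}}\big)'(y)=(\log\mu_{kh})'(y-sc)-(\log\mu_{kh})'(y)$, generically nonzero. The paper's own proof makes the same leap: its opening assertion $\log\tfrac{\mu_t}{\mu_{kh}}=-\log\det(I_d+(t-kh)\nabla v_k)$ treats the pushforward relation $\mu_t(T_s(x))/\mu_{kh}(x)=|\det J_s(x)|^{-1}$ as if it were the pointwise ratio $\mu_t(y)/\mu_{kh}(y)$. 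You have faithfully reproduced the paper's argument, including this gap.
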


\begin{proof}
    Note that $\log\frac{\mu_t}{\mu_{kh}}=-\log\det\left(I_d+(t-kh)\nabla v_k\right)$ since $x\mapsto x+(t-kh)v_k(x)$ is an orientation-preserving diffeomorphism under $h<\frac{1}{G_2}$. Then the following holds:
    \begin{equation*}
        \begin{aligned}
            \bv\bv \nabla\log\frac{\mu_t}{\mu_{kh}}(x)\bv\bv_q
            &= \sup_{\|z\|_p=1} \left\langle \nabla\log\frac{\mu_t}{\mu_{kh}}(x), z \right\rangle \\
            &= \sup_{\|z\|_p=1} \lim_{\delta\to 0} \bv \log\det\left( I_d+(t-kh)\nabla v_k(x+\delta z)\right) - \log\det\left( I_d+(t-kh)\nabla v_k(x)\right)\bv \\
            &\leq  \sup_{\|z\|_p=1} \lim_{\delta\to 0} (t-kh) \| \nabla v_k(x+\delta z) - \nabla v_k(x)\|_2(1-hG_2)^{-1}d \\
            &\leq M_p(1-hG_2)^{-1}dh.
        \end{aligned}
    \end{equation*}
    Here the first equation is due to Young's inequality and the second equation follows the definition of gradient. The inequality in the third line is due to Lemma \ref{lem:logdet} and the last one is by Assumption \ref{asp:smooth}. With this uniform bound we finish the proof.
\end{proof}

\begin{lemma}\label{lem:discrete_err_lp}
    Under Assumption \ref{asp:error_lp}, \ref{asp:smooth}, and the same conditions in Lemma \ref{lem:err_mu_t},
    \begin{equation*}
         \expect_{\mu_t} \bv\bv \nabla g^*(\nabla\log\frac{\pi}{\mu_t}) - \nabla g^*(\nabla\log\frac{\pi}{\mu_{kh}})\bv\bv_p^p \leq c_1 \expect_{\mu_t} \bv\bv \nabla\log\frac{\pi}{\mu_t}\bv\bv_q^q + c_2 \left( M_p(1-hG_2)^{-1}dh\right)^q
    \end{equation*}
    where $c_1,c_2$ are defined as:
    \begin{equation}\label{eq:def_c1c2}
        (c_1, c_2) = \left\{
        \begin{array}{ll}
             (0,\ 2^{p-q}) & \text{if}\ q\leq 2, \\
             \left(3^{-p},\ \min\left\{3^{q-p}(\frac{p}{q})^{\frac{q-p}{q-1}}(1-\frac{p}{q})^{\frac{q-p}{p}}, (q-1)^p\left(\frac{(\frac{4}{3})^{\frac{1}{q-1}}}{(\frac{4}{3})^{\frac{1}{q-1}}-1}\right)^{q-p} \right\}\right) & \text{otherwise}.
        \end{array} \right.
    \end{equation}
\end{lemma}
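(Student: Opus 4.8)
The plan is to reduce the multivariate bound to a scalar inequality on $\mathbb{R}^d$ by writing, for each fixed $x$, $a := \nabla\log\frac{\pi}{\mu_t}(x)$ and $b := \nabla\log\frac{\pi}{\mu_{kh}}(x)$, so that $\nabla g^*(a) - \nabla g^*(b)$ must be controlled in terms of $\|a\|_q^q$ and $\|a-b\|_q^q$. Since $g^*(\cdot) = \frac{1}{q}\|\cdot\|_q^q$, we have $\nabla g^*(a) = (\,\sgn(a_i)|a_i|^{q-1}\,)_{i=1}^d$, so $\|\nabla g^*(a) - \nabla g^*(b)\|_p^p = \sum_i \bigl|\,\sgn(a_i)|a_i|^{q-1} - \sgn(b_i)|b_i|^{q-1}\,\bigr|^p$ and by the conjugacy $p(q-1)=q$ this is a sum of scalar terms. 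I would therefore first establish the pointwise scalar inequality: for all $a,b\in\mathbb{R}$,
\[
  \bigl|\,\sgn(a)|a|^{q-1} - \sgn(b)|b|^{q-1}\,\bigr|^p \le c_1 |a|^q + c_2 |a-b|^q,
\]
with $c_1,c_2$ as in \eqref{eq:def_c1c2}, and then sum over coordinates $i$ (with $a=a_i$, $b=b_i$) to get $\|\nabla g^*(a)-\nabla g^*(b)\|_p^p \le c_1\|a\|_q^q + c_2\|a-b\|_q^q$ pointwise in $x$; finally take $\expect_{\mu_t}$ and apply Lemma \ref{lem:err_mu_t} to the term $\expect_{\mu_t}\|a-b\|_q^q = \expect_{\mu_t}\|\nabla\log\frac{\mu_t}{\mu_{kh}}\|_q^q \le (M_p(1-hG_2)^{-1}dh)^q$ (note $\nabla\log\frac{\pi}{\mu_t} - \nabla\log\frac{\pi}{\mu_{kh}} = \nabla\log\frac{\mu_t}{\mu_{kh}}$, the $\log\pi$ terms cancelling). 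Assumption \ref{asp:error_lp} and the conditions of Lemma \ref{lem:err_mu_t} are exactly what make both pieces available.

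For the scalar inequality, I would split on the sign and size of $b$ relative to $a$. By homogeneity of degree $q$ on both sides it suffices to take $a=1$ (handle $a=0$ separately: then the LHS is $|b|^{p(q-1)}=|b|^q=c_2|a-b|^q$ with $c_2=1$, well within the stated constants) and prove
\[
  \bigl|\,1 - \sgn(b)|b|^{q-1}\,\bigr|^p \le c_1 + c_2 |1-b|^q.
\]
\textbf{Case $q\le 2$} (so $p\ge 2$): here I expect $c_1=0$ suffices. The function $\phi(b) := \sgn(b)|b|^{q-1}$ is Hölder-continuous of exponent $q-1\le 1$, and one shows $|\phi(1)-\phi(b)| \le 2^{(q-1)/q}|1-b|^{q-1}$ (this is the elementary inequality $|\,\sgn(s)|s|^{r} - \sgn(t)|t|^{r}| \le 2^{1-r}|s-t|^r$ for $r\in(0,1]$ with $r=q-1$), and raising to the $p$-th power and using $p(q-1)=q$, $p/q' $-bookkeeping gives the constant $2^{p-q}$. \textbf{Case $q>2$} (so $p<2$): now $\phi$ is only locally Lipschitz with derivative $(q-1)|b|^{q-2}$ blowing up, so near $b=1$ I would use $|\phi(1)-\phi(b)| \approx (q-1)|1-b|$ and near $b$ large use the growth $|\phi(b)|\sim|b|^{q-1}$; a convexity/Young-type argument interpolating between the "$c_1|a|^q$" term (which absorbs the part where $|b|$ is bounded away from where cancellation helps) and the "$c_2|a-b|^q$" term yields the two competing expressions in the min. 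I would verify the first expression $3^{q-p}(p/q)^{(q-p)/(q-1)}(1-p/q)^{(q-p)/p}$ by optimizing a Young's-inequality split of $|1-b|$ weighted against $|b|^{q-1}$, and the second $(q-1)^p\bigl((\tfrac43)^{1/(q-1)}/((\tfrac43)^{1/(q-1)}-1)\bigr)^{q-p}$ by a more direct mean-value-theorem estimate on the regime $b\in[3/4\,\text{-ish},\,\cdot]$; the factor $3^{-p}$ in $c_1$ records the leftover region where neither works and one just pays $\|a\|_q^q$ directly.

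The main obstacle is the $q>2$ case: getting the \emph{explicit} constants in \eqref{eq:def_c1c2} rather than merely "some finite $c_1,c_2$" requires a careful, slightly tedious case analysis of the scalar function and choosing the right breakpoint (the $3/4$ and $4/3$ thresholds suggest the authors split at $|b-1|$ versus $|b|$ comparisons around there), together with two different optimization arguments whose minimum is taken. Everything else — reduction by homogeneity, summation over coordinates, the final expectation and invocation of Lemma \ref{lem:err_mu_t} — is routine. I would present the scalar lemma as a standalone sub-lemma, prove it by the case split above, and then the main lemma follows in three lines.
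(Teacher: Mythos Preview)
Your proposal is correct and matches the paper's approach almost exactly: the paper isolates the scalar inequality $\bigl|\sgn(a)|a|^{q-1}-\sgn(b)|b|^{q-1}\bigr|^p\le c_1|a|^q+c_2|a-b|^q$ as a separate technical lemma (Lemma~\ref{lem:c1c2}), applies it entry-wise, and then invokes Lemma~\ref{lem:err_mu_t}, just as you outline. The only minor discrepancies are (i) a typo in your constant $2^{(q-1)/q}$, which by your own cited inequality with $r=q-1$ should be $2^{2-q}$ (and indeed $(2^{2-q})^p=2^{p-q}$), and (ii) for $q>2$ the paper's case split is on the \emph{sign} of $a,b$ first (different signs being trivial) and then, after homogenizing to $a=1$, on the three ranges $b\le1$, $1<b\le(4/3)^{1/(q-1)}$, and $b>(4/3)^{1/(q-1)}$---which is precisely the breakpoint structure you anticipated.
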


\begin{proof}
    Since $g^*(x) = \frac{1}{q}\|x\|_q^q$,  we have $\nabla g^*(x) = \sgn(x) \odot |x|^{q-1}$. Here $\odot$ means entry-wise product. Apply Lemma \ref{lem:c1c2} entry-wise and thus
    \begin{equation*}
        \begin{aligned}
            \expect_{\mu_t} \bv\bv \nabla g^*(\nabla\log\frac{\pi}{\mu_t}) - \nabla g^*(\nabla\log\frac{\pi}{\mu_{kh}})\bv\bv_p^p 
            &\leq c_1 \expect_{\mu_t} \bv\bv \nabla\log\frac{\pi}{\mu_t}\bv\bv_q^q + c_2 \expect_{\mu_t} \bv\bv \nabla\log\frac{\mu_t}{\mu_{kh}}\bv\bv_q^q \\
            &\leq c_1 \expect_{\mu_t} \bv\bv \nabla\log\frac{\pi}{\mu_t}\bv\bv_q^q + c_2 \left(M_p(1-hG_2)^{-1}dh\right)^q.
        \end{aligned}
     \end{equation*}
     The second inequality is due to Lemma \ref{lem:err_mu_t}.
\end{proof}

\begin{lemma}\label{lem:discrete_err_l2}
    Under Assumption \ref{asp:error_l2}, \ref{asp:smooth} and the same conditions in Lemma \ref{lem:err_mu_t}, 
     \begin{equation*}
         \expect_{\mu_t} \bv\bv \nabla g^*(\nabla\log\frac{\pi}{\mu_t}) - \nabla g^*(\nabla\log\frac{\pi}{\mu_{kh}})\bv\bv_2^2 \leq \beta^2 \left(M_2(1-hG_2)^{-1}dh\right)^2
     \end{equation*}
\end{lemma}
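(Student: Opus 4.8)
The plan is to mimic the proof of Lemma~\ref{lem:discrete_err_lp} but exploit the stronger structure available under Assumption~\ref{asp:error_l2}, namely that $g^*$ is $\beta$-smooth. First I would invoke the $\beta$-smoothness of $g^*$, which gives $\|\nabla g^*(u)-\nabla g^*(v)\|_2 \leq \beta\|u-v\|_2$ for all $u,v\in\reals^d$. Applying this pointwise with $u=\nabla\log\frac{\pi}{\mu_t}(x)$ and $v=\nabla\log\frac{\pi}{\mu_{kh}}(x)$ yields
\begin{equation*}
    \bv\bv \nabla g^*(\nabla\log\tfrac{\pi}{\mu_t})(x) - \nabla g^*(\nabla\log\tfrac{\pi}{\mu_{kh}})(x)\bv\bv_2^2 \leq \beta^2 \bv\bv \nabla\log\tfrac{\pi}{\mu_t}(x) - \nabla\log\tfrac{\pi}{\mu_{kh}}(x)\bv\bv_2^2 = \beta^2 \bv\bv \nabla\log\tfrac{\mu_{kh}}{\mu_t}(x)\bv\bv_2^2,
\end{equation*}
where the potentials $\log\pi$ cancel. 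Taking expectation over $\mu_t$ then leaves me to bound $\expect_{\mu_t}\bv\bv \nabla\log\frac{\mu_t}{\mu_{kh}}\bv\bv_2^2$.

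The second step is to control that remaining term, which is exactly the content of Lemma~\ref{lem:err_mu_t} specialized to $q=p=2$: under Assumption~\ref{asp:smooth} and $h<1/G_2$, one has $\expect_{\mu_t}\bv\bv \nabla\log\frac{\mu_t}{\mu_{kh}}\bv\bv_2^2 \leq \left(M_2(1-hG_2)^{-1}dh\right)^2$. Combining the two bounds gives
\begin{equation*}
    \expect_{\mu_t} \bv\bv \nabla g^*(\nabla\log\tfrac{\pi}{\mu_t}) - \nabla g^*(\nabla\log\tfrac{\pi}{\mu_{kh}})\bv\bv_2^2 \leq \beta^2 \left(M_2(1-hG_2)^{-1}dh\right)^2,
\end{equation*}
which is the claimed inequality. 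Note that this is cleaner than the $L^p$ case because $\beta$-smoothness of $g^*$ is a genuine global Lipschitz bound on $\nabla g^*$, so there is no need for the delicate case analysis of Lemma~\ref{lem:c1c2}; the $\alpha$-strong convexity plays no role here (it matters elsewhere, e.g.\ for the estimation-error lemma).

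There is essentially no serious obstacle in this particular lemma — the only thing to be careful about is making sure Lemma~\ref{lem:err_mu_t} is genuinely applicable, i.e.\ that the hypothesis ``same conditions in Lemma~\ref{lem:err_mu_t}'' (in particular $h<1/G_2$, so that $x\mapsto x+(t-kh)v_k(x)$ is an orientation-preserving diffeomorphism and the change-of-variables identity $\log\frac{\mu_t}{\mu_{kh}} = -\log\det(I_d+(t-kh)\nabla v_k)$ is valid) is indeed in force, and that the pointwise Lipschitz estimate can be integrated against $\mu_t$ without integrability issues — but the latter follows from the uniform bound produced in the proof of Lemma~\ref{lem:err_mu_t}. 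So the ``hard part'' is really just bookkeeping: confirming that $\beta$-smoothness is the only property of $g^*$ needed and that the cancellation of the $\nabla\log\pi$ terms is legitimate, both of which are immediate.
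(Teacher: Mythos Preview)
Your proposal is correct and matches the paper's proof essentially verbatim: the paper also uses $\beta$-smoothness of $g^*$ to pass to $\beta^2\expect_{\mu_t}\|\nabla\log\frac{\mu_t}{\mu_{kh}}\|_2^2$ and then invokes Lemma~\ref{lem:err_mu_t} with $q=2$. Your additional remarks about why no case analysis is needed and about the role of the diffeomorphism hypothesis are accurate but go beyond what the paper spells out.
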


\begin{proof}
    Note that $g^*$ is $\beta$-smooth and by Lemma \ref{lem:err_mu_t},
    \begin{equation*}
        \begin{aligned}
            \expect_{\mu_t} \bv\bv \nabla g^*(\nabla\log\frac{\pi}{\mu_t}) - \nabla g^*(\nabla\log\frac{\pi}{\mu_{kh}})\bv\bv_2^2
            &\leq \beta^2\expect_{\mu_t} \bv\bv \nabla\log\frac{\mu_t}{\mu_{kh}}\bv\bv_2^2 \\
            &\leq \beta^2\left(M_2(1-hG_2)^{-1}dh\right)^2.
        \end{aligned}
    \end{equation*}
\end{proof}

\begin{lemma}\label{lem:nn_err_lp}
     Suppose that $h< \min\left\{ \frac{1}{4G_p}, \frac{1}{G_2}\right\}$. Under Assumption \ref{asp:error_lp}, \ref{asp:smooth}, for any $t\in [kh, (k+1)h]$,
     \begin{equation*}
        \begin{aligned}
             &\qquad \expect_{\mu_t} \bv\bv \nabla g^*(\nabla\log \frac{\pi}{\mu_{kh}}) - \expect [v_k(X_{kh})| X_t=\cdot] \bv\bv_p^p \\
             &\qquad\qquad \leq \frac{2^{p-1}(1-hG_2)^{-d}}{1-(4G_ph)^p}\varepsilon_k + \frac{2^{p-1}(4G_ph)^p}{1-(4G_ph)^p} \left( (1+c_1)\expect_{\mu_t}\bv\bv \nabla\log\frac{\pi}{\mu_t}\bv\bv_q^q + c_2 \left(M_p(1-hG_2)^{-1}dh\right)^q\right),
        \end{aligned}
     \end{equation*}
     where $c_1, c_2$ are defined in \eqref{eq:def_c1c2}.
\end{lemma}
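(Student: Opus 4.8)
The plan is to reduce the left-hand side to the estimation error $\varepsilon_k$ of Assumption~\ref{asp:error_lp} plus a discretization contribution of size $O(h^p)$, by isolating the neural-net error at the interpolation time $kh$ (where the law is \emph{exactly} $\mu_{kh}$) and absorbing everything else into a self-referential bound that can be closed precisely because the condition $h<1/(4G_p)$ forces the relevant coefficient below one. Throughout I write $F_k:=\nabla g^*(\nabla\log\frac{\pi}{\mu_{kh}})$, $F_t:=\nabla g^*(\nabla\log\frac{\pi}{\mu_t})$, and let $E$ denote the quantity $\expect_{\mu_t}\bv\bv F_k-\expect[v_k(X_{kh})|X_t=\cdot]\bv\bv_p^p$ to be bounded.

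First I would record the geometry of the one-step interpolation map $T_t(x):=x+(t-kh)v_k(x)$ for $t\in[kh,(k+1)h]$. Since $h<1/G_2$, Assumption~\ref{asp:smooth} together with Lemma~\ref{lem:logdet} gives that $T_t$ is an orientation-preserving diffeomorphism with $\|(\nabla T_t)^{-1}\|_{op}\le(1-hG_2)^{-1}$ and $\det\nabla T_t\ge(1-hG_2)^d$, and $\mu_t=(T_t)_\#\mu_{kh}$. In particular $X_{kh}$ is a deterministic function of $X_t$, so $\expect[v_k(X_{kh})|X_t=\cdot]=v_k(T_t^{-1}(\cdot))$, and any $\mu_t$-expectation can be transported to a $\mu_{kh}$-expectation through $T_t$, with the Jacobian bounds contributing a factor $(1-hG_2)^{-d}$. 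I would also use two pointwise facts: $\|X_t-X_{kh}\|_p=(t-kh)\|v_k(X_{kh})\|_p\le h\|v_k(X_{kh})\|_p$, hence by the $G_p$-Lipschitz bound $\|v_k(X_t)-v_k(X_{kh})\|_p\le hG_p\|v_k(X_{kh})\|_p$; and the identity $\bv\bv\nabla g^*(\xi)\bv\bv_p^p=\|\xi\|_q^q$, valid because $g^*(\cdot)=\frac{1}{q}\|\cdot\|_q^q$ makes $\nabla g^*(\xi)=\sgn(\xi)\odot|\xi|^{q-1}$, which turns $\bv\bv F_k\bv\bv_p^p$ into $\bv\bv\nabla\log\frac{\pi}{\mu_{kh}}\bv\bv_q^q$ (and likewise $\bv\bv F_t\bv\bv_p^p=\bv\bv\nabla\log\frac{\pi}{\mu_t}\bv\bv_q^q$).

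Next I would split $F_k(X_t)-v_k(X_{kh})$ so that the neural-net error always sits at time $kh$: the piece that (after transport through $T_t$) equals $\expect_{\mu_{kh}}\bv\bv F_k-v_k\bv\bv_p^p$ contributes the $\varepsilon_k$-term, carrying the Jacobian factor $(1-hG_2)^{-d}$ from the change of variables; the remaining pieces involve only (a) the displacement $v_k(X_t)-v_k(X_{kh})$, bounded via $hG_p\|v_k(X_{kh})\|_p$, and (b) differences of $\nabla g^*$ applied to the scores of $\mu_t$ and $\mu_{kh}$ at a \emph{common} point, to which Lemma~\ref{lem:discrete_err_lp} applies and returns $c_1\expect_{\mu_t}\bv\bv\nabla\log\frac{\pi}{\mu_t}\bv\bv_q^q+c_2(M_p(1-hG_2)^{-1}dh)^q$ with $c_1,c_2$ as in \eqref{eq:def_c1c2}. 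The one remaining quantity is $\expect_{\mu_{kh}}\|v_k\|_p^p$, which I would bound self-referentially: by the triangle inequality against $F_k(X_t)=F_k(T_t(X_{kh}))$ one gets $\expect_{\mu_{kh}}\|v_k\|_p^p\lesssim E+\expect_{\mu_t}\|F_k\|_p^p$, and then $\expect_{\mu_t}\|F_k\|_p^p\lesssim (1+c_1)\expect_{\mu_t}\bv\bv\nabla\log\frac{\pi}{\mu_t}\bv\bv_q^q+c_2(M_p(1-hG_2)^{-1}dh)^q$ by one more application of Lemma~\ref{lem:discrete_err_lp} ($F_k$ versus $F_t$ at the common point $X_t$) and the norm identity. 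Collecting terms and optimizing the splitting constants produces an inequality of the form
\begin{equation*}
    E\ \le\ 2^{p-1}(1-hG_2)^{-d}\varepsilon_k+(4G_ph)^pE+2^{p-1}(4G_ph)^p\Big((1+c_1)\expect_{\mu_t}\bv\bv\nabla\log\tfrac{\pi}{\mu_t}\bv\bv_q^q+c_2\big(M_p(1-hG_2)^{-1}dh\big)^q\Big),
\end{equation*}
and since $h<1/(4G_p)$ makes $(4G_ph)^p<1$, transposing the $E$-term and dividing by $1-(4G_ph)^p$ yields exactly the claimed bound.

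The main obstacle is twofold. Quantitatively, one must track the constants through the self-referential step so that the coefficient of $E$ on the right-hand side is genuinely $<1$; this is exactly what dictates the step-size restriction $h<\min\{1/(4G_p),1/G_2\}$ and the appearance of the prefactor $1/(1-(4G_ph)^p)$. Structurally, $\nabla g^*(\xi)=\sgn(\xi)\odot|\xi|^{q-1}$ is neither globally Lipschitz (for $q>2$) nor differentiable at the origin (for $q<2$), so one cannot naively move $\nabla g^*$ applied to a score from $X_{kh}$ to $X_t$ or from the measure $\mu_{kh}$ to $\mu_t$; every such step must be routed through the delicate pointwise inequalities behind Lemma~\ref{lem:discrete_err_lp} (that is, Lemma~\ref{lem:c1c2}), which is precisely why the problem-dependent constants $c_1,c_2$ of \eqref{eq:def_c1c2}, rather than absolute constants, surface in the final estimate.
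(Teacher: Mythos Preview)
Your proposal is correct and follows essentially the same route as the paper: split $F_k(X_t)-v_k(X_{kh})$ through $v_k(X_t)$, bound the neural-net error by $(1-hG_2)^{-d}\varepsilon_k$ via the density ratio $\mu_t/\mu_{kh}\le(1-hG_2)^{-d}$, control the displacement by $(G_ph)^p\expect\|v_k(X_{kh})\|_p^p$, and close a self-referential inequality using Lemma~\ref{lem:discrete_err_lp}. The only cosmetic difference is that the paper sets up the self-referential loop on $D:=\expect\|v_k(X_t)-v_k(X_{kh})\|_p^p$ (via a four-term split of $\|v_k(X_{kh})\|_p$), whereas you loop on $E$ directly (via two successive two-term splits); both arrangements yield the same constants after the same slack $(G_ph)^p4^{p-1}\le(4G_ph)^p$.
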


\begin{proof}
    By Jensen's inequality, 
    \begin{equation}\label{eq:split}
        \begin{aligned}
             &\expect_{\mu_t} \bv\bv \nabla g^*(\nabla\log \frac{\pi}{\mu_{kh}}) - \expect [v_k(X_{kh})| X_t=\cdot] \bv\bv_p^p \\
             &\qquad\qquad \leq 2^{p-1} \left\{\expect_{\mu_t} \bv\bv \nabla g^*(\nabla\log \frac{\pi}{\mu_{kh}}) - v_k\bv\bv_p^p + \expect_{\mu_t} \bv\bv v_k - \expect [v_k(X_{kh})| X_t=\cdot] \bv\bv_p^p\right\} \\
             &\qquad\qquad \leq 2^{p-1} \left\{\expect_{\mu_t} \bv\bv \nabla g^*(\nabla\log \frac{\pi}{\mu_{kh}}) - v_k\bv\bv_p^p + \expect \bv\bv v_k(X_t) - v_k(X_{kh}) \bv\bv_p^p\right\}.
        \end{aligned}
    \end{equation}
    Under Assumption \ref{asp:smooth} and Jensen's inequality,
    \begin{equation*}
        \begin{aligned}
            &\expect \bv\bv v_k(X_t) - v_k(X_{kh}) \bv\bv_p^p \\
            &\qquad\qquad \leq (G_p)^p\ \expect \|X_t-X_{kh}\|_p^p \\
            &\qquad\qquad \leq (G_ph)^p\ \expect \bv\bv v_k(X_{kh})\bv\bv_p^p \\
            &\qquad\qquad \leq (G_ph)^p 4^{p-1} \left\{ \expect \bv\bv v_k(X_{kh}) - v_k(X_t)\bv\bv_p^p + \expect \bv\bv v_k(X_t) - \nabla g^*\left(\nabla\log \frac{\pi}{\mu_{kh}}(X_t)\right)\bv\bv_p^p \right.\\
            &\qquad\qquad\qquad \left. +\ \expect \bv\bv \nabla g^*\left(\nabla\log \frac{\pi}{\mu_{kh}}(X_t)\right) - \nabla g^*\left(\nabla\log \frac{\pi}{\mu_t}(X_t)\right)\bv\bv_p^p + \expect \bv\bv \nabla g^*\left(\nabla\log \frac{\pi}{\mu_t}(X_t)\right)\bv\bv_p^p  \right\}.
        \end{aligned}
    \end{equation*}
    Rearrange the above inequality and thus   \begin{equation}\label{eq:split2}
        \begin{aligned}
            &\expect \bv\bv v_k(X_t) - v_k(X_{kh}) \bv\bv_p^p \\
            &\qquad\qquad \leq \left(1-(4G_ph)^p\right)^{-1}(4G_ph)^p \left\{ \expect_{\mu_t} \bv\bv v_k - \nabla g^*(\nabla\log \frac{\pi}{\mu_{kh}})\bv\bv_p^p \right.\\
            &\qquad\qquad\qquad \left. +\ \expect_{\mu_t} \bv\bv \nabla g^*\nabla\log \frac{\pi}{\mu_{kh}}) - \nabla g^*(\nabla\log \frac{\pi}{\mu_t})\bv\bv_p^p + \expect_{\mu_t} \bv\bv \nabla g^*(\nabla\log \frac{\pi}{\mu_t})\bv\bv_p^p\right\}.
        \end{aligned}
    \end{equation}
    Again note that $\frac{\mu_t}{\mu_{kh}}=\det\left(I_d+(t-kh)\nabla v_k\right)^{-1}$, so
    \begin{equation*}
        \sup_{x}\frac{\mu_t}{\mu_{kh}}(x) \leq \left(\|(I_d+(t-kh)\nabla v_k(x))^{-1}\|_2 \right)^{d} \leq (1-hG_2)^{-d}.
    \end{equation*}
    Then by Assumption \ref{asp:error_lp}, 
    \begin{equation}\label{eq:err_t}
        \expect_{\mu_t} \bv\bv v_k - \nabla g^*(\nabla\log \frac{\pi}{\mu_{kh}})\bv\bv_p^p \leq (1-hG_2)^{-d}\expect_{\mu_{kh}} \bv\bv v_k - \nabla g^*(\nabla\log \frac{\pi}{\mu_{kh}})\bv\bv_p^p \leq (1-hG_2)^{-d}\varepsilon_k.
    \end{equation}
    Combining Lemma \ref{lem:discrete_err_lp} with \eqref{eq:split2},\eqref{eq:err_t} and plugging them into \eqref{eq:split}, we finish the proof.    
\end{proof}

\begin{lemma}\label{lem:nn_err_l2}
    Suppose that $h< \frac{1}{4G_2}$. Under Assumption \ref{asp:error_l2}, \ref{asp:smooth}, for any $t\in [kh, (k+1)h]$,
    \begin{equation*}
        \begin{aligned}
             &\qquad \expect_{\mu_t} \bv\bv \nabla g^*(\nabla\log \frac{\pi}{\mu_{kh}}) - \expect [v_k(X_{kh})| X_t=\cdot] \bv\bv_2^2 \\
             &\qquad\qquad \leq \frac{2(1-hG_2)^{-d}}{1-(4G_2h)^2}\varepsilon_k + \frac{2(4G_2h)^2\beta^2}{1-(4G_2h)^2} \left(\expect_{\mu_t}\bv\bv \nabla\log\frac{\pi}{\mu_t}\bv\bv_2^2 + \left(M_2(1-hG_2)^{-1}dh\right)^2\right).
        \end{aligned}
     \end{equation*}
\end{lemma}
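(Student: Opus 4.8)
The plan is to mirror the proof of Lemma~\ref{lem:nn_err_lp} specialized to $p=q=2$, with the $\beta$-smoothness of $g^*$ (Assumption~\ref{asp:error_l2}) replacing the elementary entrywise inequalities of Lemma~\ref{lem:c1c2}; this makes the argument strictly simpler than the general $L_p$ case, since $q=2$ removes the $c_1\neq 0$ complication. First I would apply Jensen's inequality: writing $\expect[v_k(X_{kh})\mid X_t=\cdot] = v_k(X_t) - \big(v_k(X_t) - \expect[v_k(X_{kh})\mid X_t=\cdot]\big)$ and using that conditional expectation is an $L^2$-contraction, so $\expect\bv\bv \expect[v_k(X_{kh})\mid X_t=\cdot] - v_k(X_t)\bv\bv_2^2 \le \expect\bv\bv v_k(X_{kh}) - v_k(X_t)\bv\bv_2^2$, I obtain
\begin{equation*}
    \expect_{\mu_t}\bv\bv \nabla g^*\big(\nabla\log\tfrac{\pi}{\mu_{kh}}\big) - \expect[v_k(X_{kh})\mid X_t=\cdot]\bv\bv_2^2 \le 2\,\expect_{\mu_t}\bv\bv \nabla g^*\big(\nabla\log\tfrac{\pi}{\mu_{kh}}\big) - v_k\bv\bv_2^2 + 2\,\expect\bv\bv v_k(X_t) - v_k(X_{kh})\bv\bv_2^2.
\end{equation*}

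Next I would bound the time-discretization term. By $G_2$-Lipschitzness of $v_k$ and $X_t - X_{kh} = (t-kh)v_k(X_{kh})$ with $t-kh\le h$, $\expect\bv\bv v_k(X_t)-v_k(X_{kh})\bv\bv_2^2 \le (G_2h)^2\expect\bv\bv v_k(X_{kh})\bv\bv_2^2$. I then split $\bv\bv v_k(X_{kh})\bv\bv_2^2$ via a four-term Jensen inequality (absorbing the coefficient $4\,(G_2h)^2$ into $(4G_2h)^2$) into $\bv\bv v_k(X_{kh})-v_k(X_t)\bv\bv_2^2$, $\bv\bv v_k(X_t)-\nabla g^*(\nabla\log\tfrac{\pi}{\mu_{kh}})(X_t)\bv\bv_2^2$, $\bv\bv \nabla g^*(\nabla\log\tfrac{\pi}{\mu_{kh}})(X_t)-\nabla g^*(\nabla\log\tfrac{\pi}{\mu_t})(X_t)\bv\bv_2^2$, and $\bv\bv \nabla g^*(\nabla\log\tfrac{\pi}{\mu_t})(X_t)\bv\bv_2^2$. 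The first (self-referential) term reappears on the left, and since $h<\tfrac{1}{4G_2}$ I can move it over to get
\begin{equation*}
    \expect\bv\bv v_k(X_t)-v_k(X_{kh})\bv\bv_2^2 \le \frac{(4G_2h)^2}{1-(4G_2h)^2}\Big( \expect_{\mu_t}\bv\bv v_k - \nabla g^*(\nabla\log\tfrac{\pi}{\mu_{kh}})\bv\bv_2^2 + \expect_{\mu_t}\bv\bv \nabla g^*(\nabla\log\tfrac{\pi}{\mu_{kh}}) - \nabla g^*(\nabla\log\tfrac{\pi}{\mu_t})\bv\bv_2^2 + \expect_{\mu_t}\bv\bv \nabla g^*(\nabla\log\tfrac{\pi}{\mu_t})\bv\bv_2^2 \Big).
\end{equation*}

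Finally I would discharge the three remaining expectations: (i) the change-of-measure bound $\sup_x \tfrac{\mu_t}{\mu_{kh}}(x) = \sup_x \det\!\big(I_d+(t-kh)\nabla v_k(x)\big)^{-1} \le (1-hG_2)^{-d}$ (valid for $h<1/G_2$), combined with Assumption~\ref{asp:error_l2}, bounds the first term by $(1-hG_2)^{-d}\varepsilon_k$; (ii) Lemma~\ref{lem:discrete_err_l2} bounds the second by $\beta^2\big(M_2(1-hG_2)^{-1}dh\big)^2$; (iii) $\beta$-smoothness of $g^*$ together with $\nabla g^*(0)=0$ (as $g^*$ is even) gives $\|\nabla g^*(y)\|_2\le\beta\|y\|_2$, so the third is at most $\beta^2\expect_{\mu_t}\bv\bv\nabla\log\tfrac{\pi}{\mu_t}\bv\bv_2^2$. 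Substituting these, plugging into the first display, and collecting the $\varepsilon_k$ contributions via $1+\tfrac{(4G_2h)^2}{1-(4G_2h)^2}=\tfrac{1}{1-(4G_2h)^2}$ yields exactly the stated bound. The whole computation is routine bookkeeping; the only step needing care is keeping the four-term Jensen split consistent so that the self-referential term can be absorbed, which is precisely where the hypothesis $h<\tfrac{1}{4G_2}$ enters.
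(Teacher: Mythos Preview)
Your proposal is correct and follows essentially the same route as the paper, which simply says to rerun the proof of Lemma~\ref{lem:nn_err_lp} with $p=2$ and invoke Lemma~\ref{lem:discrete_err_l2} in place of Lemma~\ref{lem:discrete_err_lp} at the final step. Your explicit justification that $\nabla g^*(0)=0$ (from evenness of the Young function, hence of $g^*$) to get $\|\nabla g^*(y)\|_2\le\beta\|y\|_2$ is a small but necessary detail that the paper leaves implicit.
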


\begin{proof}
    The procedure is exactly the same with Lemma \ref{lem:nn_err_lp}. The only difference appears when applying Lemma \ref{lem:discrete_err_l2} instead of Lemma \ref{lem:discrete_err_lp} in the last step.
\end{proof}

\begin{lemma}\label{lem:one_step_lp}
    Suppose that $h\leq \min\left\{\frac{1}{36G_p}, \frac{1-2^{-\frac{1}{q}}}{G_2}\right\}$. Under Assumption \ref{asp:error_lp}, \ref{asp:smooth}, for any $t\in (kh, (k+1)h)$,
    \begin{equation*}
        \partial_t \KL(\mu_t \| \pi) \leq -\frac{1}{12}\expect_{\mu_t} \bv\bv \nabla\log\frac{\pi}{\mu_t}\bv\bv_q^q + A_1(M_pdh)^q + A_2(1-hG_2)^{-d}\varepsilon_k,
    \end{equation*}
    where $A_1, A_2$ are constants only depending on $q$:
    \begin{equation}\label{eq:def_A1A2}
        (A_1, A_2) = \left\{
        \begin{array}{ll}
             (\frac{2^{p+2}}{p},\ 2^p) & \text{if}\ q\leq 2, \\
             (\frac{7c_2}{p},\ \frac{3}{p}) & \text{otherwise}.
        \end{array} \right.
    \end{equation}
    Here $c_2$ is defined in \eqref{eq:def_c1c2}.
\end{lemma}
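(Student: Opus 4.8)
The plan is to start from the exact expression for $\partial_t\KL$ in Lemma~\ref{lem:dKL} and to estimate the ``drift error'' using Lemmas~\ref{lem:discrete_err_lp} and \ref{lem:nn_err_lp}. Write $w_t:=\nabla\log\frac{\pi}{\mu_t}$, $E:=\expect_{\mu_t}\|w_t\|_q^q$, $V_k:=\expect[v_k(X_{kh})\mid X_t=\cdot]$ and $e_t:=V_k-\nabla g^*(w_t)$. Because $g^*(\cdot)=\frac1q\|\cdot\|_q^q$ is positively $q$-homogeneous, Euler's identity gives $\langle x,\nabla g^*(x)\rangle=\|x\|_q^q$, so Lemma~\ref{lem:dKL} yields $\partial_t\KL(\mu_t\|\pi)=-E-\expect_{\mu_t}\langle w_t,e_t\rangle$. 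I would then bound the cross term by H\"older in $\reals^d$ and then H\"older over $\mu_t$ with the conjugate exponents $q,p$: $|\expect_{\mu_t}\langle w_t,e_t\rangle|\le\expect_{\mu_t}[\|w_t\|_q\|e_t\|_p]\le E^{1/q}\bigl(\expect_{\mu_t}\|e_t\|_p^p\bigr)^{1/p}$.

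To control $\expect_{\mu_t}\|e_t\|_p^p$, I would split $e_t=\bigl(V_k-\nabla g^*(\nabla\log\tfrac{\pi}{\mu_{kh}})\bigr)+\bigl(\nabla g^*(\nabla\log\tfrac{\pi}{\mu_{kh}})-\nabla g^*(w_t)\bigr)$, pass to $L^p(\mu_t)$-norms, and apply Minkowski's inequality, so that $\bigl(\expect_{\mu_t}\|e_t\|_p^p\bigr)^{1/p}\le B_1^{1/p}+B_2^{1/p}$ with $B_1$ the bound of Lemma~\ref{lem:nn_err_lp} and $B_2$ the bound of Lemma~\ref{lem:discrete_err_lp}. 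Both $B_1$ and $B_2$ are of the form $\gamma E+c\,(M_p(1-hG_2)^{-1}dh)^q+c'(1-hG_2)^{-d}\varepsilon_k$, where the ``self-feedback'' coefficient $\gamma$ is, up to the $1/p$-th root, controlled by a power of $(4G_ph)^p$ (and by $c_1$ in the case of $B_2$). Here the two step-size hypotheses do the work: $h\le\tfrac1{36G_p}$ forces $(4G_ph)^p$, hence (after the $1/p$-th root) the $E$-feedback, below a constant strictly less than $1$ depending only on $q$, while $h\le\tfrac{1-2^{-1/q}}{G_2}$ gives $(1-hG_2)^{-q}\le2$, turning $(M_p(1-hG_2)^{-1}dh)^q$ into $\le2(M_pdh)^q$; one also checks that these bounds imply the hypotheses of Lemmas~\ref{lem:discrete_err_lp} and \ref{lem:nn_err_lp}.

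Substituting back gives $\partial_t\KL(\mu_t\|\pi)\le -E+E^{1/q}B_1^{1/p}+E^{1/q}B_2^{1/p}$. To each of the two residual terms I would apply Young's inequality in the form $E^{1/q}B_i^{1/p}\le\tfrac{s_i^q}{q}E+\tfrac{s_i^{-p}}{p}B_i$ and then distribute $B_i$ over its three components, picking $s_1,s_2$ so that: (i) the coefficient of $(1-hG_2)^{-d}\varepsilon_k$ equals $A_2$; (ii) the coefficient of $(M_pdh)^q$ equals $A_1$; and (iii) after subtracting the remaining $E$-proportional pieces (the $\tfrac{s_i^q}{q}E$ together with the small $(4G_ph)^p$-feedback), the coefficient of $E=\expect_{\mu_t}\|\nabla\log\tfrac{\pi}{\mu_t}\|_q^q$ is still at most $-\tfrac1{12}$. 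Achieving (i)--(iii) with $A_1,A_2$ as in \eqref{eq:def_A1A2} is done by separating the case $q\le2$, where $c_1=0$ by \eqref{eq:def_c1c2} (so the $B_2$-feedback vanishes, but the constants $\tfrac{2^{p+2}}{p},2^p$ are large because $p$ may be large), from the case $q>2$, where $p<2$ makes all the $p$-th powers and $1/p$-th roots mild (so the small constants $\tfrac{7c_2}{p},\tfrac3p$ suffice).

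The main obstacle is precisely this last bookkeeping: one must propagate the constants $2^p$, $c_1$, $c_2$ and the $(4G_ph)^p$-type factors through the nested H\"older--Minkowski--Young steps carefully enough that, after all the $E$-proportional feedback is absorbed, at least $\tfrac1{12}$ of the $-E$ term survives, while the coefficients of $(M_pdh)^q$ and $(1-hG_2)^{-d}\varepsilon_k$ collapse to functions of $q$ alone. Everything else is routine; the thresholds $\tfrac1{36G_p}$ and $\tfrac{1-2^{-1/q}}{G_2}$ are chosen exactly so that this accounting closes.
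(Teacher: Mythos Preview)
Your proposal is correct and follows essentially the same route as the paper: start from Lemma~\ref{lem:dKL}, split the drift error into the ``discretization'' piece (Lemma~\ref{lem:discrete_err_lp}) and the ``neural-net'' piece (Lemma~\ref{lem:nn_err_lp}), apply Young's inequality with two free parameters, and then choose those parameters separately in the cases $q\le 2$ and $q>2$ so that the $E$-feedback is absorbed leaving a $-\tfrac1{12}$ coefficient. The only cosmetic difference is that the paper applies Young directly to each of the two inner products $\langle w_t,\cdot\rangle$ before taking expectations, whereas you first combine into a single $e_t$, apply H\"older, then Minkowski, then Young---this detour is harmless and lands on the same inequality.
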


\begin{proof}
    By Lemma \ref{lem:dKL} and Young's inequality, for any $\lambda_1, \lambda_2 > 0$,
    \begin{equation*}
        \begin{aligned}
            \partial_t \KL(\mu_t \| \pi) 
            &= -\expect_{\mu_t} \left\langle \nabla\log\frac{\pi}{\mu_t}, \nabla g^*(\nabla\log\frac{\pi}{\mu_t})\right\rangle \\
            &\quad + \expect_{\mu_t} \left\langle \nabla\log\frac{\pi}{\mu_t}, \nabla g^*(\nabla\log\frac{\pi}{\mu_t}) - \nabla g^*(\nabla\log\frac{\pi}{\mu_{kh}})\right\rangle \\
            &\quad + \expect_{\mu_t} \left\langle \nabla\log\frac{\pi}{\mu_t}, \nabla g^*(\nabla\log\frac{\pi}{\mu_{kh}}) - \expect[v_k(X_{kh})|X_t=\cdot] \right\rangle \\
            &\leq -(1-\frac{1}{q}\lambda_1^q-\frac{1}{q}\lambda_2^q) \expect_{\mu_t} \bv\bv \nabla\log\frac{\pi}{\mu_t}\bv\bv_q^q \\
            &\quad + \frac{1}{p}\lambda_1^{-p} \expect_{\mu_t} \bv\bv \nabla g^*(\nabla\log\frac{\pi}{\mu_t}) - \nabla g^*(\nabla\log\frac{\pi}{\mu_{kh}}) \bv\bv_p^p\\
            &\quad + \frac{1}{p}\lambda_2^{-p} \expect_{\mu_t} \bv\bv \nabla g^*(\nabla\log\frac{\pi}{\mu_{kh}}) - \expect[v_k(X_{kh})|X_t=\cdot] \bv\bv_p^p.
        \end{aligned}
    \end{equation*}
    Then we apply Lemma \ref{lem:discrete_err_lp} and Lemma \ref{lem:nn_err_lp},
    \begin{equation*}
        \begin{aligned}
            \partial_t \KL(\mu_t \| \pi) 
            &\leq -\left(1-\frac{1}{q}\lambda_1^q-\frac{1}{p}c_1\lambda_1^{-p}-\frac{1}{q}\lambda_2^q - \frac{1}{p}\lambda_2^{-p}\frac{2^{p-1}(4G_ph)^p}{1-(4G_ph)^p}(1+c_1)\right) \expect_{\mu_t} \bv\bv \nabla\log\frac{\pi}{\mu_t}\bv\bv_q^q \\
            &\quad + \frac{c_2}{p}\left(\lambda_1^{-p} + \lambda_2^{-p}\frac{2^{p-1}(4G_ph)^p}{1-(4G_ph)^p}\right) \left( M_p(1-hG_2)^{-1}dh\right)^q + \frac{\lambda_2^{-p}}{p} \frac{2^{p-1}(1-hG_2)^{-d}}{1-(4G_ph)^p}\varepsilon_k.
        \end{aligned}
    \end{equation*}
    
    If $q>2$ so that $c_1 = 3^{-p}<1$, take $\lambda_1 = c_1^{\frac{1}{p+q}}, \lambda_2 = 1$. Note that for $h\leq \frac{1}{36G_p}$, $\frac{2^{p-1}(4G_ph)^p}{1-(4G_ph)^p}(1+c_1)\leq \frac{4G_ph}{1-4G_ph} \cdot \frac{4}{3}\leq \frac{1}{6}$. And thus,
    \begin{equation*}
        \begin{aligned}
            \partial_t \KL(\mu_t \| \pi) 
            &\leq -\left(\frac{2}{3}-\frac{1}{q}-\frac{1}{p}\cdot\frac{1}{6}\right) \expect_{\mu_t} \bv\bv \nabla\log\frac{\pi}{\mu_t}\bv\bv_q^q + \frac{7c_2}{p}(M_pdh)^q + \frac{3}{p}(1-hG_2)^{-d}\varepsilon_k \\
            &\leq -(\frac{5}{6p}-\frac{1}{3}) \expect_{\mu_t} \bv\bv \nabla\log\frac{\pi}{\mu_t}\bv\bv_q^q + \frac{7c_2}{p}(M_pdh)^q + \frac{3}{p}(1-hG_2)^{-d}\varepsilon_k \\
            &\leq -\frac{1}{12} \expect_{\mu_t} \bv\bv \nabla\log\frac{\pi}{\mu_t}\bv\bv_q^q + \frac{7c_2}{p}(M_pdh)^q + \frac{3}{p}(1-hG_2)^{-d}\varepsilon_k.
        \end{aligned}
    \end{equation*}

    If $q\leq 2$ so that $c_1=0$, take $\lambda_1 = \lambda_2 = (\frac{q}{3})^{\frac{1}{q}}$. Note that for $h\leq \frac{1}{36G_p}$, $\frac{2^{p-1}(4G_ph)^p}{1-(4G_ph)^p}(1+c_1)\leq \frac{(8G_ph)^2}{2(1-4G_ph)} \leq \frac{12}{5}G_ph \leq \frac{1}{15}$. And thus,
    \begin{equation*}
        \begin{aligned}
            \partial_t \KL(\mu_t \| \pi) 
            &\leq -\left(\frac{1}{3}-\frac{4}{3p}(\frac{3}{q})^{\frac{q}{p}}G_ph\right) \expect_{\mu_t} \bv\bv \nabla\log\frac{\pi}{\mu_t}\bv\bv_q^q + \frac{c_2q}{p} (\frac{3}{q})^q (M_pdh)^q + \frac{2^p}{p}(\frac{3}{p})^{\frac{q}{p}}(1-hG_2)^{-d}\varepsilon_k \\
            &\leq -\left(\frac{1}{3}-\frac{4}{p}G_ph\right) \expect_{\mu_t} \bv\bv \nabla\log\frac{\pi}{\mu_t}\bv\bv_q^q + \frac{2^{p+2}}{p}(M_pdh)^q + 2^p(1-hG_2)^{-d}\varepsilon_k \\
            &\leq -\frac{1}{6} \expect_{\mu_t} \bv\bv \nabla\log\frac{\pi}{\mu_t}\bv\bv_q^q + \frac{2^{p+2}}{p}(M_pdh)^q + 2^p(1-hG_2)^{-d}\varepsilon_k.
        \end{aligned}
    \end{equation*}
    Therefore, define $A_1,A_2$ as in \eqref{eq:def_A1A2} and we finish the proof.
\end{proof}

\begin{lemma}\label{lem:one_step_l2}
    Suppose that $h\leq \frac{1}{4G_2\sqrt{12\kappa^2+1}}$, where $\kappa:=\frac{\beta}{\alpha}\geq 1$. Under Assumption \ref{asp:error_l2}, \ref{asp:smooth}, for any $t\in (kh, (k+1)h)$,
    \begin{equation*}
       \partial_t \KL(\mu_t \| \pi) \leq -\frac{\alpha}{6} \expect_{\mu_t} \bv\bv \nabla\log\frac{\pi}{\mu_t}\bv\bv_2^2 + \frac{3}{\alpha} (\beta M_2dh)^2 + \frac{4}{\alpha}(1-hG_2)^{-d}\varepsilon_k.
    \end{equation*}
\end{lemma}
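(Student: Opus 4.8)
The plan is to follow the proof of Lemma~\ref{lem:one_step_lp} essentially verbatim with $p=q=2$, replacing the case analysis on $q$ by a single use of strong convexity. First I would apply Lemma~\ref{lem:dKL} and split $\partial_t\KL(\mu_t\|\pi)$ into three inner products against $\nabla\log\frac{\pi}{\mu_t}$: the ideal descent term paired with $\nabla g^*(\nabla\log\frac{\pi}{\mu_t})$, the time-discretization term paired with $\nabla g^*(\nabla\log\frac{\pi}{\mu_t})-\nabla g^*(\nabla\log\frac{\pi}{\mu_{kh}})$, and the neural-net estimation term paired with $\nabla g^*(\nabla\log\frac{\pi}{\mu_{kh}})-\expect[v_k(X_{kh})\mid X_t=\cdot]$. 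Since $g$ is an even (Young) function, $g^*$ is even, so $\nabla g^*(0)=0$; combining the strong-convexity inequality at $y$ and at $0$ then gives $\langle y,\nabla g^*(y)\rangle\ge\alpha\|y\|_2^2$, so the ideal term is at most $-\alpha\,\expect_{\mu_t}\|\nabla\log\frac{\pi}{\mu_t}\|_2^2$.

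Next I would bound the two remaining inner products by Cauchy--Schwarz followed by Young's inequality $\langle a,b\rangle\le\frac{\lambda}{2}\|a\|_2^2+\frac{1}{2\lambda}\|b\|_2^2$ with parameters $\lambda_1,\lambda_2>0$, so that each becomes a multiple of $\expect_{\mu_t}\|\nabla\log\frac{\pi}{\mu_t}\|_2^2$ plus a multiple of a squared-norm error. To the two squared errors I apply Lemma~\ref{lem:discrete_err_l2}, which controls the first one by $\beta^2(M_2(1-hG_2)^{-1}dh)^2$, and Lemma~\ref{lem:nn_err_l2}, which controls the second by a multiple of $\varepsilon_k$ plus $\tfrac{2(4G_2h)^2\beta^2}{1-(4G_2h)^2}$ times $\expect_{\mu_t}\|\nabla\log\frac{\pi}{\mu_t}\|_2^2$ plus a higher-order $(M_2dh)^2$ piece. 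Collecting all terms proportional to $\expect_{\mu_t}\|\nabla\log\frac{\pi}{\mu_t}\|_2^2$ leaves a prefactor of the form $\alpha-\tfrac{\lambda_1+\lambda_2}{2}-\tfrac{(4G_2h)^2\beta^2}{\lambda_2(1-(4G_2h)^2)}$.

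The remainder is constant-chasing. I would take $\lambda_1=\lambda_2$ equal to a fixed fraction of $\alpha$ (e.g., $\alpha/3$) and then use the step-size bound $h\le\frac{1}{4G_2\sqrt{12\kappa^2+1}}$, which forces $(4G_2h)^2\le\frac{1}{12\kappa^2+1}$ and hence $\tfrac{(4G_2h)^2\beta^2}{1-(4G_2h)^2}\le\tfrac{\alpha^2}{12}$ (using $\beta=\kappa\alpha$); this makes the prefactor of $\expect_{\mu_t}\|\nabla\log\frac{\pi}{\mu_t}\|_2^2$ at least $\alpha/6$. The same bound keeps $(1-hG_2)^{-2}$ below a small constant and makes the extra $(4G_2h)^2(M_2dh)^2$ contribution from Lemma~\ref{lem:nn_err_l2} negligible compared with $(M_2dh)^2$, so the discretization error collapses to at most $\tfrac{3}{\alpha}(\beta M_2dh)^2$ and the estimation error to at most $\tfrac{4}{\alpha}(1-hG_2)^{-d}\varepsilon_k$; it also implies $h<\tfrac{1}{4G_2}$, the hypothesis of Lemma~\ref{lem:nn_err_l2}, since $\kappa\ge1$. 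I expect no real obstacle here: the only mildly delicate point is calibrating $\lambda_1,\lambda_2$ and the numerical slack in the step-size constraint so that the three advertised constants $\tfrac16$, $\tfrac3\alpha$, $\tfrac4\alpha$ all come out at once, and conceptually this argument is strictly simpler than the $L_p$ one because there is no branching on the value of $q$.
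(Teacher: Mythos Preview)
Your proposal is correct and matches the paper's proof essentially step for step: the same three-term decomposition via Lemma~\ref{lem:dKL}, strong convexity for the ideal term, Young's inequality on the two error inner products, and then Lemmas~\ref{lem:discrete_err_l2} and~\ref{lem:nn_err_l2} to control them. The only cosmetic difference is the choice of the Young parameters: the paper takes $\lambda_1=\lambda_2=\sqrt{\alpha/2}$ in its $\tfrac{1}{2}\lambda^2\|a\|^2+\tfrac{1}{2}\lambda^{-2}\|b\|^2$ parametrization (i.e., $\alpha/2$ in yours), whereas you suggest $\alpha/3$; both give a prefactor at least $\alpha/6$ and the advertised constants $3/\alpha$, $4/\alpha$ after using $(4G_2h)^2/(1-(4G_2h)^2)\le 1/(12\kappa^2)$.
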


\begin{proof}
    Similar to Lemma \ref{lem:one_step_lp}, under $g^*$ is $\alpha$-strongly convex,
    \begin{equation*}
        \begin{aligned}
            \partial_t \KL(\mu_t \| \pi) 
            &= -\expect_{\mu_t} \left\langle \nabla\log\frac{\pi}{\mu_t}, \nabla g^*(\nabla\log\frac{\pi}{\mu_t})\right\rangle \\
            &\quad + \expect_{\mu_t} \left\langle \nabla\log\frac{\pi}{\mu_t}, \nabla g^*(\nabla\log\frac{\pi}{\mu_t}) - \nabla g^*(\nabla\log\frac{\pi}{\mu_{kh}})\right\rangle \\
            &\quad + \expect_{\mu_t} \left\langle \nabla\log\frac{\pi}{\mu_t}, \nabla g^*(\nabla\log\frac{\pi}{\mu_{kh}}) - \expect[v_k(X_{kh})|X_t=\cdot] \right\rangle \\
            &\leq -(\alpha-\frac{1}{2}\lambda_1^2-\frac{1}{2}\lambda_2^2) \expect_{\mu_t} \bv\bv \nabla\log\frac{\pi}{\mu_t}\bv\bv_2^2 \\
            &\quad + \frac{1}{2}\lambda_1^{-2} \expect_{\mu_t} \bv\bv \nabla g^*(\nabla\log\frac{\pi}{\mu_t}) - \nabla g^*(\nabla\log\frac{\pi}{\mu_{kh}}) \bv\bv_2^2\\
            &\quad + \frac{1}{2}\lambda_2^{-2} \expect_{\mu_t} \bv\bv \nabla g^*(\nabla\log\frac{\pi}{\mu_{kh}}) - \expect[v_k(X_{kh})|X_t=\cdot] \bv\bv_2^2.
        \end{aligned}
    \end{equation*}
    Then we apply Lemma \ref{lem:discrete_err_l2} and Lemma \ref{lem:nn_err_l2},
    \begin{equation*}
        \begin{aligned}
            \partial_t \KL(\mu_t \| \pi) 
            &\leq -\left(\alpha-\frac{1}{2}\lambda_1^2-\frac{1}{2}\lambda_2^2 - \lambda_2^{-2}\frac{(4G_2h)^2\beta^2}{1-(4G_2h)^2}\right) \expect_{\mu_t} \bv\bv \nabla\log\frac{\pi}{\mu_t}\bv\bv_2^2 \\
            &\quad + \frac{1}{2}\left(\lambda_1^{-2} + \lambda_2^{-2}\frac{2(4G_2h)^2}{1-(4G_2h)^2}\right) \left(\beta M_2(1-hG_2)^{-1}dh\right)^2 + \frac{\lambda_2^{-2}}{2} \frac{2(1-hG_2)^{-d}}{1-(4G_2h)^2}\varepsilon_k.
        \end{aligned}
    \end{equation*}
    Take $\lambda_1=\lambda_2=\sqrt{\frac{\alpha}{2}}$. Note that for $4G_2h\leq \frac{1}{\sqrt{12\kappa^2+1}}$, we have $\frac{(4G_2h)^2}{1-(4G_2h)^2}\leq \frac{1}{12\kappa^2}$. And thus,
    \begin{equation*}
        \partial_t \KL(\mu_t \| \pi) \leq -\frac{\alpha}{6} \expect_{\mu_t} \bv\bv \nabla\log\frac{\pi}{\mu_t}\bv\bv_2^2 + \frac{3}{\alpha} (\beta M_2dh)^2 + \frac{4}{\alpha}(1-hG_2)^{-d}\varepsilon_k.
    \end{equation*}
\end{proof}

\subsection{Proof of Main Results}

\begin{thm}\label{thm:lp}
    Under Assumption \ref{asp:error_lp}, \ref{asp:smooth}, for any step size $h\leq \min\left\{\frac{1}{36G_p}, \frac{1-2^{-\frac{1}{q}}}{G_2}\right\}$, it holds that
    \begin{equation*}
        \frac{1}{Nh}\int_0^{Nh} \expect_{\mu_t} \bv\bv \nabla\log\frac{\pi}{\mu_t}\bv\bv_q^q dt \leq 12\left(\frac{\KL(\mu_0\|\pi)}{Nh} + A_1(M_pdh)^q + A_2(1-hG_2)^{-d}\frac{\sum_{k=0}^{N-1}\varepsilon_k}{N}\right),
    \end{equation*}
    where $A_1, A_2$ defined in \eqref{eq:def_A1A2} are constants that only depend on $q$. 
    
    Additionally, if $\KL(\mu_0\|\pi)\leq K_0$, then for $N\gtrsim \frac{K_0\left(G_p\vee (qG_2)\right)^{q+1}}{qA_1(M_pd)^q}$, we can choose $h\asymp (\frac{K_0}{qA_1(M_pd)^qN})^{\frac{1}{q+1}}\wedge\frac{1}{dG_2}$. The following bound holds:
    \begin{equation*}
         \expect_{\bar{\mu}_{Nh}} \bv\bv \nabla\log\frac{\pi}{\bar{\mu}_{Nh}}\bv\bv_q^q = \Tilde{\mathcal{O}}\left((\frac{M_pK_0d}{N})^{\frac{q}{q+1}} + \frac{G_2K_0d}{N} + \frac{\sum_{k=0}^{N-1}\varepsilon_k}{N}\right).
    \end{equation*}
    Here $\Tilde{\mathcal{O}}(\cdot)$ hides all the constant factors that only depend on $q$.
\end{thm}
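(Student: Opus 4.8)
The plan is to obtain the time-averaged bound by integrating the one-step descent inequality of Lemma~\ref{lem:one_step_lp} along the grid, then to transfer the estimate to the averaged law $\bar\mu_{Nh}$ by a convexity (Jensen) argument, and finally to tune the step size $h$. First I would note that the assumed range $h\le\min\{\frac{1}{36G_p},\frac{1-2^{-1/q}}{G_2}\}$ is exactly the hypothesis of Lemma~\ref{lem:one_step_lp}, so for every $k$ and every $t\in(kh,(k+1)h)$,
\[
\partial_t\KL(\mu_t\|\pi)\le-\frac{1}{12}\expect_{\mu_t}\|\nabla\log\frac{\pi}{\mu_t}\|_q^q+A_1(M_pdh)^q+A_2(1-hG_2)^{-d}\varepsilon_k .
\]
Since $h<1/G_2$, the interpolation map $x\mapsto x+(t-kh)v_k(x)$ is a diffeomorphism, so $\mu_t$ has a smooth density, $t\mapsto\KL(\mu_t\|\pi)$ is $C^1$ on each $(kh,(k+1)h)$ and continuous at the grid points, hence absolutely continuous on $[0,Nh]$; integrating over each subinterval and summing over $k=0,\dots,N-1$, the left-hand side telescopes to $\KL(\mu_{Nh}\|\pi)-\KL(\mu_0\|\pi)$. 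Dropping the nonnegative term $\KL(\mu_{Nh}\|\pi)$ and rearranging yields
\[
\frac{1}{12}\int_0^{Nh}\expect_{\mu_t}\|\nabla\log\frac{\pi}{\mu_t}\|_q^q\,dt\le\KL(\mu_0\|\pi)+Nh\,A_1(M_pdh)^q+hA_2(1-hG_2)^{-d}\sum_{k=0}^{N-1}\varepsilon_k,
\]
and dividing by $Nh$ and multiplying by $12$ gives the first displayed bound.

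Next I would pass from the time average to $\bar\mu_{Nh}=\frac{1}{Nh}\int_0^{Nh}\mu_t\,dt$. The key point is that the generalized Fisher information $I_q(\mu):=\expect_\mu\|\nabla\log\frac{\pi}{\mu}\|_q^q=\int\|\nabla\mu-\mu\nabla\log\pi\|_q^q\,\mu^{1-q}\,dx$ is a convex functional of $\mu$: pointwise its integrand is $F(\mu(x),\nabla\mu(x))$ with $F(b,a)=\|a-b\nabla\log\pi(x)\|_q^q/b^{q-1}$, and since $u\mapsto|u|^q$ is convex its perspective $(u,b)\mapsto|u|^q/b^{q-1}$ is jointly convex on $\R\times\R_{>0}$; summing over coordinates and precomposing with the affine map $(b,a)\mapsto(b,a-b\nabla\log\pi(x))$ keeps joint convexity, while $\mu\mapsto(\mu(x),\nabla\mu(x))$ is linear, so integrating over $x$ preserves convexity. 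Jensen's inequality then gives $\expect_{\bar\mu_{Nh}}\|\nabla\log\frac{\pi}{\bar\mu_{Nh}}\|_q^q=I_q(\bar\mu_{Nh})\le\frac{1}{Nh}\int_0^{Nh}I_q(\mu_t)\,dt$, which combined with the bound above (and $\KL(\mu_0\|\pi)\le K_0$) yields
\[
\expect_{\bar\mu_{Nh}}\|\nabla\log\frac{\pi}{\bar\mu_{Nh}}\|_q^q\le 12\Big(\frac{K_0}{Nh}+A_1(M_pdh)^q+A_2(1-hG_2)^{-d}\frac{\sum_{k=0}^{N-1}\varepsilon_k}{N}\Big).
\]

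Finally I would optimize $\frac{K_0}{Nh}+A_1(M_pd)^qh^q$ in $h$: the unconstrained minimizer is $h^\star\asymp\big(\frac{K_0}{qNA_1(M_pd)^q}\big)^{1/(q+1)}$, at which both terms are of order $\Tilde{\mathcal{O}}\big((\frac{M_pK_0d}{N})^{q/(q+1)}\big)$ with constants depending only on $q$. To keep $(1-hG_2)^{-d}$ bounded (by $(1-1/d)^{-d}\le e$) I take $h\asymp h^\star\wedge\frac{1}{dG_2}$; the stated lower bound $N\gtrsim\frac{K_0(G_p\vee qG_2)^{q+1}}{qA_1(M_pd)^q}$ makes $h^\star$ respect $h\le\min\{\frac{1}{36G_p},\frac{1-2^{-1/q}}{G_2}\}$, so the choice is admissible. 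Using $\frac{K_0}{Nh}\le\frac{K_0}{Nh^\star}+\frac{K_0dG_2}{N}$ and $A_1(M_pd)^qh^q\le A_1(M_pd)^q(h^\star)^q$, and absorbing all $q$-only constants into $\Tilde{\mathcal{O}}(\cdot)$, gives
\[
\expect_{\bar\mu_{Nh}}\|\nabla\log\frac{\pi}{\bar\mu_{Nh}}\|_q^q=\Tilde{\mathcal{O}}\Big((\frac{M_pK_0d}{N})^{\frac{q}{q+1}}+\frac{G_2K_0d}{N}+\frac{\sum_{k=0}^{N-1}\varepsilon_k}{N}\Big),
\]
as claimed.

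The arithmetic of the first and third steps is routine once Lemma~\ref{lem:one_step_lp} is in hand. I expect the main non-mechanical ingredient to be the convexity of $I_q$ in the second step: it is exactly what lets one control $\expect_{\bar\mu_{Nh}}\|\nabla\log\frac{\pi}{\bar\mu_{Nh}}\|_q^q$ by a time average of the per-time Fisher informations, and it rests on the perspective-function observation rather than on bookkeeping. A secondary, minor technical point is justifying that $t\mapsto\KL(\mu_t\|\pi)$ is absolutely continuous on $[0,Nh]$, so that summing the integrals of $\partial_t\KL$ across grid points is legitimate; this follows from the diffeomorphism property of the interpolation map under $h<1/G_2$.
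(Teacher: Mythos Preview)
Your proposal is correct and follows essentially the same route as the paper: integrate Lemma~\ref{lem:one_step_lp} over each subinterval, telescope, use convexity of $\mu\mapsto\expect_\mu\|\nabla\log\frac{\pi}{\mu}\|_q^q$ (via the perspective function) to pass to $\bar\mu_{Nh}$, and then tune $h$. Your treatment is in fact slightly more careful than the paper's in two places: you spell out the perspective-function convexity argument explicitly (the paper just states that $(a,b)\mapsto g^*(a/b)b$ is convex), and you flag the absolute-continuity of $t\mapsto\KL(\mu_t\|\pi)$ needed to telescope, which the paper leaves implicit. One tiny slip: $(1-1/d)^{-d}$ is not bounded by $e$ but rather decreases to $e$ from above (e.g.\ it equals $4$ at $d=2$); it is nonetheless $O(1)$ uniformly in $d$, so this does not affect the $\Tilde{\mathcal{O}}$ conclusion.
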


\begin{proof}
    Under Lemma \ref{lem:one_step_lp}, take integral of both sides from $kh$ to $(k+1)h$ and we obtain
    \begin{equation*}
        \KL(\mu_{(k+1)h}\|\pi) - \KL(\mu_{kh}\|\pi) \leq -\frac{1}{12}\int_{kh}^{(k+1)h} \expect_{\mu_t} \bv\bv \nabla\log\frac{\pi}{\mu_t}\bv\bv_q^q dt + A_1(M_pdh)^qh + A_2(1-hG_2)^{-d}\varepsilon_kh.
    \end{equation*}
    Rearranging it and summing from $0$ to $N-1$,
    \begin{equation}\label{eq:lp_bound}
        \frac{1}{Nh}\int_0^{Nh} \expect_{\mu_t} \bv\bv \nabla\log\frac{\pi}{\mu_t}\bv\bv_q^q dt \leq 12\left(\frac{\KL(\mu_0\|\pi)}{Nh} + A_1(M_pdh)^q + A_2(1-hG_2)^{-d}\frac{\sum_{k=0}^{N-1}\varepsilon_k}{N}\right).
    \end{equation}

    Note that for any convex function $g^*$ on $\reals^d$, $(a,b)\mapsto g^*(a/b)b$ is also convex on $\reals^d\times \reals_+$. Therefore, $\mu\mapsto\expect_{\mu}g^*(\nabla\log\frac{\pi}{\mu})$ is convex in the classical sense on the space of probability measures. And thus
    \begin{equation}
        \expect_{\bar{\mu}_{Nh}} \bv\bv \nabla\log\frac{\pi}{\bar{\mu}_{Nh}}\bv\bv_q^q\leq \frac{1}{Nh}\int_0^{Nh} \expect_{\mu_t} \bv\bv \nabla\log\frac{\pi}{\mu_t}\bv\bv_q^q dt.
    \end{equation}
    We finish the proof by plugging the step size $h$ in \eqref{eq:lp_bound} and hiding all the constants that only depend on $q$. 
\end{proof}

\begin{thm}\label{thm:l2}
    Under Assumption \ref{asp:error_l2}, \ref{asp:smooth}, for any step size $h\leq \frac{1}{4G_2\sqrt{12\kappa^2+1}}$, where $\kappa:=\frac{\beta}{\alpha}\geq 1$, it holds that
    \begin{equation*}
        \frac{1}{Nh}\int_0^{Nh} \expect_{\mu_t} \bv\bv \nabla\log\frac{\pi}{\mu_t}\bv\bv_2^2 dt \leq \frac{6}{\alpha}\left(\frac{\KL(\mu_0\|\pi)}{Nh} + \frac{3}{\alpha}\beta^2M_2^2d^2h^2 + \frac{4}{\alpha}(1-hG_2)^{-d}\frac{\sum_{k=0}^{N-1}\varepsilon_k}{N}\right).
    \end{equation*}

    For simplicity, assume $\alpha=1$. If and $\KL(\mu_0\|\pi)\leq K_0$, then we can choose $h\asymp (\frac{K_0}{(\kappa M_2d)^2N})^{\frac{1}{3}}\wedge\frac{1}{dG_2}\wedge\frac{1}{\kappa G_2}$. The following bound holds:
    \begin{equation*}
         \expect_{\bar{\mu}_{Nh}} \bv\bv \nabla\log\frac{\pi}{\bar{\mu}_{Nh}}\bv\bv_2^2 = \mathcal{O}\left((\frac{\kappa M_2K_0d}{N})^{\frac{2}{3}} + \frac{G_2K_0(d+\kappa)}{N} + \frac{\sum_{k=0}^{N-1}\varepsilon_k}{N}\right).
    \end{equation*}
\end{thm}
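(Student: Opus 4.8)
The plan is to mirror the proof of Theorem~\ref{thm:lp} almost verbatim, replacing the one-step $L_p$ descent estimate by its $L_2$ counterpart, Lemma~\ref{lem:one_step_l2}. First observe that the stated step-size restriction $h\leq \frac{1}{4G_2\sqrt{12\kappa^2+1}}$ is exactly the hypothesis of Lemma~\ref{lem:one_step_l2}, so for every $k$ and every $t\in(kh,(k+1)h)$,
\begin{equation*}
  \partial_t \KL(\mu_t\|\pi)\;\leq\; -\frac{\alpha}{6}\,\expect_{\mu_t}\|\nabla\log\tfrac{\pi}{\mu_t}\|_2^2 \;+\; \frac{3}{\alpha}(\beta M_2 d h)^2 \;+\; \frac{4}{\alpha}(1-hG_2)^{-d}\varepsilon_k .
\end{equation*}
Integrating over $t\in[kh,(k+1)h]$ turns the left-hand side into the telescoping increment $\KL(\mu_{(k+1)h}\|\pi)-\KL(\mu_{kh}\|\pi)$, while each of the two error terms (constant in $t$) merely gains a factor $h$.

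Next I would isolate $\tfrac{\alpha}{6}\int_{kh}^{(k+1)h}\expect_{\mu_t}\|\nabla\log\frac{\pi}{\mu_t}\|_2^2\,dt$ and sum over $k=0,\dots,N-1$. The KL increments telescope to $\KL(\mu_0\|\pi)-\KL(\mu_{Nh}\|\pi)\leq\KL(\mu_0\|\pi)$ by nonnegativity of KL, and dividing through by $Nh$ yields the first displayed bound of the theorem. To pass from the time-averaged score dissipation to $\expect_{\bar\mu_{Nh}}\|\nabla\log\frac{\pi}{\bar\mu_{Nh}}\|_2^2$, I would reuse the convexity step from Theorem~\ref{thm:lp}: since $(v,m)\mapsto\|v\|_2^2/m$ is jointly convex on $\reals^d\times\reals_+$ (the perspective of $\|v\|_2^2$) and $\mu\mapsto m\nabla\log\pi-\nabla\mu$ is linear, the functional $\mu\mapsto\expect_\mu\|\nabla\log\frac{\pi}{\mu}\|_2^2$ is convex on $\mathcal{P}(\reals^d)$, so Jensen applied to the mixture $\bar\mu_{Nh}=\frac{1}{Nh}\int_0^{Nh}\mu_t\,dt$ gives $\expect_{\bar\mu_{Nh}}\|\nabla\log\frac{\pi}{\bar\mu_{Nh}}\|_2^2\leq\frac{1}{Nh}\int_0^{Nh}\expect_{\mu_t}\|\nabla\log\frac{\pi}{\mu_t}\|_2^2\,dt$.

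It then remains to tune $h$. With $\alpha=1$ (hence $\beta=\kappa$) I would take $h\asymp\big(\frac{K_0}{(\kappa M_2 d)^2N}\big)^{1/3}\wedge\frac{1}{dG_2}\wedge\frac{1}{\kappa G_2}$; call the three quantities in this minimum $a_1,a_2,a_3$. Admissibility must be checked: since $\sqrt{12\kappa^2+1}\asymp\kappa$, the constraint $h\leq\frac{1}{4G_2\sqrt{12\kappa^2+1}}$ is implied up to an absolute constant by $h\lesssim\frac{1}{\kappa G_2}=a_3$; and $h\leq a_2=\frac{1}{dG_2}$ forces $hG_2\leq\frac1d$, so $(1-hG_2)^{-d}\leq(1-\frac1d)^{-d}=\mathcal{O}(1)$, which collapses that prefactor. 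Plugging in, $\frac{K_0}{Nh}\leq\frac{K_0}{N}\big(\frac{1}{a_1}+\frac{1}{a_2}+\frac{1}{a_3}\big)=\mathcal{O}\big((\frac{\kappa M_2 d K_0}{N})^{2/3}+\frac{G_2K_0(d+\kappa)}{N}\big)$, while $\beta^2M_2^2d^2h^2\leq\kappa^2M_2^2d^2a_1^2=\mathcal{O}\big((\frac{\kappa M_2 d K_0}{N})^{2/3}\big)$; together with the $\frac{\sum_k\varepsilon_k}{N}$ term this is exactly the claimed rate.

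The main obstacle is not conceptual — all the genuinely technical estimates live in the supporting lemmas (the change-of-variables bound on $\nabla\log\frac{\mu_t}{\mu_{kh}}$ of Lemma~\ref{lem:err_mu_t}, the $\beta$-smoothness transfer of Lemma~\ref{lem:discrete_err_l2}, the neural-net error propagation of Lemma~\ref{lem:nn_err_l2}, all assembled in Lemma~\ref{lem:one_step_l2}) and are already available. The delicate points are purely bookkeeping: verifying that the three competing upper bounds on $h$ are simultaneously feasible, confirming that $(1-hG_2)^{-d}$ really is $\mathcal{O}(1)$ under the chosen $h$, and carefully tracking the extra $\kappa$ factors that enter through the strong convexity/smoothness of $g^*$ (via the choice $\lambda_1=\lambda_2=\sqrt{\alpha/2}$ in Lemma~\ref{lem:one_step_l2}) so that they land in the $\frac{G_2K_0(d+\kappa)}{N}$ term rather than degrading the leading $N^{-2/3}$ rate.
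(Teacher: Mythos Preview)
Your proposal is correct and follows essentially the same route as the paper: invoke Lemma~\ref{lem:one_step_l2}, integrate over each subinterval, telescope the KL increments, and then appeal to the convexity-of-Fisher-information step and the step-size tuning exactly as in the proof of Theorem~\ref{thm:lp}. If anything, you have spelled out more of the bookkeeping (admissibility of $h$, the $(1-hG_2)^{-d}=\mathcal{O}(1)$ bound, the three-way split of $\tfrac{K_0}{Nh}$) than the paper does, which simply writes ``The remaining part is similar to the proof of Theorem~\ref{thm:lp}.''
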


\begin{proof}
    Under Lemma \ref{lem:one_step_l2}, take integral of both sides from $kh$ to $(k+1)h$ and we obtain
    \begin{equation*}
        \KL(\mu_{(k+1)h}\|\pi) - \KL(\mu_{kh}\|\pi) \leq -\frac{\alpha}{6}\int_{kh}^{(k+1)h} \expect_{\mu_t} \bv\bv \nabla\log\frac{\pi}{\mu_t}\bv\bv_2^2 dt + \frac{3}{\alpha}\beta^2M_2^2d^2h^3 + \frac{4}{\alpha}(1-hG_2)^{-d}\varepsilon_kh.
    \end{equation*}
    Rearranging it and summing from $0$ to $N-1$,
    \begin{equation*}
        \frac{1}{Nh}\int_0^{Nh} \expect_{\mu_t} \bv\bv \nabla\log\frac{\pi}{\mu_t}\bv\bv_2^2 dt \leq \frac{6}{\alpha}\left(\frac{\KL(\mu_0\|\pi)}{Nh} + \frac{3}{\alpha}\beta^2M_2^2d^2h^2 + \frac{4}{\alpha}(1-hG_2)^{-d}\frac{\sum_{k=0}^{N-1}\varepsilon_k}{N}\right).
    \end{equation*}
    
    The remaining part is similar to the proof of Theorem \ref{thm:lp}.
\end{proof}

\subsection{Discussions}\label{app_subsec:kl}
The convergence of score divergence only guarantees that the particle distribution gets the local structure of $\pi$ correct \citep{Balasubramanian2022TowardsAT}. To obtain a stronger convergence guarantee, we still need isoperimetry condition of target distribution. We start with $L_2$-GF.
\begin{thm}\label{thm:LSI}
    If we additionally assume that $\pi$ satisfies \textit{log-Sobolev inequality} with constant $\lambda$, \ie
    \begin{equation*}
        \text{Ent}_\pi(f^2)\leq \frac{2}{\lambda} \expect_{\pi} [\|\nabla f\|_2^2],\ \text{for all smooth}\ f:\reals^d\rightarrow\reals.
    \end{equation*}
    then under the same conditions of Theorem \ref{thm:l2} with $\alpha=\beta=1, \varepsilon_k\leq \epsilon$, it holds that
    \begin{equation*}
         \KL(\mu_{Nh}\|\pi)\leq \exp(-\frac{\lambda Nh}{3}) \KL(\mu_{0}\|\pi) + 3\left(3(M_2dh)^2 + 4(1-hG_2)^{-d}\epsilon\right)\lambda^{-1}.
    \end{equation*}
    In particular, if $\epsilon\lesssim (\frac{M_2}{G_2})^2$, we take $h\asymp \frac{\sqrt{\epsilon}}{M_2d}$ and then we obtain the guarantee $\KL(\mu_{Nh}\|\pi)\lesssim  \lambda^{-1}\epsilon$ after 
    \begin{equation*}
        N = \mathcal{O}\left(\frac{M_2d}{\lambda\sqrt{\epsilon}}\log\frac{\lambda\KL(\mu_0\|\pi)}{\epsilon}\right)\quad\quad \text{iterations}.
    \end{equation*}
\end{thm}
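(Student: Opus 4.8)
The plan is to feed the one-step descent estimate of Lemma~\ref{lem:one_step_l2} into the log-Sobolev inequality so as to obtain a linear differential inequality for $\KL(\mu_t\|\pi)$, and then close it by a Gr\"onwall/integrating-factor argument on each step followed by a geometric telescoping. First I would specialize Lemma~\ref{lem:one_step_l2} to $\alpha=\beta=1$ (so $\kappa=1$): for every $t\in(kh,(k+1)h)$,
\begin{equation*}
    \partial_t \KL(\mu_t\|\pi) \leq -\frac{1}{6}\expect_{\mu_t}\bv\bv\nabla\log\frac{\pi}{\mu_t}\bv\bv_2^2 + 3(M_2dh)^2 + 4(1-hG_2)^{-d}\varepsilon_k.
\end{equation*}
Applying the assumed LSI to $f=\sqrt{\mu_t/\pi}$ gives the equivalent relative-Fisher-information bound $\expect_{\mu_t}\bv\bv\nabla\log\frac{\pi}{\mu_t}\bv\bv_2^2 \geq 2\lambda\,\KL(\mu_t\|\pi)$; substituting and writing $a_k:=3(M_2dh)^2+4(1-hG_2)^{-d}\varepsilon_k$, this turns into
\begin{equation*}
    \partial_t \KL(\mu_t\|\pi) \leq -\frac{\lambda}{3}\,\KL(\mu_t\|\pi) + a_k, \qquad t\in(kh,(k+1)h).
\end{equation*}

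Next I would multiply by the integrating factor $e^{\lambda t/3}$ and integrate over $[kh,(k+1)h]$ (using continuity of $t\mapsto\KL(\mu_t\|\pi)$ at the endpoints) to get
\begin{equation*}
    \KL(\mu_{(k+1)h}\|\pi) \leq e^{-\lambda h/3}\KL(\mu_{kh}\|\pi) + \frac{3a_k}{\lambda}\big(1-e^{-\lambda h/3}\big).
\end{equation*}
Using $\varepsilon_k\le\epsilon$, hence $a_k\le a:=3(M_2dh)^2+4(1-hG_2)^{-d}\epsilon$, and iterating over $k=0,\dots,N-1$ while summing the geometric series $\sum_{j=0}^{N-1}e^{-\lambda jh/3}$, the telescoping collapses to
\begin{equation*}
    \KL(\mu_{Nh}\|\pi) \leq e^{-\lambda Nh/3}\KL(\mu_0\|\pi) + \frac{3a}{\lambda}\big(1-e^{-\lambda Nh/3}\big) \leq e^{-\lambda Nh/3}\KL(\mu_0\|\pi) + \frac{3a}{\lambda},
\end{equation*}
which is exactly the stated bound after substituting the definition of $a$.

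For the complexity corollary I would take $h\asymp \sqrt{\epsilon}/(M_2 d)$, so that $(M_2dh)^2\asymp\epsilon$; the hypothesis $\epsilon\lesssim(M_2/G_2)^2$ then forces $h\lesssim 1/(dG_2)$, which simultaneously (i) meets the step-size restriction $h\le 1/(4G_2\sqrt{13})$ of Theorem~\ref{thm:l2} with $\kappa=1$ (for $d\ge1$, up to the implied constant) and (ii) keeps $(1-hG_2)^{-d}=\mathcal{O}(1)$, so that $a\lesssim\epsilon$ and the residual term obeys $\tfrac{3a}{\lambda}\lesssim\epsilon/\lambda$. It remains to make the transient term at most $\epsilon/\lambda$, i.e.\ $e^{-\lambda Nh/3}\KL(\mu_0\|\pi)\lesssim\epsilon/\lambda$, which holds once $Nh\gtrsim\tfrac{3}{\lambda}\log\tfrac{\lambda\KL(\mu_0\|\pi)}{\epsilon}$; plugging in $h\asymp\sqrt\epsilon/(M_2 d)$ yields $N=\mathcal{O}\!\big(\tfrac{M_2 d}{\lambda\sqrt\epsilon}\log\tfrac{\lambda\KL(\mu_0\|\pi)}{\epsilon}\big)$. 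The argument is largely routine given Lemma~\ref{lem:one_step_l2}; the two points needing care are verifying that the entropy-form LSI converts, with exactly the right constant, into the gradient-of-log-ratio form used above (a one-line computation with $f=\sqrt{\mu_t/\pi}$), and checking that a single choice of $h$ meets every invoked step-size constraint while also bounding $(1-hG_2)^{-d}$ by an absolute constant. I expect this constant-chasing, rather than any conceptual difficulty, to be the most delicate part.
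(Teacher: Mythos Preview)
Your proposal is correct and follows essentially the same route as the paper: specialize Lemma~\ref{lem:one_step_l2} to $\alpha=\beta=1$, convert the Fisher-information term into $2\lambda\,\KL(\mu_t\|\pi)$ via the LSI, apply Gr\"onwall on each subinterval, and telescope the resulting geometric recursion. Your write-up is in fact more explicit than the paper's (which omits the LSI substitution detail, the geometric-series summation, and the complexity bookkeeping), but the argument is the same.
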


\begin{rmk}
    We match the SOTA rate of LMC under \textit{log-Sobolev} inequality, Hessian smoothness and dissipativity assumption \citep{mou2022improved}. The assumption on smoothness of target Hessian is known to accelerate convergence rate \citep{dalalyan2019user}. But here we do not assume the smoothness of $\log\pi$ explicitly and thus our method can tackle more complex distributions.
\end{rmk}

\begin{proof}
    For $t\in(kh, (k+1)h)$, we apply Lemma \ref{lem:one_step_l2} and \textit{log-Sobolev inequality} and thus
    \begin{equation}
        \partial_t \KL(\mu_t \| \pi) \leq -\frac{\lambda}{3} \KL(\mu_t \| \pi) + 3(M_2dh)^2 + 4(1-hG_2)^{-d}\epsilon.
    \end{equation}
    By Gronwall's inequality,
    \begin{equation}
        \KL(\mu_{(k+1)h}\|\pi)\leq e^{-\lambda h/3} \KL(\mu_{kh}\|\pi) + 3\lambda^{-1}\left(3(M_2dh)^2 + 4(1-hG_2)^{-d}\epsilon\right)(1-e^{-\lambda h/3}).
    \end{equation}
    Iterating the recursive bound,
    \begin{equation}
        \KL(\mu_{Nh}\|\pi)\leq \exp(-\frac{\lambda Nh}{3}) \KL(\mu_{0}\|\pi) + 3\left(3(M_2dh)^2 + 4(1-hG_2)^{-d}\epsilon\right)\lambda^{-1}.
    \end{equation}
\end{proof}

To further interpret our results with GWG under $W_p$ metric, we assume that the target distribution satisfies \textit{modified log-Sobolev inequality}, which has been considered in many classical works \citep{adamczak2017moment, bobkov2000brunn, barthe2008modified}.

\begin{definition}[ \textit{modified log-Sobolev inequality}]
    For $q>1$, we say $\pi$ satisfies the \textit{modified log-Sobolev inequality} mLSI($q, \lambda_q$) if the following holds:
    \begin{equation*}
        \text{Ent}_\pi(|f|^q)\leq \frac{q^{q-1}}{\lambda_q} \expect_{\pi} [\|\nabla f\|_q^q],\ \text{for all smooth}\ f:\reals^d\rightarrow\reals.
    \end{equation*}
\end{definition}

Note that mLSI($2, \lambda_2$) reduces to the conventional \textit{log-Sobolev inequality} with constant $\lambda_2$. As a direct corollary of this inequality, for any distribution $\mu$, we take $f=(\frac{\mu}{\pi})^{1/q}$ and thus
\begin{equation}\label{eq:coro_mLSI}
    \KL(\mu\|\pi) \leq \frac{1}{q\lambda_q} \expect_{\mu} \bv\bv\nabla\log\frac{\pi}{\mu} \bv\bv_q^q.
\end{equation}

\begin{thm}\label{thm:mLSI}
    If we additionally assume that $\pi$ satisfies \eqref{eq:coro_mLSI},
    then under the same conditions of Theorem \ref{thm:lp} with $\varepsilon_k\leq \epsilon$, it holds that
    \begin{equation*}
        \KL(\mu_{Nh}\|\pi)\leq \exp(-\frac{q\lambda_q Nh}{12}) \KL(\mu_{0}\|\pi) + 12\left(A_1(M_pdh)^q + A_2(1-hG_2)^{-d}\epsilon\right)(q\lambda_q)^{-1}.
    \end{equation*}
    In particular, if $\epsilon\lesssim \min\left\{(\frac{M_p}{G_2})^q, (\frac{dM_p}{qG_2})^q\right\}$, we take $h\asymp \frac{\epsilon^{1/q}}{M_pd}$ and then we obtain the guarantee $\KL(\mu_{Nh}\|\pi)\lesssim \lambda_q^{-1}\epsilon$ after 
    \begin{equation*}
        N = \Tilde{\mathcal{O}}\left(\frac{M_pd}{\lambda_q\epsilon^{1/q}}\log\frac{\lambda_q\KL(\mu_0\|\pi)}{\epsilon}\right)\quad\quad \text{iterations}.
    \end{equation*}
    Here $\Tilde{\mathcal{O}}(\cdot)$ hides all the constant factors that only depend on $q$.
\end{thm}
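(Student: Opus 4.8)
The plan is to follow the proof of Theorem~\ref{thm:LSI} almost verbatim, substituting the modified log-Sobolev inequality for the ordinary one. The starting point is the one-step descent estimate of Lemma~\ref{lem:one_step_lp}: for $t\in(kh,(k+1)h)$ and $\varepsilon_k\le\epsilon$,
\begin{equation*}
    \partial_t \KL(\mu_t\|\pi) \le -\frac{1}{12}\expect_{\mu_t}\bv\bv\nabla\log\frac{\pi}{\mu_t}\bv\bv_q^q + A_1(M_pdh)^q + A_2(1-hG_2)^{-d}\epsilon.
\end{equation*}
I would then invoke the corollary \eqref{eq:coro_mLSI} of mLSI, namely $\expect_{\mu_t}\bv\bv\nabla\log\frac{\pi}{\mu_t}\bv\bv_q^q \ge q\lambda_q\,\KL(\mu_t\|\pi)$, to turn this into the linear differential inequality
\begin{equation*}
    \partial_t \KL(\mu_t\|\pi) \le -\frac{q\lambda_q}{12}\KL(\mu_t\|\pi) + A_1(M_pdh)^q + A_2(1-hG_2)^{-d}\epsilon .
\end{equation*}

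Next I would apply Grönwall's inequality on each interval $[kh,(k+1)h]$ to obtain the one-step contraction
\begin{equation*}
    \KL(\mu_{(k+1)h}\|\pi) \le e^{-q\lambda_q h/12}\KL(\mu_{kh}\|\pi) + \frac{12}{q\lambda_q}\bigl(A_1(M_pdh)^q + A_2(1-hG_2)^{-d}\epsilon\bigr)\bigl(1-e^{-q\lambda_q h/12}\bigr),
\end{equation*}
and then iterate this recursion from $k=0$ to $N-1$. Writing $r=e^{-q\lambda_q h/12}$, the accumulated additive term is $b(1-r)\sum_{j=0}^{N-1}r^j = b(1-r^N)\le b$, which immediately gives the claimed bound
$\KL(\mu_{Nh}\|\pi)\le \exp(-q\lambda_q Nh/12)\KL(\mu_0\|\pi) + 12\bigl(A_1(M_pdh)^q + A_2(1-hG_2)^{-d}\epsilon\bigr)(q\lambda_q)^{-1}$.

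For the ``in particular'' statement I would substitute $h\asymp \epsilon^{1/q}/(M_pd)$. Then the discretization term satisfies $A_1(M_pdh)^q\asymp A_1\epsilon$, and one checks that $(1-hG_2)^{-d}=\mathcal O(1)$ provided $hG_2d=\mathcal O(1)$, i.e.\ $\epsilon\lesssim (M_p/G_2)^q$; the second hypothesis $\epsilon\lesssim (dM_p/(qG_2))^q$ ensures $h$ also respects the step-size ceiling $\min\{1/(36G_p),(1-2^{-1/q})/G_2\}$ of Lemma~\ref{lem:one_step_lp} and Theorem~\ref{thm:lp} (using $1-2^{-1/q}\asymp 1/q$). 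Under these choices the additive part is $\lesssim \lambda_q^{-1}\epsilon$, and the exponential part is $\lesssim \lambda_q^{-1}\epsilon$ as soon as $Nh\gtrsim \frac{12}{q\lambda_q}\log\frac{\lambda_q\KL(\mu_0\|\pi)}{\epsilon}$; since $1/h\asymp M_pd/\epsilon^{1/q}$, this is $N=\Tilde{\mathcal O}\bigl(\frac{M_pd}{\lambda_q\epsilon^{1/q}}\log\frac{\lambda_q\KL(\mu_0\|\pi)}{\epsilon}\bigr)$, absorbing all $q$-dependent constants into $\Tilde{\mathcal O}$.

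The argument is essentially a transcription of Theorem~\ref{thm:LSI}, so there is no real conceptual obstacle; the only delicate point is the bookkeeping of the smallness conditions on $\epsilon$. One must verify that the single choice $h\asymp \epsilon^{1/q}/(M_pd)$ simultaneously lies below all the step-size thresholds required for Lemma~\ref{lem:one_step_lp} to apply, keeps $(1-hG_2)^{-d}$ bounded, and makes $A_1(M_pdh)^q$ of order $\epsilon$, while tracking the $q$-dependence of the various constants through these requirements.
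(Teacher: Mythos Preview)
Your proposal is correct and follows essentially the same argument as the paper: apply Lemma~\ref{lem:one_step_lp}, substitute the mLSI corollary \eqref{eq:coro_mLSI} to obtain a linear differential inequality, apply Gr\"onwall on each subinterval, and iterate. Your treatment of the ``in particular'' part is in fact more detailed than the paper's, which simply states the conclusion after iterating the recursive bound.
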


\begin{proof}
    For $t\in(kh, (k+1)h)$, we apply Lemma \ref{lem:one_step_lp} and \eqref{eq:coro_mLSI} and thus
    \begin{equation}
        \partial_t \KL(\mu_t \| \pi) \leq -\frac{q\lambda_q}{12} \KL(\mu_t \| \pi) + A_1(M_pdh)^q + A_2(1-hG_2)^{-d}\epsilon.
    \end{equation}
    By Gronwall's inequality,
    \begin{equation}
        \KL(\mu_{(k+1)h}\|\pi)\leq e^{-q\lambda_q h/12} \KL(\mu_{kh}\|\pi) + 12(q\lambda_q)^{-1}\left(A_1(M_pdh)^q + A_2(1-hG_2)^{-d}\epsilon\right)(1-e^{-q\lambda_q h/12}).
    \end{equation}
    Iterating the recursive bound,
    \begin{equation}
        \KL(\mu_{Nh}\|\pi)\leq \exp(-\frac{q\lambda_q Nh}{12}) \KL(\mu_{0}\|\pi) + 12\left(A_1(M_pdh)^q + A_2(1-hG_2)^{-d}\epsilon\right)(q\lambda_q)^{-1}.
    \end{equation}
\end{proof}

\begin{rmk}
    mLSI($q, \lambda_q$) cannot hold for $q>2$ as mentioned in \citet{barthe2008modified}. However, we only need \eqref{eq:coro_mLSI} to hold for all $\mu=\mu_{t}$ to prove Theorem \ref{thm:mLSI}. Plus, \cite{gentil2005modified, gentil2007modified} replace $\|\cdot\|_q^q$ with $\max\{\|\cdot\|_2^2, \|\cdot\|_q^q\}$ and show that mLSI can hold for a class of distributions in this way. We leave this for future work.
\end{rmk}

\begin{rmk}
    Theorem \ref{thm:mLSI} illustrates how the choice of $q$ will influence the convergence rate of ParVI. On one hand, larger $q$ would reduce the complexity dependence on $\epsilon$. On the other hand, it is generally difficult to predict how $\lambda_q$ will change with $q$ . Besides, large $q$ would also increase the difficulty to train the neural net and obtain a well-estimated direction. Overall, it is challenging to determine the optimal $q$ and thus our adaptive method can present significant advantages.
\end{rmk}

\subsection{Technical Lemmas}
\begin{lemma}\label{lem:logdet}
    For any two matrices $A,B\in \reals^{d\times d}$ with positive eigenvalues, the following holds:
    \begin{equation*}
        |\log\det A - \log\det B| \leq d\|A-B\|_2 \max\{\|A^{-1}\|_2, \|B^{-1}\|_2\}
    \end{equation*}
\end{lemma}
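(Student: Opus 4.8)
The plan is to bypass any matrix‑interpolation argument and reduce the claim to a per‑coordinate estimate on singular values. First I would record the trivial but essential observation that the hypothesis forces $\det A=\prod_i\lambda_i(A)>0$ and $\det B>0$, so both matrices are invertible. For any invertible $M\in\reals^{d\times d}$ with $\det M>0$, writing a singular value decomposition $M=U\Sigma V^\top$ with singular values $\sigma_1(M)\ge\cdots\ge\sigma_d(M)>0$ and using $|\det M|=\prod_i\sigma_i(M)$ together with $\det U\det V=\mathrm{sgn}(\det M)=1$ gives the identity $\log\det M=\sum_{i=1}^d\log\sigma_i(M)$. Applying this to $A$ and to $B$ and using the triangle inequality,
\begin{equation*}
    |\log\det A-\log\det B| \le \sum_{i=1}^d\bigl|\log\sigma_i(A)-\log\sigma_i(B)\bigr|.
\end{equation*}

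Next I would bound each summand. By Weyl's perturbation inequality for singular values (with the $\sigma_i$ listed in decreasing order), $|\sigma_i(A)-\sigma_i(B)|\le\|A-B\|_2$ for every $i$. Combining this with the elementary bound $|\log x-\log y|\le|x-y|/\min\{x,y\}$ for $x,y>0$ yields
\begin{equation*}
    \bigl|\log\sigma_i(A)-\log\sigma_i(B)\bigr| \le \frac{\|A-B\|_2}{\min\{\sigma_i(A),\sigma_i(B)\}} \le \frac{\|A-B\|_2}{\min\{\sigma_{\min}(A),\sigma_{\min}(B)\}},
\end{equation*}
where the last step uses $\sigma_i(\cdot)\ge\sigma_{\min}(\cdot)$. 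Since $\sigma_{\min}(A)=\|A^{-1}\|_2^{-1}$ and likewise for $B$, we have $\min\{\sigma_{\min}(A),\sigma_{\min}(B)\}^{-1}=\max\{\|A^{-1}\|_2,\|B^{-1}\|_2\}$; summing the $d$ identical bounds over $i$ produces exactly the claimed inequality.

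I do not expect a genuine obstacle here; the only thing to be careful about is the choice of route. The tempting alternative — differentiating $t\mapsto\log\det\bigl(B+t(A-B)\bigr)$ along the segment from $B$ to $A$ and writing $\log\det A-\log\det B=\int_0^1\Tr\!\bigl((B+t(A-B))^{-1}(A-B)\bigr)\,dt$ — is not legitimate for general non‑symmetric matrices, since $\det(B+t(A-B))$ can vanish or change sign for $t\in(0,1)$ even when $\det A,\det B>0$, and moreover bounding $\|(B+t(A-B))^{-1}\|_2$ by $\max\{\|A^{-1}\|_2,\|B^{-1}\|_2\}$ would itself need the convexity that fails without symmetry. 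The singular‑value argument avoids all of this and uses only $\det A,\det B>0$, which the eigenvalue hypothesis supplies; it is also a global estimate, so it applies verbatim in Lemmas~\ref{lem:err_mu_t} and \ref{lem:nn_err_lp}, where $A$ and $B$ are perturbations of the identity.
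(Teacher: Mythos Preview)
Your proof is correct and takes a genuinely different route from the paper's. The paper argues asymmetrically: it writes $\log\det A-\log\det B=\log\det\bigl(I+(A-B)B^{-1}\bigr)$, factors this determinant over the (possibly complex) eigenvalues $\lambda_j$ of $(A-B)B^{-1}$, pairs complex conjugates to keep every factor real and positive, and then bounds $\sum_j\log(1+|\lambda_j|)\le d\,\rho\bigl((A-B)B^{-1}\bigr)\le d\|A-B\|_2\|B^{-1}\|_2$; swapping the roles of $A$ and $B$ yields the other direction and hence the $\max$. Your singular-value argument bypasses the complex-eigenvalue bookkeeping entirely and delivers the symmetric bound in a single pass, at the price of invoking Weyl's perturbation inequality for singular values. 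Both proofs use only invertibility and positivity of the determinants from the hypothesis, and your remark that the segment-interpolation route is unsafe for non-symmetric $A,B$ is exactly the reason neither argument takes it.
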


\begin{proof}
   Suppose that the eigenvalues of real matrix $(A-B)B^{-1}$ are $\lambda_1, \overline{\lambda}_1,\cdots, \lambda_k, \overline{\lambda}_k\in \complex,\lambda_{2k+1}, \cdots \lambda_d\in \reals$. Here $\overline{\lambda_j}$ is the complex conjugate of $\lambda_j$. Then it holds that: 
    \begin{equation*}
        \begin{aligned}
            \log\det A - \log\det B
            &= \log\det (I + (A-B)B^{-1}) \\
            &= \log\prod_{j=1}^d(1+\lambda_j) \\
            &\leq \sum_{j=1}^k \log(1+\lambda_j)(1+\overline{\lambda_j}) + \sum_{j=2k+1}^d \log(1+|\lambda_j|) \\
            &\leq \sum_{j=1}^d \log(1+|\lambda_j|) \\
            &\leq d\|(A-B)B^{-1}\|_2
        \end{aligned}
    \end{equation*}
    Similarly, we have $\log\det B - \log\det A\leq d\|(B-A)A^{-1}\|_2$ and thus we finish the proof.
\end{proof}

\begin{lemma}\label{lem:c1c2}
    Define non-negative constants $c_1,c_2$ as:
    \begin{equation*}
        (c_1, c_2) = \left\{
         \begin{array}{ll}
             (0,\ 2^{p-q}) & \text{if}\ q\leq 2, \\
             \left(3^{-p},\ \min\left\{3^{q-p}(\frac{p}{q})^{\frac{q-p}{q-1}}(1-\frac{p}{q})^{\frac{q-p}{p}}, (q-1)^p\left(\frac{(\frac{4}{3})^{\frac{1}{q-1}}}{(\frac{4}{3})^{\frac{1}{q-1}}-1}\right)^{q-p} \right\}\right) & \text{otherwise}.
         \end{array} \right.
    \end{equation*}
    Then for any $a, b\in \reals$, the following inequality holds:
    \begin{equation*}
        \bv \sgn(a)|a|^{q-1} - \sgn(b)|b|^{q-1} \bv^p \leq c_1 |a|^q + c_2 |a-b|^q
    \end{equation*}
\end{lemma}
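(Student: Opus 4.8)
The plan is to reduce to a one–variable inequality and then split into cases. Write $\phi(t):=\sgn(t)\,|t|^{q-1}$. Since $\phi$ is odd and positively homogeneous of degree $q-1$, and $(q-1)p=q$, both sides of the claim are positively homogeneous of degree $q$ in $(a,b)$ and invariant under $(a,b)\mapsto(-a,-b)$; so it suffices to treat $a=0$ (where the inequality reads $|b|^{q}\le c_2|b|^{q}$, needing only $c_2\ge 1$) and $a=1$, for which the goal becomes $|1-\phi(b)|^{p}\le c_1+c_2|1-b|^{q}$ for every $b\in\reals$.

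If $q\le 2$ (so $0<q-1\le 1$ and $c_1=0$), I would first prove the Hölder estimate $|\phi(a)-\phi(b)|\le 2^{\,2-q}|a-b|^{q-1}$: for $a,b$ of the same sign this is subadditivity of $t\mapsto t^{q-1}$ on $[0,\infty)$, and for opposite signs it follows from concavity of $t\mapsto t^{q-1}$ via the power–mean inequality, which supplies the factor $2^{1-(q-1)}$. Raising to the power $p$ and using $(q-1)p=q$ and $(2-q)p=p-q$ gives $|\phi(a)-\phi(b)|^{p}\le 2^{\,p-q}|a-b|^{q}$, i.e. $(c_1,c_2)=(0,2^{\,p-q})$; as $p\ge q$ in this range this also settles $a=0$.

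The substantive case is $q>2$ (so $q-1>1$, $1<p<2$, $q-p>0$), with $a=1$. I would split on the size of $|1-\phi(b)|$. If $|1-\phi(b)|\le\tfrac13$, then $|1-\phi(b)|^{p}\le 3^{-p}=c_1|a|^{q}$ and the inequality holds with any $c_2\ge 0$. Otherwise $b$ lies outside the interval $I:=\bigl[(2/3)^{1/(q-1)},(4/3)^{1/(q-1)}\bigr]$ (which contains $1$), and the task is to bound $\bigl(|1-\phi(b)|^{p}-3^{-p}\bigr)/|1-b|^{q}\le G(b):=|1-\phi(b)|^{p}/|1-b|^{q}$ by $c_2$ there. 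Differentiating $\log G$, one checks that $G$ is increasing on $(0,1)$, decreasing on $(1,\infty)$, and $\le 1$ on $(-\infty,0]$ (its derivative changing sign only at $|b|=1$), so $\sup_{b\notin I}G$ is the larger of $1$ and the two endpoint values $G\bigl((2/3)^{1/(q-1)}\bigr)$, $G\bigl((4/3)^{1/(q-1)}\bigr)$; a short concavity argument shows the second endpoint dominates, yielding $\sup_{b\notin I}G=3^{-p}/\bigl((4/3)^{1/(q-1)}-1\bigr)^{q}$. I would then relax this via the mean–value bound $\tfrac{\beta^{q-1}-1}{\beta-1}\le(q-1)\beta^{q-2}$ at $\beta=(4/3)^{1/(q-1)}$ (legitimate since $t\mapsto t^{q-1}$ is convex) to get the closed form $(q-1)^{p}\bigl((4/3)^{1/(q-1)}/((4/3)^{1/(q-1)}-1)\bigr)^{q-p}$; a different use of the mean–value bound combined with a weighted Young inequality $X^{(q-2)p}Y^{p}\le\tfrac{(q-2)p}{q}\lambda^{q/((q-2)p)}X^{q}+\tfrac{p}{q}\lambda^{-q/p}Y^{q}$ (with $\lambda$ tuned so the $X$–term equals $3^{-p}$) produces the other expression in the stated $c_2$.

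I expect the main obstacle to be this last step: correctly locating where the one–variable ratio on $\{b\notin I\}$ is largest — at an endpoint of $I$, versus the interior ``hump'' between $1$ and the right endpoint of $I$, versus the limit $|b|\to\infty$ where the ratio tends to $1$ — handling the $-3^{-p}$ subtraction near the endpoints (where numerator and denominator both vanish), and then matching everything to clean closed-form constants via the mean–value and Young steps. The symmetry reduction, the $q\le 2$ Hölder argument, and the $a=0$ case are all routine by comparison.
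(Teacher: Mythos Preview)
For $q\le 2$ and for the opposite-sign case when $q>2$, your argument is the paper's. Where you diverge is the same-sign case for $q>2$: the paper does not study the ratio $G(b)=|1-\phi(b)|^{p}/|1-b|^{q}$ globally. After normalizing to $a=1$ it simply cuts $b>0$ at $1$ and at $\beta:=(4/3)^{1/(q-1)}$ (so only the right half of your interval $I$ appears). On $(0,1]$ it applies the mean-value bound $1-b^{q-1}\le(q-1)(1-b)$ and then maximizes $(q-1)^{p}x^{p}-c_{2}x^{q}$ over $x=1-b$; this optimization is exactly your weighted-Young step with $X=|a|=1$ and $Y=1-b$, and it produces the first closed form for $c_{2}$. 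On $(1,\beta]$ the bound by $c_{1}=3^{-p}$ is immediate. On $(\beta,\infty)$ it uses $b^{q-1}-1\le(q-1)b^{q-2}(b-1)$ together with the monotonicity of $b\mapsto b/(b-1)$, giving the second form. Your monotonicity analysis of $G$ is correct and nicely pins down $b=\beta$ as the worst point of the unsubtracted ratio off $I$, but because you then drop the $-3^{-p}$ term you only recover the (looser) second form from that route; to obtain the first form you are pushed back to the mean-value-plus-Young step on $(0,1]$, which is precisely the paper's argument there (note your Young inequality with a genuine $X^{q}$ term would give $b^{q}$, not $|a|^{q}$, for $b>1$, so it really only applies on $(0,1]$ with $X=1$). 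In short, both approaches reach the stated constants; the paper's three-piece split is more economical since each piece feeds one of the two forms of $c_{2}$ directly, while your route buys a cleaner global picture of where $G$ peaks at the cost of a redundant step.
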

\begin{proof}
    We shall prove each of the two cases separately.
    
    (1) \textbf{Case} $q\leq 2$.
    
    If $a, b$ have the same sign, we assume they are positive without loss of generality. Then $|a^{q-1} - b^{q-1}| \leq |a-b|^{q-1}$, which implies $|a^{q-1} - b^{q-1}|^p \leq |a-b|^q$.

    If $a, b$ have different signs, we assume $a\geq 0, b<0$ without loss of generality. Then by H{\"o}lder inequality $a^{q-1} + (-b)^{q-1} \leq 2^{2-q} |a-b|^{q-1}$, \ie, $|a^{q-1} - (-b)^{q-1}|^p \leq 2^{p-q}|a-b|^q$.

    (2) \textbf{Case} $q>2$.
    
    If $a, b$ have different signs, we assume $a\geq 0, b<0$ without loss of generality. Then $|a^{q-1} + (-b)^{q-1}| \leq |a-b|^{q-1}$, which implies $|a^{q-1} - b^{q-1}|^p \leq |a-b|^q$.

    If $a, b$ have the same sign, we assume they are positive without loss of generality. Note that this inequality is homogeneous, we can let $a=1$ so that we only need to show for any $b>0$,
    \begin{equation}\label{eq:a=1}
        \bv 1 - b^{q-1} \bv^p \leq c_1 + c_2 |1-b|^q.
    \end{equation}
    
    If $b\leq1$, then by simple calculus,
    \begin{equation*}
        \begin{aligned}
             (1-b^{q-1})^p-c_2(1-b)^q
             &\leq \left[(q-1)(1-b)\right]^p-c_2(1-b)^q \\
             &\leq c_2^{-\frac{p}{q-p}}(q-1)^{\frac{q+p}{q-p}}(\frac{p}{q})^{\frac{p}{q-p}}(1-\frac{p}{q}) \\
             &\leq c_1.
        \end{aligned}
    \end{equation*}
    
    If $1<b\leq \left(1+c_1^{1/p}\right)^{\frac{1}{q-1}}=(\frac{4}{3})^{\frac{1}{q-1}}$, then \eqref{eq:a=1} is trivial.
    
    If $b>(\frac{4}{3})^{\frac{1}{q-1}}$,
    \begin{equation*}
        \begin{aligned}
            (b^{q-1}-1)^p - c_2(b-1)^q 
            &= (b-1)^p [(\frac{b^{q-1}-1}{b-1})^p-c_2(b-1)^{q-p}] \\
            &\leq (b-1)^p \left([(q-1)b^{q-2}]^p-c_2(b-1)^{q-p}\right) \\
            &\leq 0.
        \end{aligned}
    \end{equation*}
    The last inequality is due to $c_2\geq (q-1)^p\left(\frac{(\frac{4}{3})^{\frac{1}{q-1}}}{(\frac{4}{3})^{\frac{1}{q-1}}-1}\right)^{q-p}\geq (q-1)^p\left(\frac{b}{b-1}\right)^{q-p}$.
\end{proof}

\section{Proof of Proposition \ref{prop:ada}}\label{app_sec:prop_ada}
\begin{proof}
    Note that $g(\cdot)=\frac{1}{p}\|\cdot\|_p^p$ and thus $\nabla g^*(\cdot)=|\cdot|^{q-1}\odot sgn(\cdot)$. By Young's inequality,
    \begin{equation}\label{eq:dKL_continuous}
        \begin{aligned}
            \partial_t \KL(\mu_t \| \pi) 
            &= -\expect_{\mu_t} \left\langle \nabla\log\frac{\pi}{\mu_t}, \nabla g^*(\nabla\log\frac{\pi}{\mu_t})\right\rangle \\
            &\quad + \expect_{\mu_t} \left\langle \nabla\log\frac{\pi}{\mu_t}, \nabla g^*(\nabla\log\frac{\pi}{\mu_t}) - f_t\right\rangle \\
            &\leq -(1-\frac{1}{q}\lambda_1^q) \expect_{\mu_t} \bv\bv \nabla\log\frac{\pi}{\mu_t}\bv\bv_q^q \\
            &\quad + \frac{1}{p}\lambda_1^{-p} \expect_{\mu_t} \bv\bv \nabla g^*(\nabla\log\frac{\pi}{\mu_t}) - f_t \bv\bv_p^p.
        \end{aligned}
    \end{equation}
    where $\lambda_1$ could by any positive scalar. Here we set $\lambda_1=1$ and finish the proof.
\end{proof}

\begin{rmk}
    \eqref{eq:A(p)} is not the only way to adaptively choose optimal $p$. In fact, \eqref{eq:dKL_continuous} provides a wide range of methods based on different choices of $\lambda_1$ and thus we can obtain different objectives for $p$. We leave it for future work.
\end{rmk}

\section{Additional Details of Experiments}\label{sec:exp_setup}

\subsection{Gaussian Mixture}
We follow the same setting as \citet{dong2023particlebased}. The marginal probability
of each cluster is 1/10. The number of particles is 1000. For $L_2$-GF, PFG and Ada-GWG, we parameterize $f_w$ as 3-layer neural networks with $\textit{tanh}$ activation function. Each hidden layer has $32$ neurons. The inner loop iteration is 5 and we use SGD optimizer with Nesterov momentum (momentum 0.9) to train $f_w$ with learning rate $\eta$=1e-3. The particle step size is $0.1$.

For PFG, following \citet{dong2023particlebased}, we set the preconditioning matrix $H={\hat{H}}^{\alpha}$, where $\hat{H}$ is the inverse of diagonal variance of particles and $\alpha$ is $ 1.0$.

For Ada-GWG, we set the initial exponent $p_0=2$ and learning rate $\Tilde{\eta}= 2.5$e-7. 

Figure \ref{fig:mog_quant} shows some quantitative comparisons between different algorithms. Here Exp-GF represents GWG with $g(\cdot) = \exp(\|\cdot\|_2^2/(2\sigma^2))-1$. The results are the averaged after 10 random trials. We can observe that Ada-GWG can obtain highly-accurated samples within fewer iterations.

\begin{figure}[H]
    \centering
    \hspace{-3mm}
    \subfigure[JS divergence]
    {\includegraphics[clip,width=0.5\textwidth]{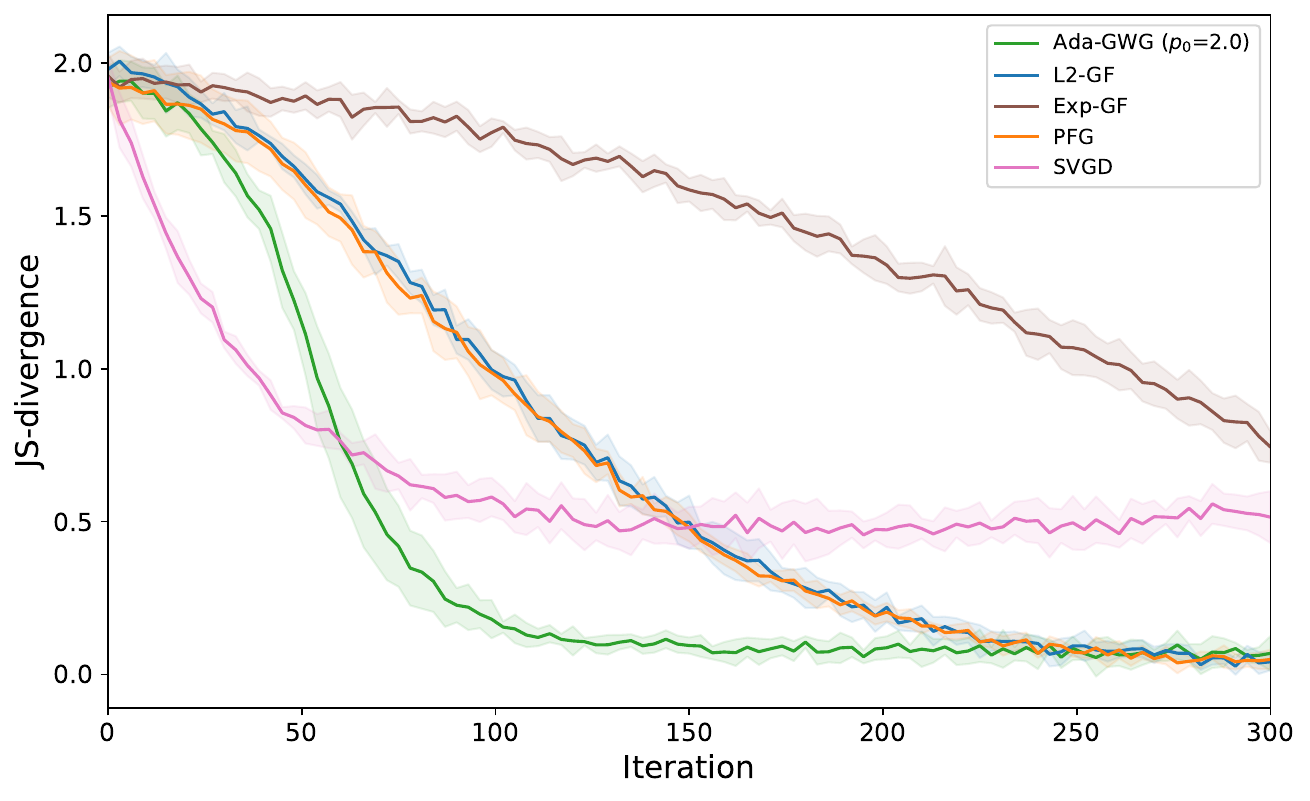}} \hspace{-3mm}
    \subfigure[Evolution of $p$]
    {\includegraphics[clip,width=0.5\textwidth]{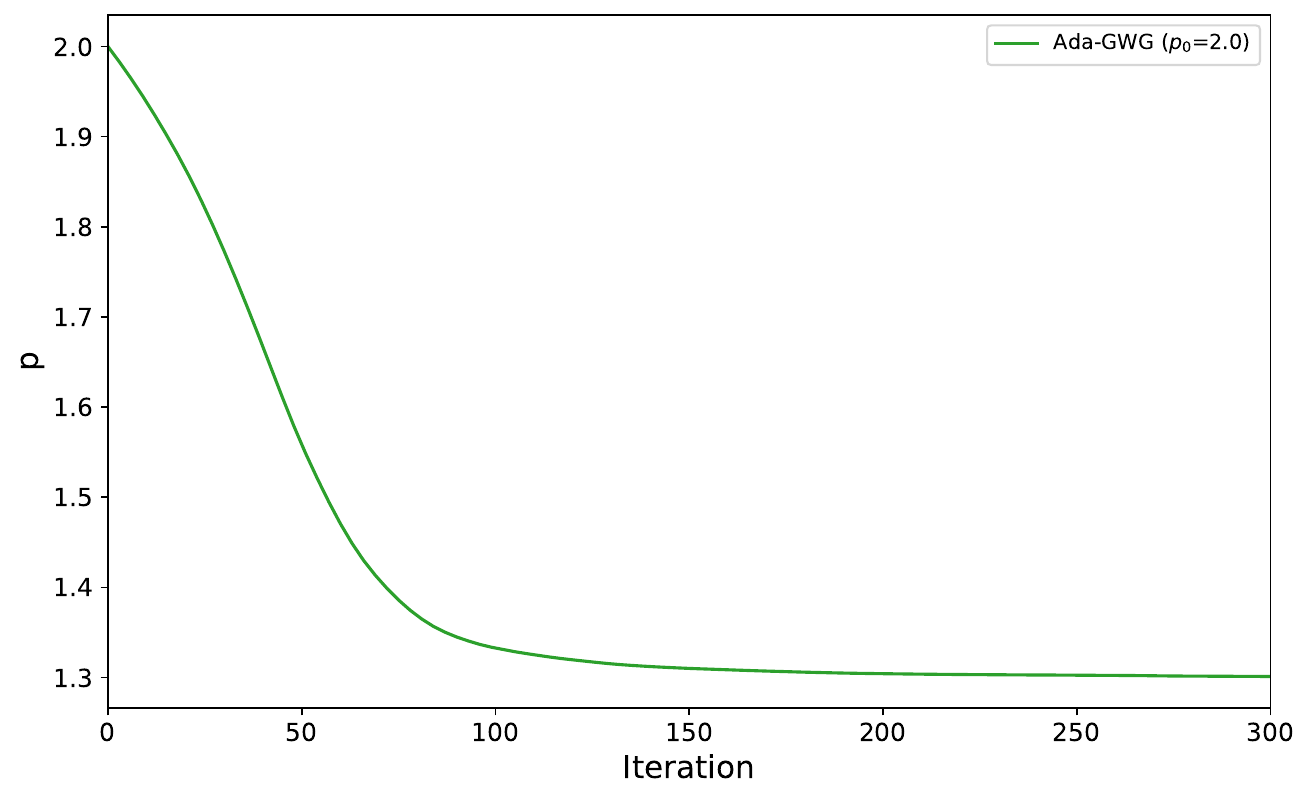}} \hspace{-3mm}
    \caption{Quantitative results in Gaussian Mixture experiment}
    \label{fig:mog_quant}
\end{figure}

\subsection{Monomial Gamma}
On heavy tailed distributions, the number of particles is 1000. For GWG and Ada-GWG, the neural network structure is the same as which in the Gaussian mixture experiment. The inner loop iteration is also 5, but we use Adam optimizer with learning rate $\eta$=1e-3 to train $f_w$ for better stability. The particle step size is 1e-3.

For Ada-GWG, we set the initial exponent $p_0\in \{1.5, 2.0, 2.2\}$ and learning rate $\Tilde{\eta}= 1$.

We run the experiment on 4 random seeds. The average results and the variances are represented in the figure using lines and shades.

\subsection{Conditioned Diffusion}
The procedure to generate the true path is exactly the same as in \citet{detommaso2018stein}. For PFG and Ada-GWG, we parameterize $f_w$ as 3-layer neural networks with $\textit{tanh}$ nonlinearities. Each hidden layer has $200$ neurons. The inner loop $N'$ is selected from $\{1, 5, 10, 15\}$ to get the best performance. $f_w$ is pre-trained for $100$ iterations before particle updates and we use Adam optimizer with learning rate $\eta$=1e-3 to train $f_w$. The particle step size is $3e-3$ for Ada-GWG and PFG. 

For Ada-GWG, we set the initial exponent $p_0=2.2$ and learning rate $\Tilde{\eta}=0.001$. The gradient of $A(p)$ is also clipped within $[-0.1, 0.1]$. 

For PFG, we set the preconditioning matrix $H={\hat{H}}^{\alpha}$, where $\hat{H}$ is the inverse of diagonal variance of particles and $\alpha$ is chosen from $\{0.1, 0.5, 1.0\}$ to obtain the best performance.

For SVGD, we use RBF kernel $\exp(-\frac{\|x-y\|^2}{h})$ where $h$ is the heuristic bandwidth \citep{Liu2016SVGD}. The initial step size is 1e-3 and is adjusted by AdaGrad.

Additionally, we run LMC with step size 1e-4 for $10000$ iterations as the ground truth for posterior distribution.

\subsection{Bayesian Neural Networks}\label{sec:detail-bnn}
Our experiment settings are almost similar to SVGD~\citep{Liu2016SVGD}. 
For the UCI datasets, the datasets are randomly partitioned into 90\% for training and 10\% for testing. 
Then, we further split the training dataset by 10\% to create a validation set for hyperparameter selection as done in~\citep{Liu2016SVGD}. 
For $L_2$-GF and Ada-GWG, we parameterize $f_w$ as 3-layer neural networks. Each hidden layer has $300$ neurons, and we use LeakyReLU as the activation function with a negative slope of 0.1. The inner loop $N'$ is selected from $\{1, 5, 10\}$. We use the Adam optimizer and choose the learning rate from $\{0.001, 0.0001\}$ to train $f_w$.

For Ada-GWG, we choose the initial exponent $p_0$ from $\{3, 4\}$ and set the learning rate $\Tilde{\eta} = 0.0001$. The gradient of $A(p)$ is clipped within [-0.2, 0.2]. We select the step size of particle updates from $\{0.0001, 0.0002, 0.0005, 0.001\}$. For SVGD, we use the RBF kernel as done in~\citep{Liu2016SVGD}.
For SVGD, $L_2$-GF, and Ada-GWG, the iteration number is chosen from $\{2000, 4000\}$ to converge. 
For SGLD, the iteration number is set to 10000 to converge.

\section{Limitations and Future Work}

\paragraph{Estimating Wasserstein gradient by neural networks.} Our formulation leverages the capability of neural networks to estimate the generalized Wasserstein gradient. This approach successfully resolves the problem of kernel design for conventional ParVI methods. However, in high dimensional regime, the design of neural network structure is still important but subtle. Besides, the computation cost is also expensive. We expect more efficient algorithms on training neural works to approximate Wasserstein gradient, \eg \citet{wang2022optimal}.

\paragraph{Better adaptive method.} Our Ada-GWG method is based on the idea of maximizing the decent rate of KL divergence and heavily relies on an accurate estimation of generalized Wasserstein gradient. We update exponent $p$ by simply gradient ascent which may cause severe numerical instability. Although this can be alleviated by clipping, it is still delicate when the target distribution is complex. 

\paragraph{General Young function class.} In this paper, we only consider the function class $\left\{\frac{1}{p}\|\cdot\|_p^p:p>1\right\}$, which is still limited. We expect a more general function class that both have numerical stability and can perfectly capture the information from score function. The characteristics of Young function class may be 
instructive. How to design an adaptive algorithm on a more general class is also challenging and important. We leave this to future research.

\paragraph{Shortness of theoretical analysis.} Although we provide convergence guarantee under weak assumptions, our analysis is still preliminary and we believe these results can be strengthened. There are also other important extensions to consider. For example, our analysis is based on the population loss, which is an asymptotic result based on infinite particles limit. We believe this framework can be also generalized to finite-particle system like SVGD \citep{korba2020non}. Moreover, we only consider Young functions that have the form of $\|\cdot\|_p^p$ or are strongly convex and strongly smooth. We believe that better-designed Young functions may have more advantageous theoretical properties, \eg, Wasserstein Newton flow \citep{Wang2020InformationNF}.

\end{document}